\newcommand{\scae}{{SemanticAE}}
\newtheorem{proposition}{Proposition}[section]
\definecolor{Gray}{gray}{0.95}
\title{Exploring Semantic-constrained Adversarial Example with Instruction Uncertainty Reduction}
\author{%
Jin Hu$^{1,2}$ \quad Jiakai Wang$^{2\textrm{\Letter}}$ \quad Linna Jing$^1$ \quad  Haolin Li$^3$ \quad Haodong Liu$^3$  \\ \textbf{Haotong Qin}$^4$ \quad  \textbf{Aishan Liu}$^1$ \quad \textbf{Ke Xu}$^{1,2}$ \quad \textbf{Xianglong Liu}$^{1,2}$\\
$^1$State Key Laboratory of Complex \& Critical Software Environment (CCSE), Beihang University \\ $^2$Zhongguancun Laboratory\quad $^3$School of Computer Science and Engineering, Beihang University\\ $^4$Department of Information Technology and Electrical Engineering, ETH Zurich \\
\texttt{hujin@buaa.edu.cn} \quad
\texttt{wangjk@mail.zgclab.edu.cn} 
}
\def\ie{\emph{i.e.}}
\begin{document}

\maketitle

\begin{abstract}

Recently, semantically constrained adversarial examples (SemanticAE), which are directly generated from natural language instructions, have become a promising avenue for future research due to their flexible attacking forms, but have not been thoroughly explored yet.
To generate SemanticAEs, current methods fall short of satisfactory attacking ability as the key underlying factors of semantic uncertainty in human instructions, such as \textit{referring diversity}, \textit{descriptive incompleteness}, and \textit{boundary ambiguity}, have not been fully investigated.
To tackle the issues, this paper develops a multi-dimensional \textbf{ins}truction \textbf{u}ncertainty \textbf{r}eduction (\textbf{InsUR}) framework to generate more satisfactory SemanticAE, \ie, transferable, adaptive, and effective.
Specifically, in the dimension of the sampling method, we propose the residual-driven attacking direction stabilization to alleviate the unstable adversarial optimization caused by the diversity of language references.
By coarsely predicting the language-guided sampling process, the optimization process will be stabilized by the designed ResAdv-DDIM sampler‌, therefore releasing the transferable and robust adversarial capability of multi-step diffusion models.
In task modeling, we propose the context-encoded attacking scenario constraint to supplement the missing knowledge from incomplete human instructions.
Guidance masking and renderer integration are proposed to regulate the constraints of 2D/3D SemanticAE, activating stronger scenario-adapted attacks.
Moreover, in the dimension of generator evaluation, we propose the semantic-abstracted attacking evaluation enhancement by clarifying the evaluation boundary based on the label taxonomy, facilitating the development of more effective SemanticAE generators.
Extensive experiments demonstrate the superiority of the transfer attack performance of InSUR.
Besides, it is worth highlighting that we realize the reference-free generation of semantically constrained 3D adversarial examples by utilizing language-guided 3D generation models for the first time.



\end{abstract}

\section{Introduction}
\vspace{-0.5em}

Adversarial example (AE), showing that small perturbations can impact the performance of deep learning models, is broadly focused due to its potential to promote model robustness and secure applications in practice.
A series of studies has uncovered several forms of AEs, including physical-world AEs~\cite{kurakin2016adversarial,liu2019perceptual,wang2021dual, wei2024physical}, transfer AEs~\cite{dong2018boosting, wang2021dual} and naturalistic AEs~\cite{hu2021naturalistic, xue2023diffusion}, as well as the applications in evaluating real-world recognition~\cite{liu2022harnessing,liu2023x}, autonomous driving~\cite{kong2020physgan,ding2022survey,ma2024slowtrack} or LLM systems~\cite{zou2023universal, yao2023llm}.

While most adversarial example research focuses on finding AEs around existing data, generating AEs from natural language instructions without referenced data has not yet been thoroughly explored, \emph{i.e.}, to find \textit{Semantic-Constrained Adversarial Examples} (\scae). Specifically, given a certain natural language description, we aim to generate the data that corresponds to its real semantic meaning but is hardly to be correctly recognized by deep learning models trained in related tasks.
Recent works have employed techniques related to naturalistic AEs to accomplish a similar objective~\cite{lin2023sd,dai2024advdiff,kuurila2025venom}, 
, but the de facto potential of \scae\ has still not been fully released in performing transferable, adaptive, and effective attacks, limiting the applicability.  
In light of the recent advancements in language-driven multimodal intelligence and the increasing demand for alignment~\cite{li2024multimodal,wang2025safevid,zhao2025jailbreaking}, it is necessary to take a step further in \scae\ generation and facilitate more versatile AE generation.

\begin{wrapfigure}{r}{0.32\linewidth}
        \vspace{-1.7em}
        \begin{center}
            \includegraphics[width=\linewidth]{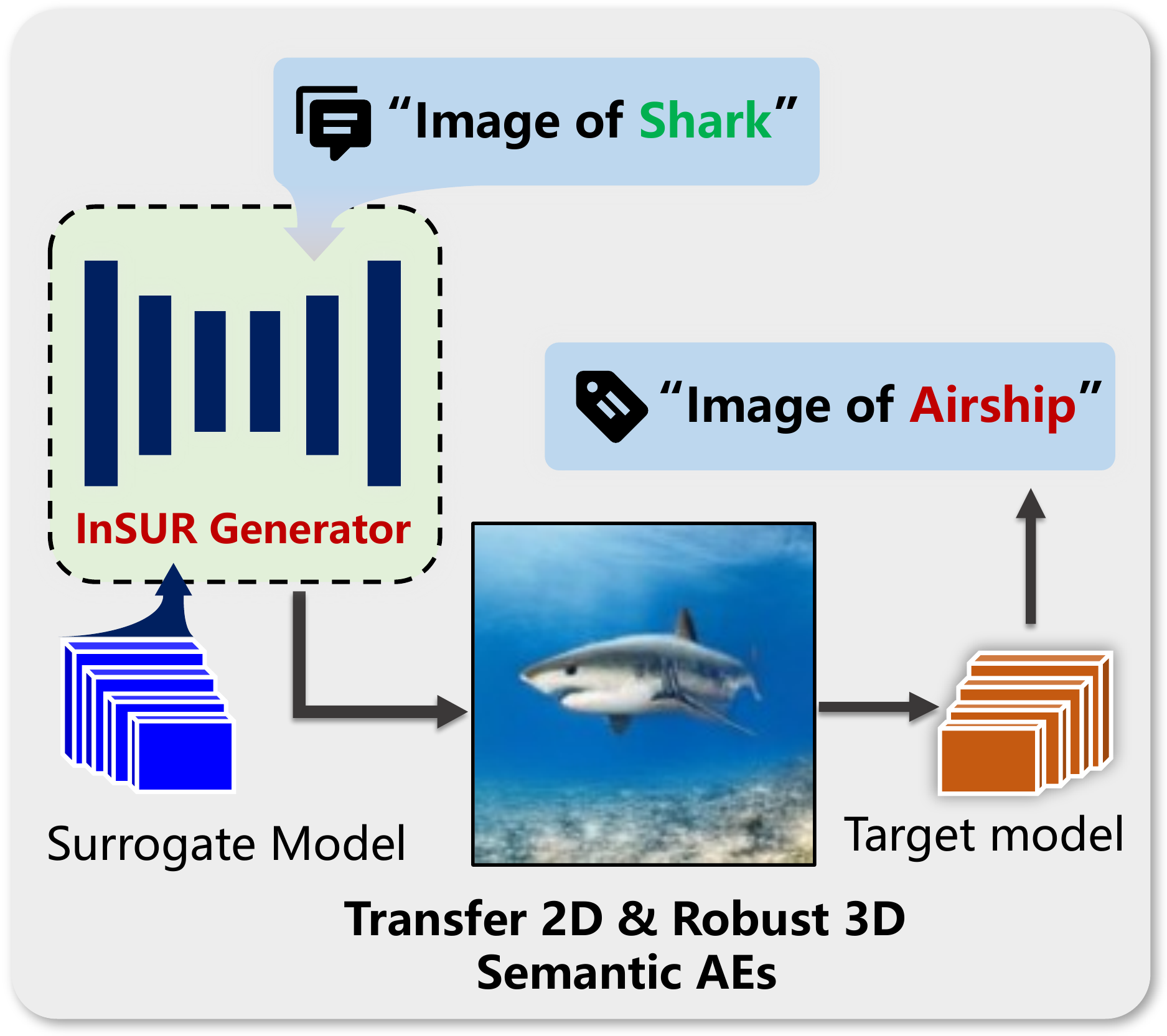}
        \end{center}
        \vspace{-0.8em}
            \caption{\scae s are generated directly by instructions.}
        \vspace{-1.1em}
            \label{fig:intro}
\end{wrapfigure}
To push the boundary of the current technology, we focus on the key underlying factor limiting the adversarial capability of \scae s: the inherent uncertainty within human instructions that defines semantic constraints.
We categorize three major forms of uncertainty in instructions:
\ding{182} \textit{Referring diversity} introduces a barrier in \scae\ optimization via the multi-step generative models, since it leads to the inconsistent language-guidance that the adversarial optimization should collaborate with.
\ding{183} \textit{Descriptive incompleteness}, which conceptualizes the gap between the precise model of the attack scenario and the instructions given by potential users, restricts the application scenarios.
\ding{184} \textit{Boundary ambiguity} of the semantic constraint is hard to characterize in task definitions, affecting the evaluation of \scae\ generators.


We propose a multidimensional \textbf{ins}truction \textbf{u}ncertainty \textbf{r}eduction (\textbf{InSUR}) framework to tackle the issues and generate more transferable, adaptive, and effective \scae.
Specifically, for referring diversity, we propose residual-driven attacking direction stabilization via the novel ResAdv-DDIM sampler that stabilizes optimization through coarsely predicting the language-guided sampling process, releasing the capability of multistep diffusion models on adversarial transferability and robustness.
For descriptive incompleteness, we propose the context-encoded attacking scenario constraint for both 2D and 3D generation problems by scenario knowledge integration, tackling the scenario adaptation problem by addressing the descriptions' incompleteness problem, achieving the first 3D \scae\ generation.
For boundary ambiguity, we propose the semantic-abstracted attacking evaluation enhancement based on label taxonomy.
Our contribution can be summarized as:
\vspace{-0.2em}
\begin{itemize}[labelindent=0pt,labelsep*=0.5em,leftmargin=1.5em,noitemsep, topsep=0pt, partopsep=0pt]
\item  We conceptualize the SemanticAE generation problem and propose a multi-dimensional instruction uncertainty reduction framework, InSUR, to address the challenges.
\item In the dimension of the sampling method, we propose the residual-driven attacking direction stabilization to achieve better adversarial optimization. In task modeling, we propose the context-encoded attacking scenario constraint to realize scenario-adapted attacks. In generator evaluation, we propose the semantic-abstracted attacking evaluation enhancement to facilitate the development of SemanticAE generators.
\item Extensive experiments demonstrate the superiority in the transfer attack performance of generated 2D \scae s, and for the first time, we realize the reference-free generation of 3D \scae\ by utilizing language-guided 3D generation models.
\end{itemize}

\vspace{-0.5em}
\section{Backgrounds}

\vspace{-0.5em}
\paragraph{Adversarial Example Generation} 
Adversarial attack generating algorithms can be categorized as iterative optimization in the data space, e.g. \textit{FGSM}~\cite{goodfellow2014explaining}, \textit{PGD}~\cite{madry2017towards}, \textit{AutoAttack}~\cite{croce2020reliable}, iterative optimization in the latent space of generative models, \textit{e.g.} \textit{NAP}~\cite{hu2021iccvnap}, \textit{AdvFlow}~\cite{mohaghegh2020advflow} and \textit{DiffPGD}~\cite{xue2023diffusion}, training a neural network for generation~\cite{xiao2018generating,liu2019perceptual, hu2024dynamicpae}, and applying zero-order optimizations~\cite{chen2017zoo,liu2023x}.
Adversarial examples may not be robust in the physical world. For physical attacks, expectation-over-transformation (EoT)~\cite{athalye2018synthesizing} and 3D simulation~\cite{2d3dattacks, lou20253d} are proposed to bridge the digital-physical gap.
Research has also focused on attacking unknown models, \textit{i.e}, transfer attacks~\cite{dong2018boosting,wang2024triplet}.
An extended technical background and related works are provided in Appendix~\ref{sec:tech_background}.

\vspace{-0.5em}
\paragraph{Semantic-constrained Adversarial Example} 
\cite{song2018constructing} first proposes the \textit{Unrestricted Adversarial Example}(UAE), of which the restriction is defined by the human's cognition instead of $l_p$-norm on existing data, and proposes a generative learning method for its generation.
A line of studies further develops optimization techniques in latent spaces~\cite{hendrycks2021natural,hu2021iccvnap,chen2023advdiffuser}, and terms it as \textit{Natural Adversarial Example} (NAE), while another line of study focuses on constructing the perceptual constraints for UAEs. A difference between them is that NAE studies also focus on generating diverse-distributed adversarial examples without referencing a static image.
We formulate the adversarial example generating task constrained by natural language's semantics as the \textit{semantically-constrained adversarial example generation} problem.
Recent works (\cite{xue2023diffusion,chen2023advdiffuser,chen2024diffusion,dai2024advdiff,kuurila2025venom}) focus on integrating the pre-trained diffusion model and iterative optimization to constrain the naturalness and improve transferability.
Furthermore, generating 3D adversarial examples that are more aligned with the physical world and satisfy the semantic constraints is still an open problem.


\vspace{-0.5em}
\section{Methodology}

\begin{figure}
    \centering
    \includegraphics[width=1.0\linewidth]{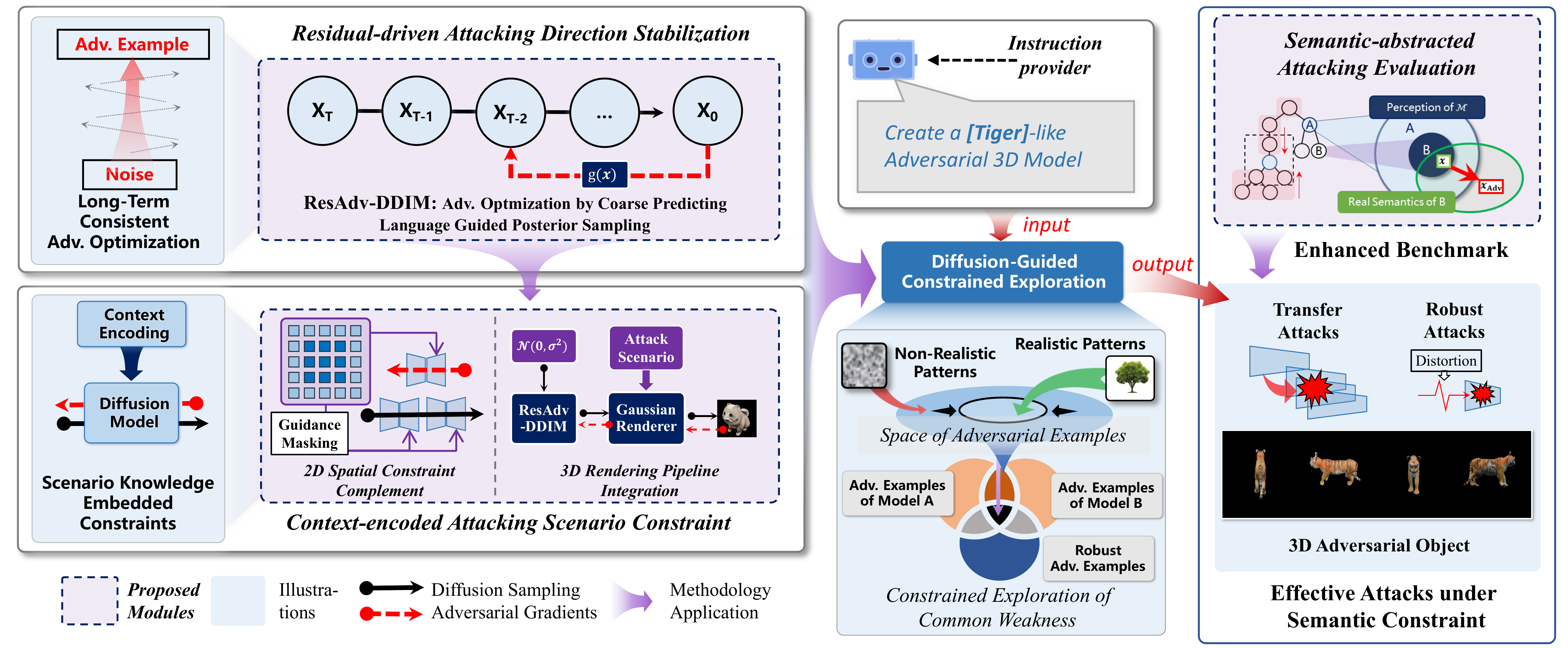}
    \vspace{-1.2em}
    \caption{Overview of multi-dimensional instruction uncertainty reduction (\textbf{InSUR}) framework.}
    \vspace{-1em}
    \label{fig:method_overview}
\end{figure}

\vspace{-0.5em}
\subsection{Problem Formulation and Analysis}

\vspace{-0.5em}
\paragraph{Semantic-Constrained Adversarial Example (\scae) Generation Problem}
We define \scae\ generation problem as generating an adversarial example $x_\mathrm{adv}$ that fools the target model and satisfies the semantics constraint defined by the user's instruction $\mathrm{Text}$.
Formally, we formulate the \scae\ generation problem as follows:
\begin{equation}
    \mathit{find} \  x_\mathrm{adv} \in \mathcal{S}(\mathrm{Text})  \ \mathrm{s.t.} \ \mathcal{M}(x_\mathrm{adv})\in A_{\mathrm{Text}}, 
\end{equation}
where $\mathcal{S}(\mathrm{Text})$ is the set of data with semantic meaning corresponding to $\mathrm{Text}$, $\mathcal{M}$ represents the target model, and $A_{\mathrm{Text}}$ defines the types of target model's output that are conceptually different from $\mathrm{Text}$, representing a successful attack.
In the strict black-box setting, which is the focus of this paper, both $\mathcal{S}$ and $\mathcal{M}$ are unknown to the generation algorithm.

The goals of \scae\ generation are \textit{to build a red-team model $\mathcal{G}$ that automatically finds the alignment problem between the intelligent model $\mathcal{M}$ and the implicit semantics $\mathcal{S}(\mathrm{Text})$ reflecting the social consensus or the physical world}. 
This goal leads to the following constraints: firstly, to achieve automatic alignment with limited supervision, the instruction $\mathrm{Text} $ is not required to characterize semantic constraints precisely.
Secondly, from the perspective of data value, the generated \scae\ $x_\mathrm{adv}$ should be able to perform transfer attacks.


\vspace{-0.5em}
\paragraph{Challenges in \scae\ Generation}
As shown in the middle card of Figure~\ref{fig:method_overview}, generative or diffusion models can constrain the pattern of generated AEs and facilitate transfer attacks with better in-manifold constraint~\cite{chen2023theory}.
We take a step further in \scae, focusing on the inherent challenge related to instruction uncertainty: \ding{182} Reference diversity challenges adversarial optimization.
The language guidance that is learned from the mapping between $\mathrm{Text}$ and $\mathcal{S}(\mathrm{Text})$ is non-linear since $\mathcal{S}(\mathrm{Text})$ is diverse.
This makes collaborating with adversarial optimization and the diffusion model for better transfer attacks and robust attacks a non-trivial problem.
\ding{183} Descriptive incompleteness requires scenario-knowledge integration for scenario-adapted generation. The challenges are identifying the missing contexts in pretrained models‌ and ‌establishing practical knowledge embedding methodologies, thereby further eliminating the reference diversity from the external perspective.
\ding{184} Boundary ambiguity makes defining $\mathcal{S}$ and $A$ for evaluating the generator also challenging, which lies in the fact that inappropriate evaluation leads to inaccurate results.

\vspace{-0.5em}
\paragraph{Multi-dimensional Instruction Uncertainty Reduction (InSUR) Framework} As shown in Figure~\ref{fig:method_overview}, for the reference diversity problem, we propose the residual-driven attacking direction stabilization with the designed ResAdv-DDIM sampler.
For the contextual incompleteness, we propose the context-encoded attacking scenario constraint methods for scenario-knowledge integration in representative 2D and 3D \scae\ generation tasks.
Moreover, since the unclear semantic boundary makes evaluating the generator difficult, semantic-abstracted attacking evaluation enhancement is proposed to facilitate further developments of \scae\ generation.

\vspace{-0.5em}
\subsection{Residual-driven Attacking Direction Stabilization with ResAdv-DDIM} 
\label{sec:resadvddim}


\vspace{-0.5em}
\paragraph{Semantic-constrained Optimization Problem} Referring to the generative-model-based adversarial examples, we solve \scae\ generation by maximizing the loss $\mathcal{L}_\mathrm{ATK}$ under the constraint of the posterior sampling process defined by the natural language guidance $\mathrm{Text}$. However, if the posterior sampling process is more complex, \emph{e.g.}, multi-step diffusion de-noising, tackling this maximization problem is challenging.
Recent work utilizes a deterministic sampling process, \emph{e.g.}, DDIM~\cite{song2020denoising}, as a constraint defined by $\mathrm{Text}$~\cite{xue2023diffusion,lin2023sd}. For simplicity, we denote the sampling step as $f_{\theta,\Delta T}(x_t) \sim q_\theta(x_{t - \Delta T} | x_t, x_0, \mathrm{Text})$, and such optimization can be formulated as:
\vspace{-0.2em}
\begin{equation}
    \max \mathcal L_\mathrm{ATK}( \mathcal M(\underbrace{f_{\theta, \Delta T} \circ f_{\theta, \Delta T}  \circ \cdots \circ f_{\theta, \Delta T} }_{T / \Delta T  \text{ times}} (x_T) )),
    \label{eq:diffpgd}
\end{equation}
where $\circ$ denotes function composition.
However, this may trigger the robust problem of $f$, since it is hard to determine whether $x_0$ is an adversarial example of $f$ or $\mathcal{M}$. This results in the instruction misalignment of SD-NAE shown in section~\ref{sec:vis}.
Also, this optimization is computationally expensive.
Another solution is to tackle the challenges by approximating the gradient~\cite{xue2023diffusion} or directly altering the sampling process~\cite{dai2024advdiff}, which can be re-formulated as:
\vspace{-0.2em}
\begin{equation}
   x_{t-\Delta T} =  f_{\theta,\Delta T} (\arg\max_{x'_t} \mathcal L_\mathrm{ATK}(\mathcal M( x'_t))), \ \forall t \in [\Delta T, t_s),
\end{equation}
where $t_s$ is a selected intermediate step, and the $\max_{x_t}$ optimization could be a single iteration. An advantage is that, since the maximization algorithm does not retrieve the information of $f$, or $f$ has been protected from adversarial attacks, it alleviates the robustness problem of $f$.
However, the optimization used $\nabla_{x_0} \mathcal L_\mathrm{ATK}$ to approximate $\nabla_{x_t'} \mathcal L_\mathrm{ATK}$, and the inconsistency introduces noise to the optimization.
As shown in Figure~\ref{fig:Guidance_Variance}, the optimization direction may vary, or be non-linear, with respect to different $x_t$.
This misalignment makes the adversarial pattern optimized in the initial denoising stage ineffective in the latter stage, thereby limiting the multi-step regularization opportunity of diffusion models for better transfer attacks.

Overall, this technical challenge originates from the conflict between (1) the accurate estimation of $\nabla_{x_t}\mathcal L_\mathrm{ATK}$, causing the robust problems of the language guidance defined by $f$ and the computational problems, and (2) the approximated estimation of $\nabla_{x_t}\mathcal L_\mathrm{ATK}$, causing the non-optimality of attack optimization. We solve the problem by improving the approximation with the novel \textit{ResAdv-DDIM} posterior sampler, enabling the discovery of more robust adversarial patterns with better regularization.

\begin{minipage}[t]{0.2\textwidth} 
\centering 
\includegraphics[width=\linewidth]{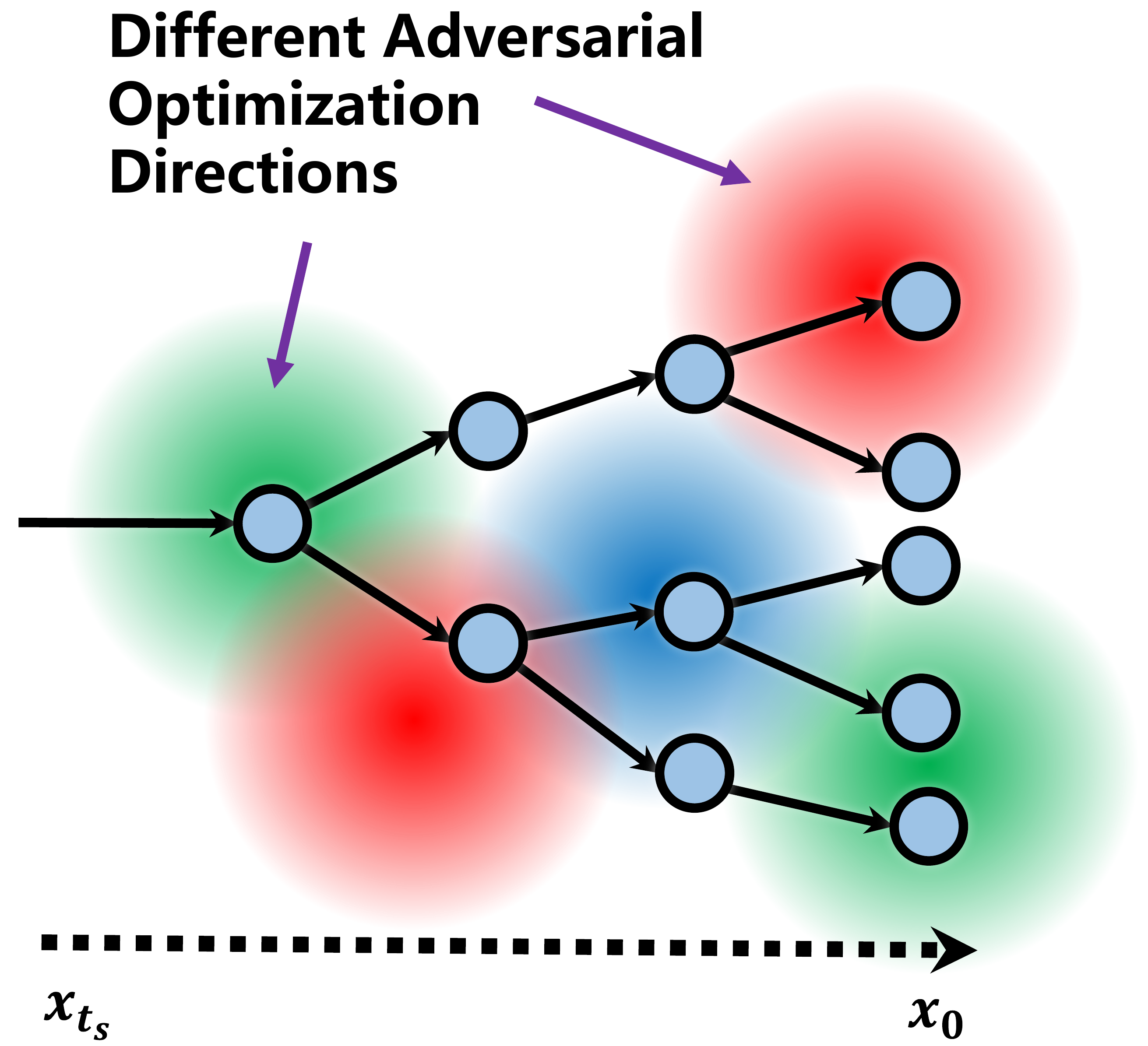} 
\captionsetup{justification=raggedright,singlelinecheck=false}
\captionof{figure}{Inconsistent adv. direction problem.} 
\label{fig:Guidance_Variance}
\end{minipage}
\hfill 
\begin{minipage}[t]{0.76\textwidth} 
\centering 
\includegraphics[width=\linewidth]{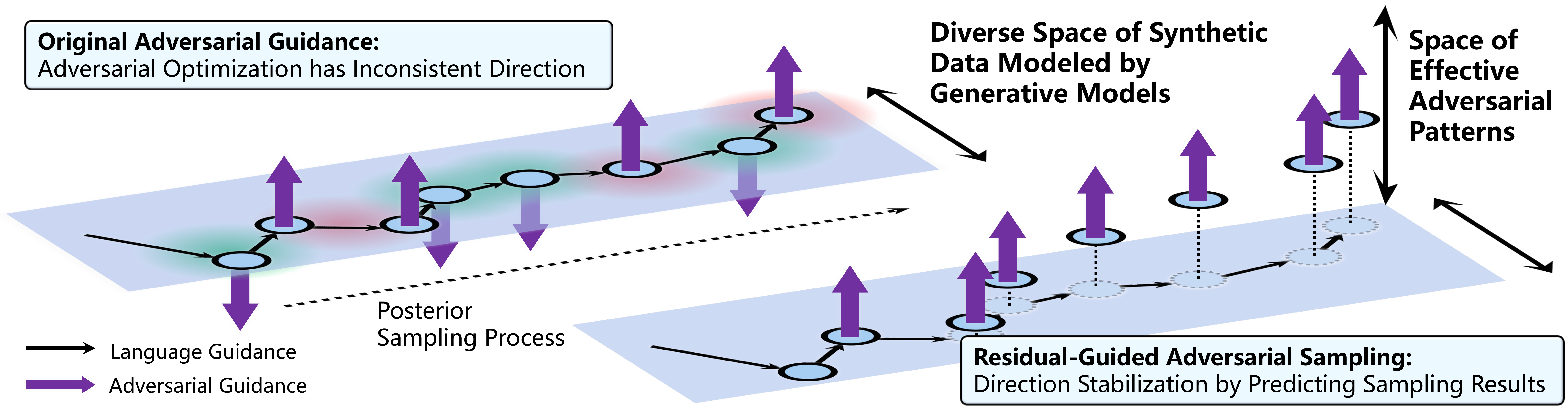} 
\captionof{figure}{Residual-driven attacking direction stabilization. \textit{ResAdv-DDIM} is designed for the efficient and thorough exploration of new adversarial patterns constrained by multi-step sampling processes.} 
\label{fig:rdads}
\end{minipage}
\vspace{-0.2em}
\paragraph{Residual-Guided Adversarial DDIM Sampler} 
Inspired by \textit{Learning to Optimize}~\cite{chen2022learning}, we handle the challenge by \textbf{predicting a coarse sketch of the future-step denoising result $\boldsymbol{x_0}$} for estimating the attack optimization direction with $\mathcal{L}_\mathrm{ATK}$. Our key insight is that since multi-modal models acquire general capabilities through training in the task of \textit{predicting diverse human responses} and \textit{complementing missing information across diverse data}, we should fully leverage the model's intrinsic multi-granularity predictive capabilities to achieve stable generation under semantic uncertainty.

Specifically, we leverage DDIM's multi-step posterior sampling capabilities to achieve a coarse prediction of $x_0$ from current $x_t$, \textit{i.e.}, $g_\theta(x_t)$, which allows for a more accurate estimate of $\nabla_{x_t} \mathcal L_\mathrm{ATK}$ compared to directly using $\nabla_{x_0} \mathcal L_\mathrm{ATK}$.
We formulate the generation process as:
\vspace{-0.5em}
\begin{equation}
\begin{aligned}
        g_\theta(x_t) &=  \underbrace{f_{\theta, \Delta T_1} \circ f_{\theta, \Delta T_2}  \circ \cdots \circ f_{\theta, \Delta T_k}}_{k \text{ times, } k \ll T / \Delta T}(x_t), \text{ where } \sum_{i=1}^k \Delta T_i = t\\
    x_{t-\Delta T}  &=  f_{\theta,\Delta T} (\arg \max_{x_t} \mathcal L_\mathrm{ATK}(\mathcal M( g_\theta(x_t)))), \ \forall t \in [\Delta T, t_s).
\end{aligned}
\label{eq:resadvddim}
\end{equation}
The notation is the same as Eq~\ref{eq:diffpgd}.
$g_\theta(x_t)$ is the coarse estimation of $x_0$, and $k$ is a small number of iterations that could be selected from $\{1, 2, 3, 4\}.$
Since the sampling process takes a residual shortcut to $x_0$, we name it as \textit{Residual-Guided Adversarial DDIM Sampler} (\textbf{ResAdv-DDIM}).



To further establish the concrete adversarial attack algorithm, we further propose the following method.
\textbf{(1)} \textbf{Constraining the Semantics}. To ensure the generated sample satisfies $x_\mathrm{adv} \in \mathcal{S}(\mathrm{Text})$, we constrain the discrepancy of the sample trajectory with the $l_2$-norm between DDIM-generated samples and the adversarially optimized samples after determining $x_{t_s}$:
\begin{equation}
||\mathrm{Denoise}_\text{DDIM}(x_{t_s - \Delta T} ) - \mathrm{Denoise}_\text{Adv}(x_{t_s - \Delta T})||_2 < \epsilon.
\label{eq:sc}
\end{equation}
Such constrained optimization problem could be performed by simultaneously sampling with $\mathrm{Denoise}_\text{DDIM}$ and $\mathrm{Denoise}_\text{Adv}$, and clipping $x_t'$ in each step.
\textbf{(2)} \textbf{Adaptive Attack Optimization.} To solve the maximization in Eq.~\ref{eq:resadvddim}, we introduce an early-stop mechanism that terminates optimization:
\begin{equation*}
\begin{aligned}
&\text{(1) after the first iteration, and the estimated probability of unsuccessful attack} < \xi_1, \textit{or}, \\
&\text{(2) at the first iteration, when this probability satisfies a stricter threshold } \xi_2<\xi_1.
\end{aligned}
\end{equation*}
The probabilities are estimated from $\mathcal{M}(g_\theta(x_t))$. These conditions reduce the expected number and lower bound of optimization steps, while maintaining attack performance alongside the denoising step, which degrades the adversarial capability. Lower threshold could result in better attack performance and slower optimization, and we set $\xi_1 = 0.1$, $\xi_2 = 0.01$ as a feasible setting across all experiments. In addition, we integrate momentum optimization, introduced in~\cite{kuurila2025venom,dong2018boosting}, to improve the attack transferability.
Detailed implementation and analysis are shown in Appendix~\ref{sec:detailed_impl_2d}.




\vspace{-0.5em}
\subsection{Context-encoded Attacking Scenario Constraint for 2D and 3D Generation} 

\label{sec:complement}

\vspace{-0.5em}
In the application scenarios, the instruction $\mathrm{Text}$ might be ambiguous or incomplete, which requires integrating learned guidance with external knowledge. 
For effective task adaptation, we provide knowledge embedding strategies on the key data structures that collaborate with the ResAdv-DDIM sampler, achieving better 2D \scae\ generation and realizing 3D \scae\ generation.


\vspace{-0.5em}
\paragraph{Spatial Constraint Complement for 2D \scae\ Generation}
We focus on the problem of the incompleteness of the spatial constraint given by the instruction $\mathrm{Text}$ representing the object label.
Specifically, an effective \scae\ generator shall leverage the optimization space of the image backgrounds and generate patterns that amplify the attack's effectiveness.
However, the diffusion model's conditional background generation is overly uniform because the attack functionalities were not considered during the original training.
To solve the problem, we encode the context of the attack application through the fine-grained control of the denoising guidance.
We leverage the application of guidance-masking from edit area control of single guidance~\cite{couairondiffedit} to the re-distribution of multiple guidance, and embed the masking into the key guidance function $f_{\theta}$ applied in both posterior sampling and adversarial optimization process in ResAdv-DDIM (as shown in Figure~\ref{fig:method_overview}).
Together with the deterministic DDIM, $f_\theta$ is formulated as:
\begin{equation}
\begin{aligned}
    f_{\theta, \Delta T}(x_t) &= \sqrt{{\bar{\alpha}_{t - \Delta T}} / {\bar{\alpha}_t}} \left( x_t - \sqrt{1 - \bar{\alpha}_t} \epsilon_\theta(x_t, t) \right) + \sqrt{1 - \bar{\alpha}_{t - \Delta T} } \cdot \epsilon_\theta(x_t, t),\\
    \epsilon_{\theta}(x_t, t) &= (1 - M)\cdot  \epsilon_{\theta,  \mathrm{Unconditional}} (x_t, t) + M  \cdot \epsilon_{\theta,  \mathrm{Conditional}} (x_t, t, \mathrm{Text}),
\end{aligned}
~\label{eq:guidancemask}
\end{equation}
where $\alpha$ defines the noise ratio in the diffusion model, $\epsilon_{\theta}$ is the noise estimating network, and $M$ is the guidance masking that regularizes the spatial distribution of semantic guidance $\mathrm{Text}$.
Detailed definition of $M$ is in Appendix~\ref{sec:detailed_impl_2d}, and the integration is illustrated in Figure~\ref{fig:method_overview}.

\vspace{-0.5em}
\paragraph{Differentiable Rendering Pipeline Integration for 3D \scae\ Generation} 3D Data is valuable for world modeling~\cite{mildenhall2021nerf, kerbl20233d}.
We focus on the problem of generating 3D \scae\ $x_{adv}^{(\text{3D})}$ for target models $\mathcal{M}$ operated under the 2D inputs.
To fill the gap between the 3D scenarios 2D target models, additional physical rendering knowledge shall be efficiently encoded. 
Leveraging the proposed ResAdv-DDIM sampler and the advancements in Gaussian-splatting~\cite{kerbl20233d}, we fill the gap between the 3D generation and 2D target models.
As shown in Figure~\ref{fig:3dpipeline}, we optimize latents with the gradient back-propagated through the 3D data structure and integrate the scenario knowledge during the differentiable rendering process.
\begin{figure}[t]
    \centering
    \vspace{-1em}
    \includegraphics[width=1\linewidth]{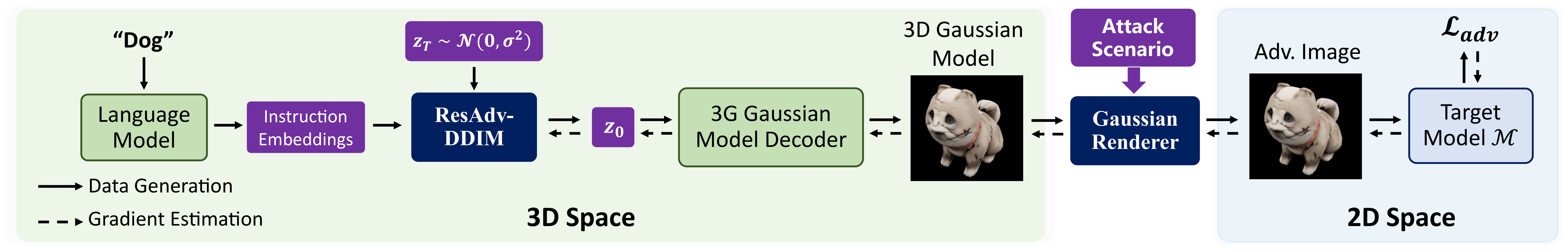}
    \caption{3D optimization pipeline. Scenario knowledge is encoded with the Gaussian renderer.}
    \vspace{-1em}
    \label{fig:3dpipeline}
\end{figure}
%
The optimization pipeline could be implemented concisely with the \textit{Trellis}~\cite{xiang2024structured} framework, which is a recently published diffusion-based 3D access generation framework.
For simplicity, we reformulate the \textit{Trellis} sampling and rendering process as:
\begin{equation}
    \begin{aligned}
{\mathbf{pos} = \mathrm{Coords}(\mathcal{D}_{slat}(\boldsymbol{z_0^{slat}})), \  \mathrm{Model}_\mathrm{GS}} &= \mathcal{D}_{\mathrm{GS}}(\boldsymbol{z}_0, \mathbf{pos}) , \   x= \mathrm{Renderer}_\mathrm{GS}({\mathrm{Model}_\mathrm{GS}},\mathrm{Camera}), \\
\mathcal{D}_{\mathrm{GS}}: \{({z}^i, {pos}^i)\}_{i = 1}^{L} & \to \{ \{ ({x}_i^k, {c}_i^k, {s}_i^k, \alpha_i^k, {r}_i^k) \}_{k = 1}^{K} \}_{i = 1}^{L}, \\
    \end{aligned}
\end{equation}
where $\boldsymbol{z}_0$ and $\boldsymbol{z_0^{slat}}$ are latents sampled by the diffusion model, and are represented by sparse and dense tensors, respectively.  $\mathcal{D}_{slat}$ is the coarse structure decoder, $\mathrm{Coords}$ transforms the voxel to point positions $\mathbf{pos}$, $\mathcal{D}_{\mathrm{GS}}$ is the refined structure decoder that decodes each vertex into multiple Gaussian points and $\mathrm{Renderer}_\mathrm{GS}$ renders the Gaussian model to 2D images $x$ with the camera parameter.
For \scae\ generation, the refined feature generation process $\{z_T , z_{T -\Delta T} , ..., z_0\}$ is replaced with ResAdv-DDIM sampling in Eq.~\ref{eq:resadvddim} with the rendering model embedded in $g_\theta$:
\begin{equation}
    \begin{aligned}
    g_\theta(z_t,\mathbf{pos},  \mathrm{Camera}) &:= \mathrm{Renderer}_\mathrm{GS}( \mathcal{D}_{\mathrm{GS}}(f_{\theta, \Delta T_1}  \circ \cdots \circ f_{\theta, \Delta T_k}(z_t, \mathbf{pos}), \mathbf{pos}),\mathrm{Camera}) \\
   z_{t-\Delta T} &:=  f_{\theta,\Delta T} (\arg \max_{z_t} \mathbb E_{\mathrm{Camera} \sim  P_{\mathrm{Cam}}}[ \mathcal L_\mathrm{ATK}(\mathcal M( g_\theta(z_t, \mathrm{Camera}, \mathbf{pos})))])
\end{aligned}
\end{equation}
We use the EoT method with gradient accumulation to optimize $z_t$ for unknown camera positioning, \textit{i.e.}, samples $\mathrm{Camera}$ from $P_\mathrm{Cam}$ in each iteration.
The scenario knowledge is encoded in $P_\mathrm{Cam}$ and the rendering background.
Due to the stabilized guidance in ResAdv-DDIM, the gradient of the previous steps could be utilized as current-step gradient estimation, and therefore, fewer EoT steps are required.
Since texture and localized positioning perturbation are enough for adversarial attacks, $z_0^{slat}$ is not included in the parameter space and serves as a semantic anchor.
With the collaborative constraint of 3D diffusion and renderer model, semantically-constrained and multi-view adapted \scae s are efficiently generated.
Detailed implementation is shown in Appendix~\ref{sec:detailed_impl_3d}.

\vspace{-0.5em}

\subsection{Semantic-abstracted Attacking Evaluation Enhancement with Label Taxonomy}
\label{sec:eval}

The evaluation of \scae\ generator requires the benchmark to judge whether $x \in \mathcal{S}(\mathrm{Text})$ and define ${A}_{\mathrm{Text}}$, which determines the adversarial attack and semantic alignment performance of the generator, and is still a blank in practice.
To address the issue, we provide a task construction method for automatic evaluation based on the application goal of the \scae\ generation task.
Note that our method evaluates the \scae\ generator instead of the adversarial example.

\vspace{-0.8em}
\begin{figure}[h]
    \centering
    \includegraphics[width=0.9\linewidth]{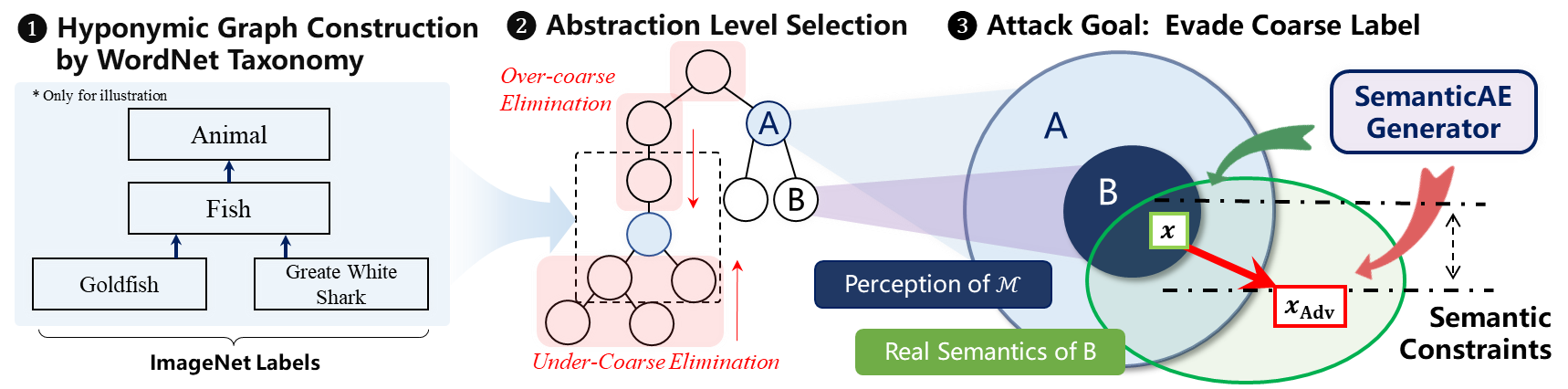}
    \vspace{-0.3em}
    \caption{Construction of abstract label evasion evaluation task.}
    \label{fig:coarse_label_construction}
    \vspace{-0.8em}
\end{figure}
The task is constructed based on semantic abstracting with the label taxonomy.
Firstly, the attack targets of existing non-target evaluation methods based on ImageNet labels are often too simple, while the constraint space of \scae\ is relatively loose, making it easy for the attack generation model $\mathcal G$ to achieve successful attacks easily. For example, it is unreasonable to use the ImageNet label “tiger-shark” as the misclassification category $A_\mathrm{Text}$ for the instruction $\mathrm{Text}$ “great-white-shark”, since achieving successful attacks in this task may not show the capability of successful attacks in real scenarios.
To clarify the boundary, we re-construct the evaluation label with a better abstraction level by leveraging \textit{WordNet}~\cite{miller1995wordnet} taxonomy.
As shown in Figure~\ref{fig:coarse_label_construction}, we firstly construct the hyponymic graph based on the hyponymic relation defined by \textit{WordNet}, then select the proper abstraction level, and finally define the attack goal as the evasion attack on the abstracted label.
Specifically, under the definition of \scae\ generation, the evaluation task is formulated as:
\vspace{-0.1em}
\begin{equation}
    \begin{aligned}
        \mathrm{Text} &:= \text{"Realistic image of } \mathrm{[AbstractedLabel]} \text{, specifically,  } \mathrm{[ label]} \text{"},  \\
        A_\mathrm{Text} &:=  \{\mathrm{label}_\mathrm{Adv} \mid  \mathrm{AbstractedLabel} \not \in \mathbf{Ancestors} (\mathrm{label}_\mathrm{Adv})  \},\\
    \mathrm{AbstractedLabel}  &\in  \{c \in \mathbb L'\mid   \exists l  \in \mathbb L \text{ s.t. }  c \in \mathbf{Ancestors}(l) \land \mathbf{Count}_{\mathbf{Children}}(c) > 0 \}, 
    \label{eq:coarse_evaluation}
    \end{aligned}
\end{equation}
where $\mathbb L$ is the transitive closure of \textit{ImageNet} labels on the hyponymic graph, the construction of $ \mathrm{AbstractedLabel}$ is equivalent to: (1) Remove overly coarse-grained labels through annotation to obtain the label subset $ \mathbb L'$. (2) For each linear path, select the node with the lowest height as a candidate label, constraining the upper bound of the abstracting level. (3) 
Eliminate descendant labels of the candidate labels to constrain the lower bound of the abstracting level.

Secondly, from the perspective of semantic constraint evaluation, using another deep-learning model for evaluation, \emph{e.g.}, \textit{CLIP}, will limit the benchmark to the robust region of such models.
Drawing upon previous discussions and attempts in evaluation enhancement~\cite{kuurila2025venom}, we further conceptualize the sub-task of non-adversarial exemplar generation.
As shown in Figure~\ref{fig:coarse_label_construction}, the adversarial generator $\mathcal{G}$ is required to simultaneously generate a nearby sample $x_\mathrm{exemplar} \in \mathcal{X}_\mathrm{exemplar}$ as a proof that $x_\mathrm{adv}$ complies with semantic constraints. We further propose the evaluation method based on attack success rate and pair-wise semantic metric as a complement to the single-image assessments:
\vspace{-0.1em}
\begin{equation}
\begin{aligned}
        ASR_\mathrm{Relative} &= \frac{\sum_{i=1}^{K}\text{Attack Success} (x_{\mathrm{adv}}^{(i)})\land \text{Classification Correct} (x_{\mathrm{exemplar}}^{(i)})}{K \cdot \text{Accuracy} (\mathcal X_{\mathrm{exemplar}})} \in [0, 1], \\ \text{SemanticDiff} _\mathcal{S} &= \langle x_{\mathrm{exemplar}},  x_{\mathrm{adv}} \rangle_\mathcal{S} , 
\label{eq:metrics_relative}
\end{aligned}
 \end{equation}
where $K$ is the amount of samples, $\mathcal{S}$ is a visual similarity metric, such as \textit{LPIPS} or \textit{MS-SSIM}.
Measuring local similarity is easier since high-level feature extraction, which could be attacked, is less required.
By assuming the generator $\mathcal{G}$ is not motivated towards finding a \textit{positive adversarial example}, achieving a high score on both metrics can sufficiently show both adversarial capability and instruction compliance of  $\mathcal{G}$.
Notably,  $ASR_\mathrm{Relative}$ evaluation metric imposes \textbf{a more rigorous} assessment for the masked language guidance in Section~\ref{sec:complement}, as it eliminates the confounding variations in benign example generation methods through regularizing with \text{Classification Correct}$(x_{\mathrm{exemplar}})$.

\vspace{-0.5em}
\section{Experiments}

\vspace{-0.5em}

\subsection{Experiment Settings}

\vspace{-0.5em}
\paragraph{Tasks and Baselines} We evaluate different generation methods by generating $6$ samples for each label in the ImageNet 1000-class label evasion task and the proposed abstracted label evasion task.
The baseline method is constructed by combining \textbf{diverse} \ding{182} Surrogate models, \ding{183} Transfer attack methods (MI-FGSM~\cite{dong2018boosting}, DeCoWA~\cite{lin2024boosting}), and \ding{184} Diffusion-based naturalistic AE generation methods (AdvDiff~\cite{dai2024advdiff}, SD-NAE~\cite{lin2023sd}, VENOM~\cite{kuurila2025venom}).
Note that DeCoWa develops on top of MI-FGSM.
For fairness, we incorporate the same pretrained diffusion model as SD-NAE and VENOM.
For 3D \scae\, we evaluate the classification models' performance on the generated video showing the rotating object.
Detailed implementation and settings are shown in Appendix~\ref{app:exp_settings}.



\vspace{-0.5em}
\paragraph{Evaluation Metrics} Referenced image quality assessment metrics, including \textit{LPIPS}~\cite{zhang2018unreasonable} and \textit{MSSSIM}~\cite{wang2003multiscale}, are been employed to measure the similarity between $x_\mathrm{exemplar}$ and $x_\mathrm{adv}$ under the proposed non-adversarial exemplar evaluation. Also, we supplement the non-reference image quality assessment \textit{CLIP-IQA}~\cite{wang2023exploring} for the generated images.
We do not use \textit{FID} and \textit{IS} as a primary metric since their adapted vision backbones are simple and might be adversarially attacked, and keeping the feature distribution consistent is not the primary goal of the \scae\ generation. 
For attack evaluation, we computed the classification accuracy and $ASR_\mathrm{Relative}$, defined in Eq.~\ref{eq:metrics_relative}, across diverse targets set $\mathcal{T}=\{$\textit{ResNet50}~\cite{he2016deep}, \textit{ViT-B/16}~\cite{dosovitskiy2020image}, \textit{ConvNeXt-T}~\cite{liu2022convnet}, \textit{ResNet152}, \textit{InceptionV3}~\cite{szegedy2016rethinking}, \textit{Swin-Transformer-B}~\cite{liu2021swin}$\}$.
The first three models are used individually as surrogates in experiments. The main paper presents the average $ASR_\mathrm{relative}$ and accuracy (ACC) on the target model, while the detailed transfer attack performance on different target models is shown in Appendix~\ref{sec:trans_detail}.
In addition, we fuse the input-transformation-based module \textit{DeCoWa} with ResNet50 as one of the surrogate models to evaluate the collaboration capability of the generation algorithms. 2D / 3D generation times are benchmarked on a single 4090 or A800 GPU, respectively, by generating 100 samples with abstracted labels, and are presented in the results with standard deviation.

\vspace{-0.5em}
\subsection{Overall Performance Evaluation}
\vspace{-0.5em}
\paragraph{2D \scae\ Generation}
The overall results on 2D \scae\ generation is shown in Figure~\ref{fig:imgnet_results} and ~\ref{fig:coarse_results} with the strength of semantic constraint of our method set in $\epsilon=\{1.5, 2, 2.5, 3\}$ and $\epsilon=\{ 2, 2.5, 3, 4\}$ via Eq.~\ref{eq:sc}, respectively.
Multiple $\epsilon$ are applied since it is difficult to control and align the distortion strength for baselines.
The min/max ASR across target models and the standard deviation of LPIPS across generated images are plotted as bars.
Table~\ref{tab:main_results} shows the results of our method set as $\epsilon=2.5$. More results are in the appendix.
Overall, in \textbf{any} of the $4$ surrogate and $2$ task settings, \textbf{InSUR} is able to achieve \textbf{at least} \textbf{1.19}$\times$ average ASR and  \textbf{1.08}$\times$ minimal ASR across \textbf{all target models in} $\mathcal{T}$, and maintains with lower LPIPS (unsuccessful baseline generation with avg. ASR < 5\% are not considered), showing the \textbf{consistent superiority}.
The Pareto improvement shown by the figure is more significant.
Moreover, \ding{182} $\epsilon$-based semantic constraint in Eq.~\ref{eq:sc} achieves more consistent LPIPS across images generated in identical settings.
\ding{183} For $Clip_Q$, our method performs better in the challenging abstracted-label evasion task, while SD-NAE is higher in original tasks.




\begin{figure}[t]
\begin{center}
\vspace{-1em}
    \begin{minipage}[t]{0.497\textwidth}
        \includegraphics[width=\linewidth]{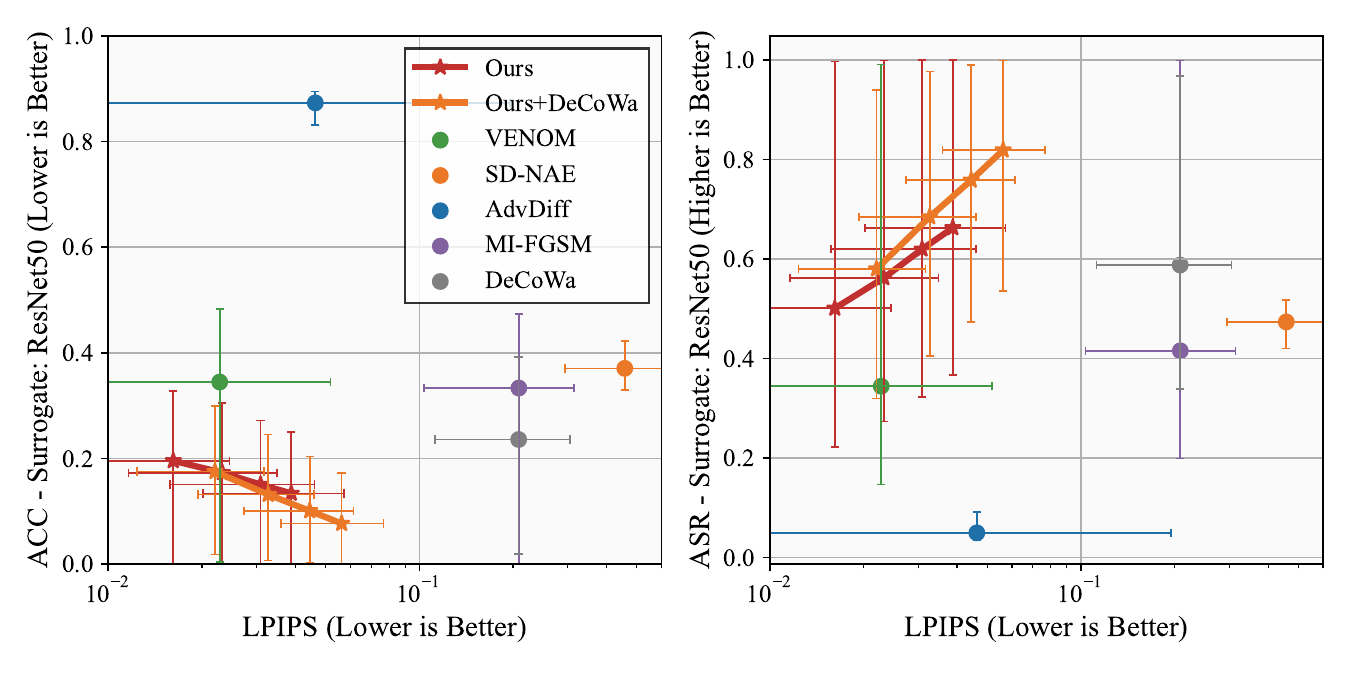} 
\vspace{-2em}
\captionof{figure}{ImageNet label results.} 
\label{fig:imgnet_results}
        
    \end{minipage}
    \begin{minipage}[t]{0.497\textwidth}
        \includegraphics[width=\linewidth]{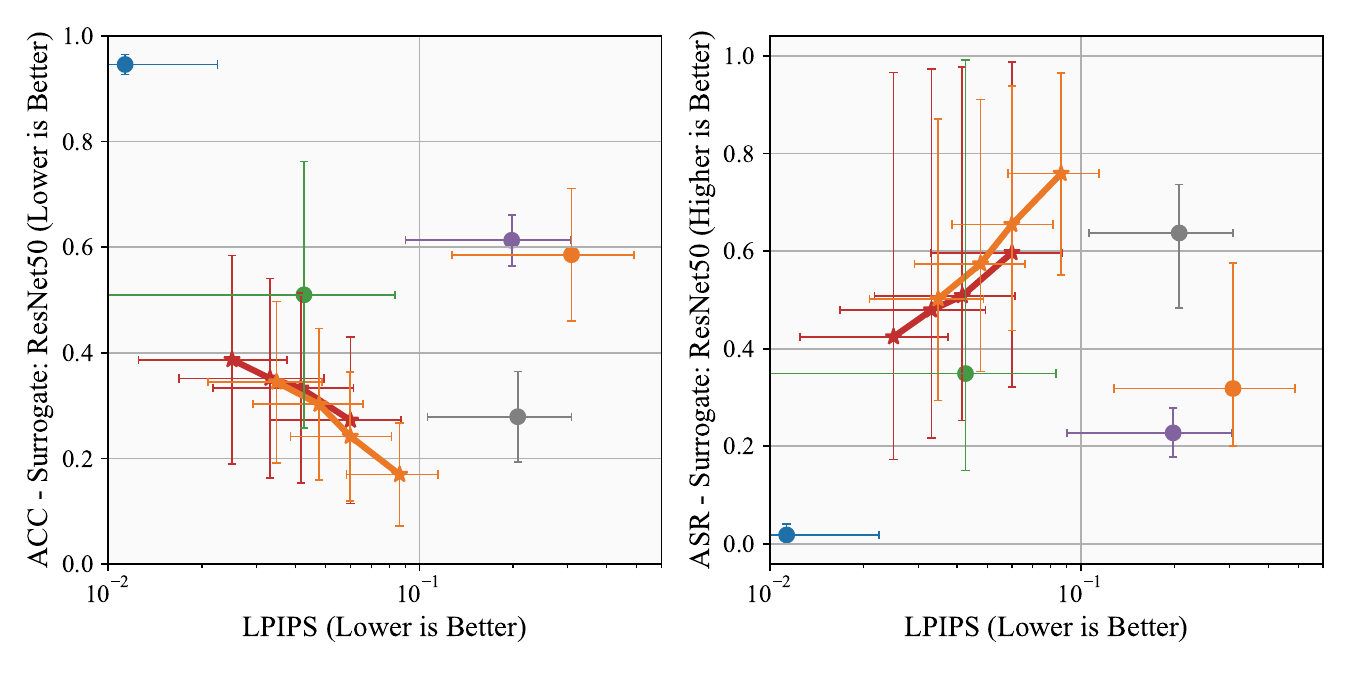} 
\vspace{-2em}
\captionof{figure}{Abstracted label results.} 
\label{fig:coarse_results}       
\end{minipage}
\end{center}
\vspace{-0.5em}
\end{figure}

\begin{table*}[t]
\vspace{-0.5em}
\caption{Results on more surrogate models.  Appendix D shows our Pareto optimality in each setting.}
\vspace{-0.2em}
  \centering 
  \resizebox{\linewidth}{!}{
    \begin{tabular}{c c|c c c c c|c c c c c|c}
   \toprule
    \multicolumn{2}{c|}{Attacker Settings} &  \multicolumn{5}{c|}{ImageNet Label \scae\ Generation Task} & \multicolumn{5}{c|}{Coarse Label Evasion Task (Section~\ref{sec:eval})} &  \\
    Surrogates & Method & $\overline{\text{Acc.}}$\color{red}{↓} &  $\overline{\text{ASR}}$\color{red}{↑} & Clip$_Q$\color{red}{↑}&MSSSIM\color{red}{↑}&LPIPS\color{red}{↓} & $\overline{\text{Acc.}}$\color{red}{↓}  &  $\overline{\text{ASR}}$\color{red}{↑} & Clip$_Q$\color{red}{↑}&MSSSIM\color{red}{↑}&LPIPS\color{red}{↓} & Time(s)\color{red}{↓}\\ 
    
    \midrule

    \multirow{5}{*}{\makecell[c]{ResNet50}}& MI-FGSM& 33.4\%& 41.5\%& 0.548& 0.880&0.201 
& 61.3\%& 22.8\%& 0.551& 0.885&0.198
&
1.43$_{\pm 0.02}$\\
    & AdvDiff& 87.3\%& 4.9\%& 0.634 & 0.939 &0.046 
& 94.6\%& 1.8\%& 0.621 & \textbf{0.992} &\textbf{0.011} 
&
19.6$_{\pm0.01}$\\
 & SD-NAE& 37.1\%& \underline{47.4\%}& \textbf{0.841}& 0.433&0.457
& 58.6\%& 31.8\%& 0.771 &0.599 &0.308 
&
24.43$_{\pm0.14}$\\
    & VENOM& \underline{34.5\%}&34.4\%&0.795 &\textbf{0.972} &\textbf{0.023} 
& \underline{51.0\%}&\underline{34.9\%}&\underline{0.779} &0.951 &0.043 
& 3.09$_{\pm 0.52}$
\\
    \rowcolor{gray!25}& Ours&  \textbf{15.1\%}& \textbf{62.0\%}& \underline{0.815} & \underline{0.961} &\underline{0.031} 
& \textbf{35.2\%}& \textbf{47.9\%}& \textbf{0.808} & \underline{0.958} &\underline{0.033} 
& 7.26$_{\pm 2.57}$\\

    \midrule

    \multirow{5}{*}{\makecell[c]{ViT-B}}& MI-FGSM& 32.7\%& 42.4\%& 0.524& 0.855
&0.205
& 58.2\%& 25.7\%& 0.521& 0.860&0.207
&
1.46$_{\pm0.01}$\\
    & AdvDiff& 65.6\%& 30.3\%& 0.638 & 0.430 &0.390 
& 94.1\%& 2.2\%& 0.628 & \textbf{0.972} &\textbf{0.026} 
&
20.5$_{\pm0.01}$\\
 & SD-NAE& 33.7\%& \underline{51.7\%}& \textbf{0.844}& 0.441&0.459
& 56.1\%& 33.6\%& \underline{0.787} &0.609 &0.300 
&
24.5$_{\pm0.10}$\\
 & VENOM& \underline{30.5\%}& 40.6\%& 0.796 & \textbf{0.977} &\textbf{0.021} 
& \underline{46.3\%}& \underline{40.3\%}& 0.780 &\underline{0.958} &  0.040 
&
3.07$_{\pm0.33}$\\
    \rowcolor{gray!25}& Ours& \textbf{10.9\%}&\textbf{69.7\%}&\underline{0.815} &\underline{0.956} &\underline{0.038} 
& \textbf{28.7\%}&\textbf{55.4\%}&\textbf{0.814} &0.955 &\underline{0.039} 
&
7.23$_{\pm2.15}$\\

 \midrule

  \multirow{5}{*}{ConvNeXt}& MI-FGSM& 31.9\%& 44.9\%&  0.543& 0.877
&0.204 
& \underline{41.2\%}& 	\underline{46.4\%}& 0.532& 0.88&0.202
&
1.46$_{\pm0.01}$\\
    & AdvDiff& 52.9\%&	44.4\%& 0.636 & 0.312 &0.471 
& 93.3\%& 3.2\%& 0.627 & \textbf{0.985} &\textbf{0.017} 
&
19.9$_{\pm 0.11}$\\
 & SD-NAE& 22.4\%& \underline{67.3\%}& \textbf{0.848}& 0.432&0.458
& 53.6\%& 36.1\%& 0.782 &0.603 &0.308 
&
24.5$_{\pm0.10}$\\
    & VENOM&  \underline{28.8\%}& 44.6\%& 0.796 & \textbf{0.978} &\textbf{0.020} 
& 42.6\%& 45.0\%&  \underline{0.785} & \underline{0.961} &0.037&
3.14$_{\pm0.65}$\\
    \rowcolor{gray!25}& Ours& \textbf{9.1}\%& \textbf{75.8\%}& \underline{0.817} & \underline{0.958} &\underline{0.036} 
&  \textbf{28.6\%}& \textbf{57.4\%}& \textbf{0.812} & 0.957 &\underline{0.036}&
6.96$_{\pm1.96}$\\

    \midrule

   \multirow{5}{*}{\makecell[c]{ResNet50\\+DeCoWa}}& MI-FGSM&   \underline{23.6\%}& 58.8\%&0.535& 0.869
&0.201 
& \textbf{27.9\%}& \textbf{63.7\%}& 0.474& 0.870&0.207
&
3.01$_{\pm0.04}$\\
    & AdvDiff& 67.9\%& 27.5\%& 0.597& 0.761&0.293
& 93.4\%& 3.2\%& 0.629&	0.934&0.081
&
20.8$_{\pm 0.18}$\\
 & SD-NAE& 26.6\%& \underline{61.7\%}& \textbf{0.845}& 0.421&0.470
& 64.4\%& 26.3\%& 0.782 & 0.568 &0.331 
&
24.3$_{\pm0.07}$\\
    & VENOM&  36.1\%& 31.4\%& 0.805 & \textbf{0.968}&\textbf{0.027}& 56.2\%& 28.1\%& \underline{0.796}& \textbf{0.944}&\textbf{0.048}&
4.93$_{\pm2.67}$\\
    \rowcolor{gray!25}& Ours& \textbf{10.1\%}&  \textbf{75.8\%}& \underline{0.810}& \underline{0.947}& \underline{0.044}& \underline{30.3\%}& \underline{57.4\%}& \textbf{0.808} & \underline{0.943}& \textbf{0.048} 
&
10.7$_{\pm3.60}$\\

    \bottomrule
    \multicolumn{13}{l}{* Attack performance is \textbf{averaged} across targets. Detailed results with the specified target models are shown in Appendix D.}
    \end{tabular}
    
  }
  \label{tab:main_results}
\vspace{-1.5em}
\end{table*}

\vspace{-1em}
\paragraph{3D \scae\ Generation} 
\begin{wraptable}{r}{0.45\linewidth }
    
  \vspace{-0.5em}
  \caption{3D generation results.}
  \vspace{-0.2em}
  \resizebox{\linewidth}{!}{
    \begin{tabular}{c|cccc}
    \toprule
         Generator&  Acc. &  ASR & MSSSIM & LPIPS \\
    \midrule
         Non-Adversarial& 21.5\%&  ---&    ---& ---\\
          Ours w/o ResAdv& 17.9\%&     45.1\%& 0.658& 0.261\\
         Ours&\textbf{2.8\%}&    \textbf{92.2\%}& \textbf{0.665}&\textbf{0.258}  \\
    \bottomrule
    \end{tabular}
    }
  \label{tab:3dresults}
  \vspace{-0.5em}
\end{wraptable} 

We export the video visualization of the object under MPEG4 encoding, and evaluate the attack performance by reading it.
The surrogate and target models are both ResNet50.
The results are in Table~\ref{tab:3dresults}.
There is no 3D \scae\ previously available. 
It shows that our results show satisfactory attack performance, validating the cross-task scalability of \textbf{InSUR}. 
Note that since 3D-diffusion research is still under development, the clean accuracy on the generated 3D samples is not high, while making \textbf{InSUR} a growable research.

\subsection{Key Ablation Studies}

\vspace{-0.5em}

\paragraph{Residual Approximation}
We evaluate the effects of residual approximation steps $k$ in the adversarial guidance $g$ in Eq~\ref{eq:resadvddim}, under the 2D abstracted label task and the 3D task.
In implementation, $k$ is controlled by the upper-bound parameter $K$ in Eq.~\ref{eq:adaptive_k} in Appendix~\ref{sec:detailed_impl_2d}.
The performance improvement is significant and consistent compared to the result without future-step sampling prediction ($K=0$), which represents the original DDIM sampling with adversarial guidance.
The naturalness metrics, including Clip$_Q$, MSSSIM, and LPIPS, are also slightly improved.
Moreover, attributed to the adaptive iteration mechanism, more accurate estimation leads to lower optimization steps, and therefore, the increase in time consumption is sub-linear.
Detailed results on more settings are shown in the Appendix~\ref{app:abl_res}. 
Parameter and time consumption analysis are shown with the implementation details in Appendix~\ref{sec:detailed_impl}.

The effect of different $k$ on the process of adversarial optimization is shown in Figure~\ref{fig:abl_flunctuation}.
The setting of $\mathrm{Text}$ is \textit{a dog sitting on the floor}, and the evasion label is \textit{dog}. As $k$ increases, the estimated ASR after adversarially sampling $x_t$ increases earlier.
Since the white-box ASR in both settings is nearly 100\%, the improvements are from: (1) by eliminating the \textit{referring diversify}, the initial sampling steps receive more accurate guidance. (2) More effective adversarial optimization on earlier diffusion denoising steps provides better on-manifold regularization, leading to better visual quality and better adversarial transferability.


\begin{table*}[h]
\centering
\begin{minipage}{0.6\linewidth} 
        \caption{Ablation of residual approximation}
  \resizebox{\linewidth}{!}{
    \begin{tabular}{cc|cccccc}
    \toprule
     Surro.  &  $K$&  $\overline{\text{Acc.}}$&  $\overline{\text{ASR}}$& Clip$_Q$& MSSSIM& LPIPS&Time\\
    \midrule
  \multirow[c]{5}{*}{\rotatebox{90}{{\makecell[c]{ViT-B} }}}  &  0&  43.1\%&  43.3\%& 0.794& 0.941&  0.055
&4.53$_{\pm0.19}$\\
       &  1&  33.5\%&  54.7\%& 0.812& 0.942&  0.049 &6.87$_{\pm1.48}$\\
 & 2& 31.3\%& 56.6\%&\textbf{ 0.816}& 0.942& 0.049 &7.44$_{\pm1.92}$\\
       &  3&  29.2\%&  57.5\%& 0.807& 0.943&  0.048 &7.75$_{\pm2.37}$\\
       &  4&  \textbf{27.7\%}&  \textbf{60.1\%}& 0.813& \textbf{0.944}&  \textbf{0.047} &7.87$_{\pm2.31}$\\
\midrule
    \multirow[c]{5}{*}{\rotatebox{90}{{\makecell[c]{ResNet50\\+DeCoWa} }}}  &  0&  39.8\%&  47.1\%& 0.764& 0.946&  0.053
&5.65$_{\pm0.38}$\\
       &  1&  36.3\%&  50.4\%& 0.812& 0.954&  0.040&9.64$_{\pm2.64}$\\
 & 2& 32.8\%& 52.6\%& \textbf{0.818}&\textbf{ 0.955}& \textbf{0.039}
&10.7$_{\pm3.27}$\\
       &  3&  30.4\%&  53.9\%& 0.817&\textbf{ 0.955}&  \textbf{0.039}
&11.2$_{\pm3.67}$\\
       &  4& \textbf{ 29.5\%}&  \textbf{54.2\%}& 0.814& \textbf{0.955}&  \textbf{0.039}
&11.5$_{\pm4.00}$\\
\midrule
\midrule
     \multirow[c]{5}{*}{\makecell[c]{3D\\Gen.} }   &  0& 17.9\%&  45.1\%& --& 0.658&  0.261&7.49$_{\pm1.49}$\\ &  1& 3.4\%&  90.2\%& --& 0.658&  0.262&30.4$_{\pm35.02}$\\ &  2& 3.4\%&  91.2\%& --& 0.659&  0.261&32.7$_{\pm39.17}$\\ &  3& 2.9\%&  91.2\%& --& \textbf{0.671}&  \textbf{0.255}&41.1$_{\pm53.75}$\\ &  4& \textbf{ 2.8\%}&  \textbf{92.2\%}& --& 0.665&  0.258&40.1$_{\pm59.25}$\\
    \bottomrule
    \end{tabular}
}
    \label{tab:resadvddim}

\vspace{-1em}
\end{minipage}
\hfill
\begin{minipage}{0.38\linewidth}
\vspace{-1em}
    \includegraphics[width=\linewidth]{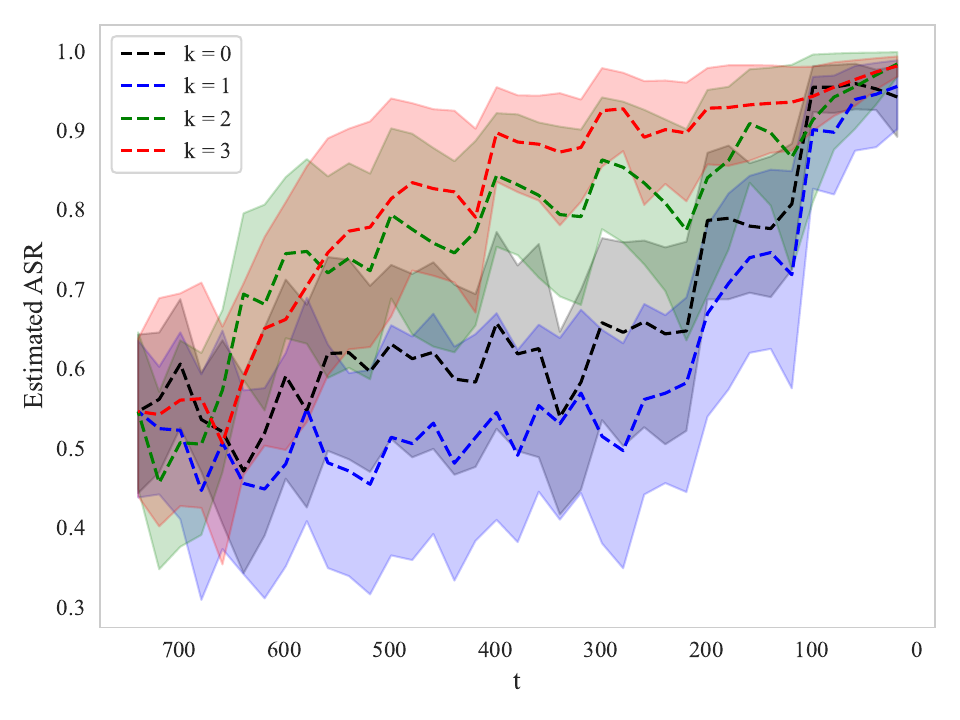} 
\captionof{figure}{Estimated ASR of $\hat{x_0}$ after adv. sampling $x_t$, under different settings of $k$ in Eq.~\ref{eq:resadvddim}. Earlier increase of estimated ASR leads to better naturalness and higher attack transferability.} 
\label{fig:abl_flunctuation}  
\vspace{-1em}
\end{minipage}
\end{table*}




%

\paragraph{Spatial Masking of Language Guidance} We evaluate the effect of guidance masking by setting $M_\mathrm{edge} / {M_\mathrm{mid}}$ to different values in Eq.~\ref{eq:guidancemask}, and setting to $1.0$ represent removing the masking. The results are shown in Table~\ref{tab:guidance_masking} and Figure~\ref{fig:mask_guidance}, indicating that (1) decreasing $M_{edge}$ leads to an increase in unconditional guidance, which enriches the background diversity as shown in the figure, and may lead to the improvements of Clip$_Q$. (2) When the budget $\epsilon$ is small ($\epsilon=2$) and the optimization space is relatively narrow, diversifying the background is beneficial. Note that $\epsilon$ is large, the improvement is marginal since there is no need for expanding the optimization space. Overall, the guidance masking design improves diffusion models' adaptability to strongly constrained \scae\ generation.

\vspace{-0.5em}
\begin{table*}[h]
\centering
\begin{minipage}{0.5\linewidth} 
\vspace{-0.25em}
        \caption{Ablation of guidance masking.}
\vspace{-0.25em}
  \resizebox{\linewidth}{!}{
    \begin{tabular}{cc|ccccc}
    \toprule
          $\epsilon$& $M_\mathrm{edge} / {M_\mathrm{mid}}$&  $\overline{\text{Acc.}}$ &  $\overline{\text{ASR}}$ & Clip$_Q$ &  MSSSIM& LPIPS\\
    \midrule
          2& 0.0&  \textbf{32.8\%}& \textbf{ 52.3\%}&  \textbf{0.813} &  \textbf{0.958} &0.035
\\
          2& 0.1&  34.5\%&  50.2\%&  0.809&  0.957 &0.035
\\
          2& 1.0&  36.9\%&  48.3\%&  0.788&  \textbf{0.958} &\textbf{0.034}
\\
  \midrule
          4& 0.0&  \textbf{16.6\%}& \textbf{ 75.9\%}& 0.804&  0.901 & 0.087
\\
          4& 1.0&  17.3\%& \textbf{ 75.9\%}&  0.775 &  \textbf{0.904} & \textbf{0.084}
\\
    \bottomrule
    \end{tabular}
}
    \label{tab:guidance_masking}
\vspace{-1em}
\end{minipage}
\hfill
\begin{minipage}{0.48\linewidth}
    \includegraphics[width=\linewidth]{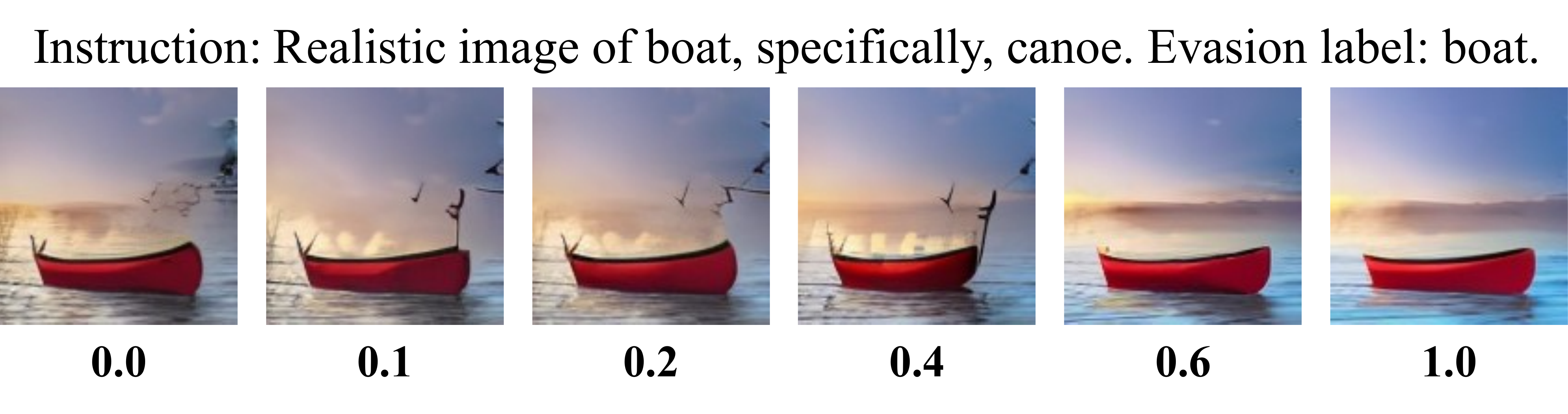} 
\vspace{-1.5em}
\captionof{figure}{Vis. of different guidance masking. The value under images denotes $M_\mathrm{edge} / M_\mathrm{mid}$.} 
\label{fig:mask_guidance}  
\vspace{-1em}
\end{minipage}
\end{table*}




\vspace{-0.5em}
\subsection{Visualization of Generated \scae s}
\label{sec:vis}

\begin{figure}[t!]
\begin{center}
    \begin{minipage}[t]{0.635\textwidth}
        \vspace{-0.3pt}
        \includegraphics[width=\linewidth]{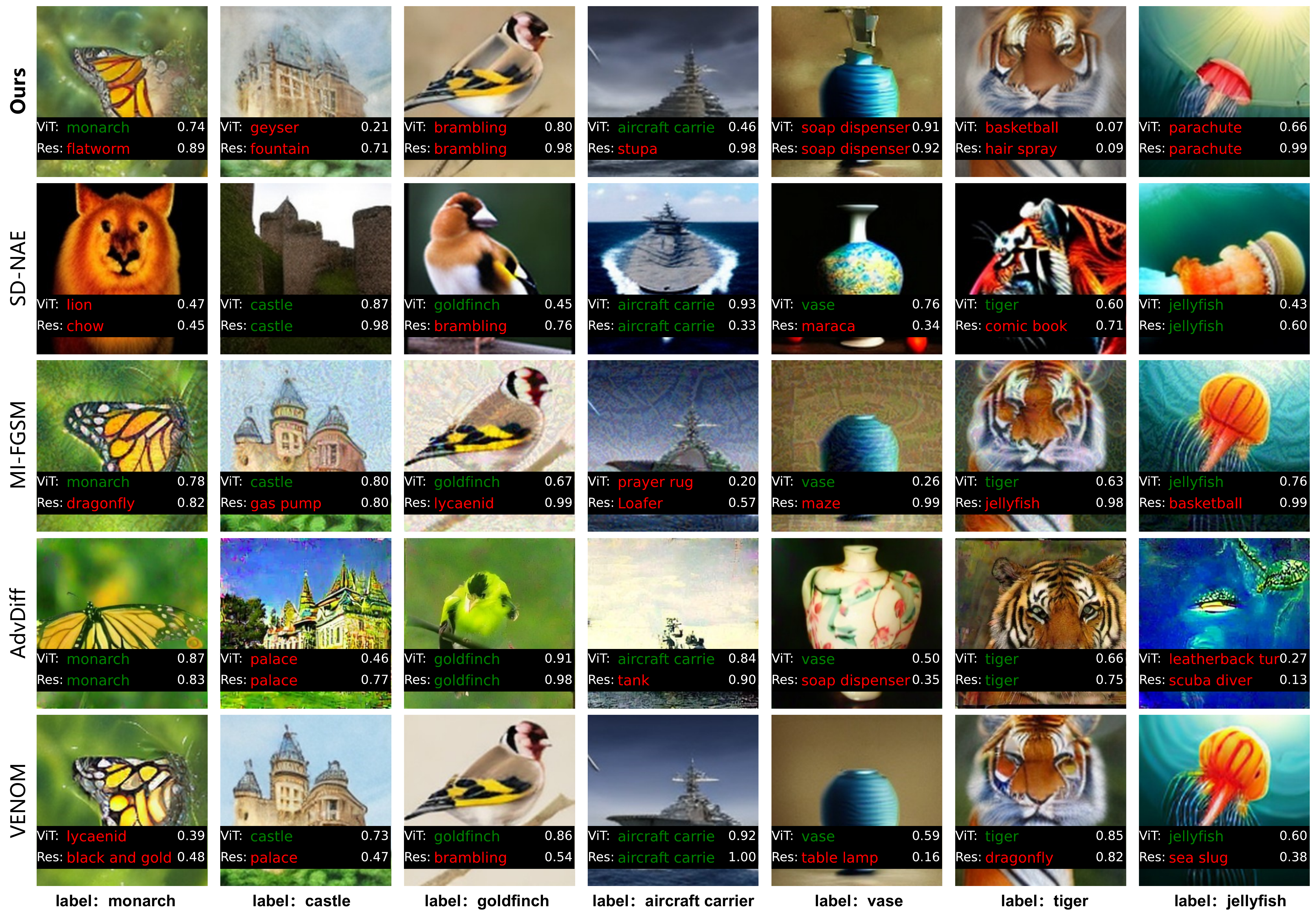} 
\vspace{-1.7em}
\captionof{figure}{Comparison of 2D \scae s.} 
\label{fig:2dvis}       
        
    \end{minipage}
    \hspace{-0.3em}
    \begin{minipage}[t]{0.345\textwidth}
    \vspace{0pt}
    \includegraphics[width=\linewidth]{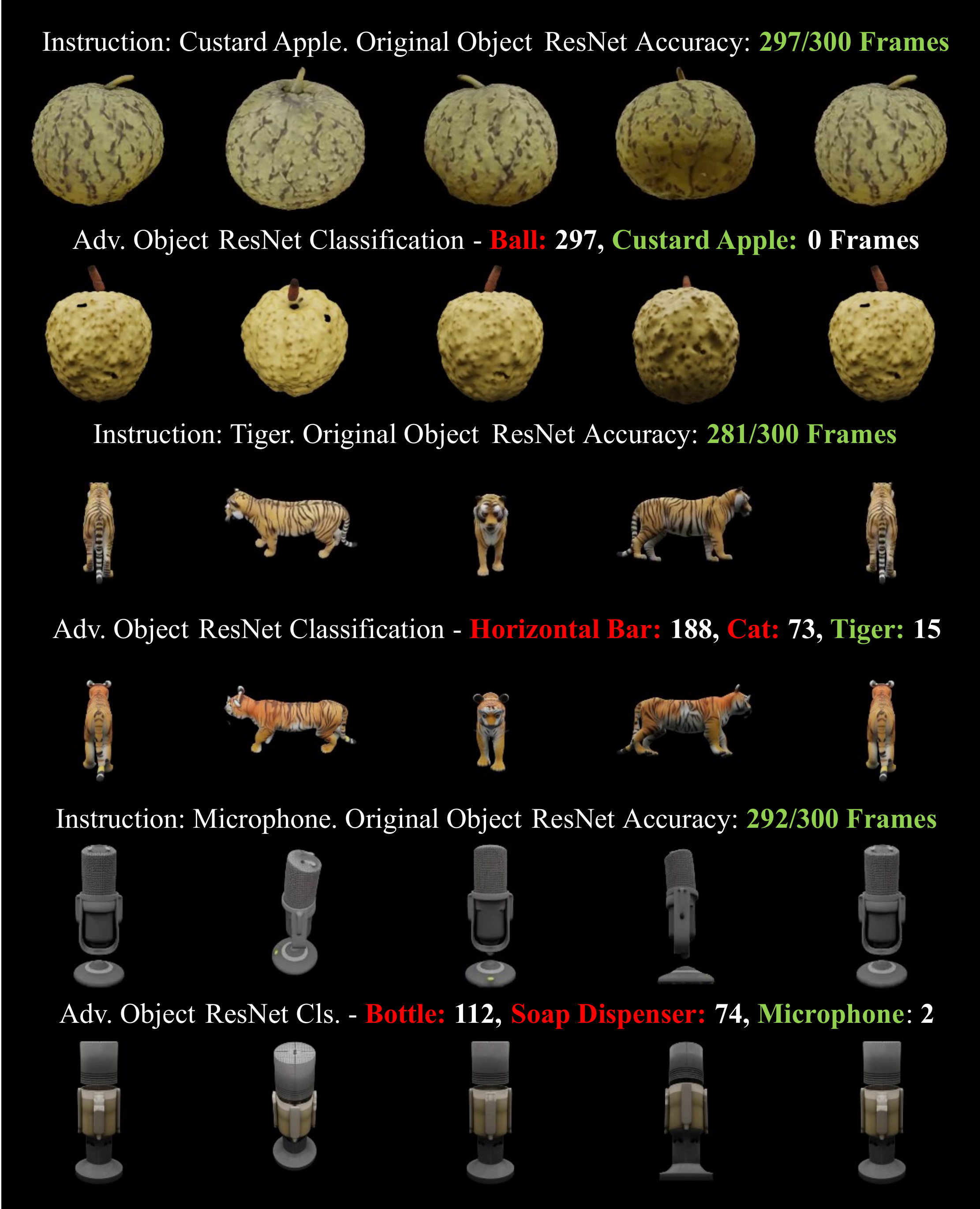} 
\captionof{figure}{3D \scae s} 
\label{fig:3dvis}       
\end{minipage}
\vspace{-2em}
\end{center}
\end{figure}

\vspace{-0.5em}
We selected the 2D / 3D samples with $x_\mathrm{exemplar}$ correctly classified and visualized in Fig.~\ref{fig:2dvis}~and Fig.~\ref{fig:3dvis}. 2D samples are generated with the original ImageNet-label and the surrogate model is DeCoWa+ResNet. 3D samples are generated and visualized as the main experiment. The results are coherent with the main table, showing that MI-FGSM is not natural, SD-NAE disturbs more global semantics, while ours achieves both global semantic preservation and naturalness.
Through observation, our method generates local in-manifold patterns to achieve the strong attacks, \textit{e.g}, adding fog in the castle image or altering the lightning in the jellyfish image.
For 3D results, although the MSSSIM and LPIPS metrics are not excellent in the main experiment, the generated 3D objects are natural and follow the semantic constraint. More visualizations are shown in Appendix~\ref{app:visualization}.

\vspace{-0.5em}
\subsection{Discussions}



\vspace{-0.5em}
\paragraph{Extending to Attacking Large Vision Language Models} The InSUR framework is agnostic to specific target models and tasks, and can be directly applied to adversarial attack evaluation in Visual Question Answering (VQA) scenarios. To illustrate the capability, we conduct VQA experiments using OpenAI CLIP~\cite{radford2021learning} model \textit{RN50} (with DeCoWA augmentation) as the surrogate, using \textit{stabilityai/stable-diffusion-2-1-base}~\cite{Rombach_2022_CVPR} as the diffusion generator, and generate adversarial examples for the VQA problem.
The question and options are \textit{"What is in the picture:"} and \textit{\{Police car, Ambulance, Taxi, School Bus\}}.
We evaluate \textbf{\textit{weak-to-strong}} transferability across OpenAI CLIP variants \{RN50, RN101, RN50x4, RN50x16, ViT-B/32, ViT-B/16\} and larger vision language models (VLM), LLaVA~\cite{liu2023visual} and Qwen-7B~\cite{qwen2.5-VL}.
The results are in Table~\ref{tab:vlm_results}, showing that our technical contribution is also effective in the VQA transfer attack task.

\begin{table}[h]
\centering
\vspace{-1em}
\caption{ASR$\mathrm{_{relative}}$ (\%) of attacks. The surrogate model is ResNet50 CLIP model.}
\label{tab:vlm_results}\
\resizebox{\linewidth}{!}{
\begin{tabular}{c|cccccccc}
\toprule
   \multirow[c]{2}{*}{\makecell[c]{Residual\\ Approximation} } & \multicolumn{8}{c}{Target Model}\\

 & RN50 & RN101 & RN50x4 & RN50x16 & ViT-B/32 & ViT-B/16  & LLaVa1.5-7B&Qwen2.5VL-7B\\
\midrule
\ding{55} ($K = 0$)&  77.50& 27.50& 30.00& 11.11& 12.50& 10.53 & 11.11 & 16.22 \\
\ding{51} ($K = 3$)& 97.50& 57.50& 62.50& 27.78& 40.00& 47.37 & 22.22 & 35.14\\
\bottomrule
\end{tabular}
}
\vspace{-0.5em}
\end{table}

\begin{wrapfigure}{r}{0.4\textwidth}
\vspace{-2em}
  \centering
  \subfloat[w/o ResAdv.]{
    \includegraphics[width=0.45\linewidth]{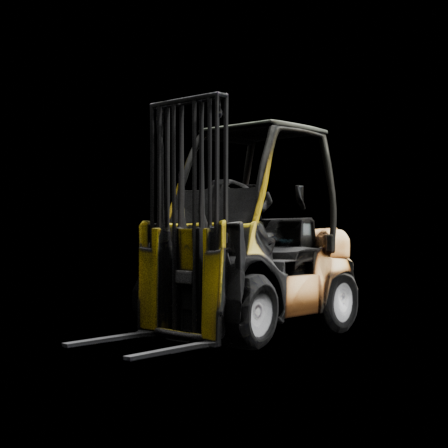}
    }
  \hfill
  \subfloat[with ResAdv.]{
    \includegraphics[width=0.45\linewidth]{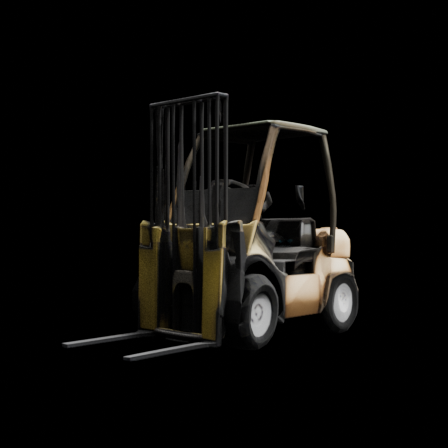}
  }
  
    \vspace{-0.25em}
  \caption{\textit{Blender} re-rendered 3D \scae s with altered lightning.}
  \label{fig:lightning}
\vspace{-0.75em}
\end{wrapfigure}


\vspace{-0.5em}
\paragraph{Broader Applications}
Our approach also has the potential to extend to real-world adversarial attacks under the 3DGS representations~\cite{lou20253d}, providing technical tools for security-related research.
We tested our generated 3D \scae\ in with lightning, material, and camera settings with \textit{Blender} environment, and the ResNet50 ACC is \textbf{59.1\%}, \textbf{20.0\%}, for the two settings in Figure~\ref{fig:lightning}, respectively. Although the attack performance is worse than the original rendering (ACC=7.7\%), the relative improvement is significant.
Our framework can also be integrated with existing 3D scene generation pipelines~\cite{wen20253d}, enabling more effective incorporation of diffusion-based adversarial optimization into world models. This supports safety-critical scenario generation for applications such as autonomous driving~\cite{wang2025generative, xu2025diffscene} and embodied agents~\cite{wang2025embodiedgen}.
On another front, recent work~\cite{askari2025improving} demonstrates that diffusion-generated hard samples can achieve higher sample efficiency, especially with adversarial guidance, suggesting that our InSUR framework also holds promise for supporting the generation of adversarial training data.


\vspace{-0.5em}
\section{Conclusion}
\vspace{-0.5em}

This paper proposes multi-dimensional uncertainty reduction frameworks for \scae\ generation, pushes the boundary of the 2D generation, and opens the door to 3D generation.
The proposed technology consistently improves the attack performance and has the potential to scale to other tasks.
Moreover, we believe it could provide valuable insights for the test-time scaling of the red-teaming framework.
For limitations, there is scope for improvement in the generation quality, the evaluation on larger models, and the application in the real world, suggesting future research avenues of the \scae\ generation algorithms based on concrete generative models and the scenario adaptation methods oriented to real-world scenarios.


\newpage
\begin{ack}
This work was supported by the National Natural Science Foundation of China (NSFC) under Grant 62476018, and was supported by Zhongguancun Laboratory.
\end{ack}

\bibliography{ref}

\newpage
\section*{Paper Checklist}

\begin{enumerate}

\item {\bf Claims}
    \item[] Question: Do the main claims made in the abstract and introduction accurately reflect the paper's contributions and scope?
    \item[] Answer: \answerYes{} 
    \item[] Justification: The primary claims of the paper are addressed with the provided methodology and the experiment evaluates the effectiveness.
    \item[] Guidelines:
    \begin{itemize}
        \item The answer NA means that the abstract and introduction do not include the claims made in the paper.
        \item The abstract and/or introduction should clearly state the claims made, including the contributions made in the paper and important assumptions and limitations. A No or NA answer to this question will not be perceived well by the reviewers. 
        \item The claims made should match theoretical and experimental results, and reflect how much the results can be expected to generalize to other settings. 
        \item It is fine to include aspirational goals as motivation as long as it is clear that these goals are not attained by the paper. 
    \end{itemize}

\item {\bf Limitations}
    \item[] Question: Does the paper discuss the limitations of the work performed by the authors?
    \item[] Answer: \answerYes{} 
    \item[] Justification: We discuss the limitations in the conclusion section of the generation performance, the evaluation on larger models, and the application in the real world
    \item[] Guidelines:
    \begin{itemize}
        \item The answer NA means that the paper has no limitation while the answer No means that the paper has limitations, but those are not discussed in the paper. 
        \item The authors are encouraged to create a separate "Limitations" section in their paper.
        \item The paper should point out any strong assumptions and how robust the results are to violations of these assumptions (e.g., independence assumptions, noiseless settings, model well-specification, asymptotic approximations only holding locally). The authors should reflect on how these assumptions might be violated in practice and what the implications would be.
        \item The authors should reflect on the scope of the claims made, e.g., if the approach was only tested on a few datasets or with a few runs. In general, empirical results often depend on implicit assumptions, which should be articulated.
        \item The authors should reflect on the factors that influence the performance of the approach. For example, a facial recognition algorithm may perform poorly when image resolution is low or images are taken in low lighting. Or a speech-to-text system might not be used reliably to provide closed captions for online lectures because it fails to handle technical jargon.
        \item The authors should discuss the computational efficiency of the proposed algorithms and how they scale with dataset size.
        \item If applicable, the authors should discuss possible limitations of their approach to address problems of privacy and fairness.
        \item While the authors might fear that complete honesty about limitations might be used by reviewers as grounds for rejection, a worse outcome might be that reviewers discover limitations that aren't acknowledged in the paper. The authors should use their best judgment and recognize that individual actions in favor of transparency play an important role in developing norms that preserve the integrity of the community. Reviewers will be specifically instructed to not penalize honesty concerning limitations.
    \end{itemize}

\item {\bf Theory assumptions and proofs}
    \item[] Question: For each theoretical result, does the paper provide the full set of assumptions and a complete (and correct) proof?
    \item[] Answer: \answerNA{} 
    \item[] Justification: No theoretical results are included in the main paper.
    \item[] Guidelines:
    \begin{itemize}
        \item The answer NA means that the paper does not include theoretical results. 
        \item All the theorems, formulas, and proofs in the paper should be numbered and cross-referenced.
        \item All assumptions should be clearly stated or referenced in the statement of any theorems.
        \item The proofs can either appear in the main paper or the supplemental material, but if they appear in the supplemental material, the authors are encouraged to provide a short proof sketch to provide intuition. 
        \item Inversely, any informal proof provided in the core of the paper should be complemented by formal proofs provided in appendix or supplemental material.
        \item Theorems and Lemmas that the proof relies upon should be properly referenced. 
    \end{itemize}

    \item {\bf Experimental result reproducibility}
    \item[] Question: Does the paper fully disclose all the information needed to reproduce the main experimental results of the paper to the extent that it affects the main claims and/or conclusions of the paper (regardless of whether the code and data are provided or not)?
    \item[] Answer:  \answerYes{} 
    \item[] Justification: We make it clear how to reproduce that algorithm as a new algorithm. The detailed implementation is provided in Appendix B.
    \item[] Guidelines:
    \begin{itemize}
        \item The answer NA means that the paper does not include experiments.
        \item If the paper includes experiments, a No answer to this question will not be perceived well by the reviewers: Making the paper reproducible is important, regardless of whether the code and data are provided or not.
        \item If the contribution is a dataset and/or model, the authors should describe the steps taken to make their results reproducible or verifiable. 
        \item Depending on the contribution, reproducibility can be accomplished in various ways. For example, if the contribution is a novel architecture, describing the architecture fully might suffice, or if the contribution is a specific model and empirical evaluation, it may be necessary to either make it possible for others to replicate the model with the same dataset, or provide access to the model. In general. releasing code and data is often one good way to accomplish this, but reproducibility can also be provided via detailed instructions for how to replicate the results, access to a hosted model (e.g., in the case of a large language model), releasing of a model checkpoint, or other means that are appropriate to the research performed.
        \item While NeurIPS does not require releasing code, the conference does require all submissions to provide some reasonable avenue for reproducibility, which may depend on the nature of the contribution. For example
        \begin{enumerate}
            \item If the contribution is primarily a new algorithm, the paper should make it clear how to reproduce that algorithm.
            \item If the contribution is primarily a new model architecture, the paper should describe the architecture clearly and fully.
            \item If the contribution is a new model (e.g., a large language model), then there should either be a way to access this model for reproducing the results or a way to reproduce the model (e.g., with an open-source dataset or instructions for how to construct the dataset).
            \item We recognize that reproducibility may be tricky in some cases, in which case authors are welcome to describe the particular way they provide for reproducibility. In the case of closed-source models, it may be that access to the model is limited in some way (e.g., to registered users), but it should be possible for other researchers to have some path to reproducing or verifying the results.
        \end{enumerate}
    \end{itemize}

\item {\bf Open access to data and code}
    \item[] Question: Does the paper provide open access to the data and code, with sufficient instructions to faithfully reproduce the main experimental results, as described in supplemental material?
    \item[] Answer:  \answerNo{} 
    \item[] Justification: We will release the code after publication.
    \item[] Guidelines:
    \begin{itemize}
        \item The answer NA means that paper does not include experiments requiring code.
        \item Please see the NeurIPS code and data submission guidelines (\url{https://nips.cc/public/guides/CodeSubmissionPolicy}) for more details.
        \item While we encourage the release of code and data, we understand that this might not be possible, so “No” is an acceptable answer. Papers cannot be rejected simply for not including code, unless this is central to the contribution (e.g., for a new open-source benchmark).
        \item The instructions should contain the exact command and environment needed to run to reproduce the results. See the NeurIPS code and data submission guidelines (\url{https://nips.cc/public/guides/CodeSubmissionPolicy}) for more details.
        \item The authors should provide instructions on data access and preparation, including how to access the raw data, preprocessed data, intermediate data, and generated data, etc.
        \item The authors should provide scripts to reproduce all experimental results for the new proposed method and baselines. If only a subset of experiments are reproducible, they should state which ones are omitted from the script and why.
        \item At submission time, to preserve anonymity, the authors should release anonymized versions (if applicable).
        \item Providing as much information as possible in supplemental material (appended to the paper) is recommended, but including URLs to data and code is permitted.
    \end{itemize}

\item {\bf Experimental setting/details}
    \item[] Question: Does the paper specify all the training and test details (e.g., data splits, hyperparameters, how they were chosen, type of optimizer, etc.) necessary to understand the results?
    \item[] Answer:\answerYes{} 
    \item[] Justification: We specify the Experimental setting in the Experimental section and will include the details in Appendix C.
    \item[] Guidelines:
    \begin{itemize}
        \item The answer NA means that the paper does not include experiments.
        \item The experimental setting should be presented in the core of the paper to a level of detail that is necessary to appreciate the results and make sense of them.
        \item The full details can be provided either with the code, in appendix, or as supplemental material.
    \end{itemize}

\item {\bf Experiment statistical significance}
    \item[] Question: Does the paper report error bars suitably and correctly defined or other appropriate information about the statistical significance of the experiments?
    \item[] Answer: \answerYes{} 
    \item[] We have selected statistical significance verification methods based on the properties of the metrics, including the standard deviation, extremum difference across different task settings.
    \item[] Guidelines:
    \begin{itemize}
        \item The answer NA means that the paper does not include experiments.
        \item The authors should answer "Yes" if the results are accompanied by error bars, confidence intervals, or statistical significance tests, at least for the experiments that support the main claims of the paper.
        \item The factors of variability that the error bars are capturing should be clearly stated (for example, train/test split, initialization, random drawing of some parameter, or overall run with given experimental conditions).
        \item The method for calculating the error bars should be explained (closed form formula, call to a library function, bootstrap, etc.)
        \item The assumptions made should be given (e.g., Normally distributed errors).
        \item It should be clear whether the error bar is the standard deviation or the standard error of the mean.
        \item It is OK to report 1-sigma error bars, but one should state it. The authors should preferably report a 2-sigma error bar than state that they have a 96\% CI, if the hypothesis of Normality of errors is not verified.
        \item For asymmetric distributions, the authors should be careful not to show in tables or figures symmetric error bars that would yield results that are out of range (e.g. negative error rates).
        \item If error bars are reported in tables or plots, The authors should explain in the text how they were calculated and reference the corresponding figures or tables in the text.
    \end{itemize}

\item {\bf Experiments compute resources}
    \item[] Question: For each experiment, does the paper provide sufficient information on the computer resources (type of compute workers, memory, time of execution) needed to reproduce the experiments?
    \item[] Answer: \answerYes{} 
    \item[] Justification: We provide the execution time in Table 1 and 3 on a single 4090 or A800 GPU. Detailed experiment settings are shown in Appendix C.
    \item[] Guidelines:
    \begin{itemize}
        \item The answer NA means that the paper does not include experiments.
        \item The paper should indicate the type of compute workers CPU or GPU, internal cluster, or cloud provider, including relevant memory and storage.
        \item The paper should provide the amount of compute required for each of the individual experimental runs as well as estimate the total compute. 
        \item The paper should disclose whether the full research project required more compute than the experiments reported in the paper (e.g., preliminary or failed experiments that didn't make it into the paper). 
    \end{itemize}
    
\item {\bf Code of ethics}
    \item[] Question: Does the research conducted in the paper conform, in every respect, with the NeurIPS Code of Ethics \url{https://neurips.cc/public/EthicsGuidelines}?
    \item[] Answer: \answerYes{} 
    \item[] Justification: The research conducted in the paper conforms, in every respect, with theNeurIPS Code of Ethics https://neurips.cc/public/EthicsGuidelines
    \item[] Guidelines:
    \begin{itemize}
        \item The answer NA means that the authors have not reviewed the NeurIPS Code of Ethics.
        \item If the authors answer No, they should explain the special circumstances that require a deviation from the Code of Ethics.
        \item The authors should make sure to preserve anonymity (e.g., if there is a special consideration due to laws or regulations in their jurisdiction).
    \end{itemize}

\item {\bf Broader impacts}
    \item[] Question: Does the paper discuss both potential positive societal impacts and negative societal impacts of the work performed?
    \item[] Answer: \answerYes{} 
    \item[] Justification: Societal impacts are expounded upon in the discussion section in the appendix.
    \item[] Guidelines:
    \begin{itemize}
        \item The answer NA means that there is no societal impact of the work performed.
        \item If the authors answer NA or No, they should explain why their work has no societal impact or why the paper does not address societal impact.
        \item Examples of negative societal impacts include potential malicious or unintended uses (e.g., disinformation, generating fake profiles, surveillance), fairness considerations (e.g., deployment of technologies that could make decisions that unfairly impact specific groups), privacy considerations, and security considerations.
        \item The conference expects that many papers will be foundational research and not tied to particular applications, let alone deployments. However, if there is a direct path to any negative applications, the authors should point it out. For example, it is legitimate to point out that an improvement in the quality of generative models could be used to generate deepfakes for disinformation. On the other hand, it is not needed to point out that a generic algorithm for optimizing neural networks could enable people to train models that generate Deepfakes faster.
        \item The authors should consider possible harms that could arise when the technology is being used as intended and functioning correctly, harms that could arise when the technology is being used as intended but gives incorrect results, and harms following from (intentional or unintentional) misuse of the technology.
        \item If there are negative societal impacts, the authors could also discuss possible mitigation strategies (e.g., gated release of models, providing defenses in addition to attacks, mechanisms for monitoring misuse, mechanisms to monitor how a system learns from feedback over time, improving the efficiency and accessibility of ML).
    \end{itemize}
    
\item {\bf Safeguards}
    \item[] Question: Does the paper describe safeguards that have been put in place for responsible release of data or models that have a high risk for misuse (e.g., pretrained language models, image generators, or scraped datasets)?
    \item[] Answer: \answerNA{} 
    \item[] Justification: No risk of misuse.
    \item[] Guidelines:
    \begin{itemize}
        \item The answer NA means that the paper poses no such risks.
        \item Released models that have a high risk for misuse or dual-use should be released with necessary safeguards to allow for controlled use of the model, for example by requiring that users adhere to usage guidelines or restrictions to access the model or implementing safety filters. 
        \item Datasets that have been scraped from the Internet could pose safety risks. The authors should describe how they avoided releasing unsafe images.
        \item We recognize that providing effective safeguards is challenging, and many papers do not require this, but we encourage authors to take this into account and make a best faith effort.
    \end{itemize}

\item {\bf Licenses for existing assets}
    \item[] Question: Are the creators or original owners of assets (e.g., code, data, models), used in the paper, properly credited and are the license and terms of use explicitly mentioned and properly respected?
    \item[] Answer: \answerYes{} 
    \item[] Justification: We have reflected Licenses for existing assets in our references.
    \item[] Guidelines:
    \begin{itemize}
        \item The answer NA means that the paper does not use existing assets.
        \item The authors should cite the original paper that produced the code package or dataset.
        \item The authors should state which version of the asset is used and, if possible, include a URL.
        \item The name of the license (e.g., CC-BY 4.0) should be included for each asset.
        \item For scraped data from a particular source (e.g., website), the copyright and terms of service of that source should be provided.
        \item If assets are released, the license, copyright information, and terms of use in the package should be provided. For popular datasets, \url{paperswithcode.com/datasets} has curated licenses for some datasets. Their licensing guide can help determine the license of a dataset.
        \item For existing datasets that are re-packaged, both the original license and the license of the derived asset (if it has changed) should be provided.
        \item If this information is not available online, the authors are encouraged to reach out to the asset's creators.
    \end{itemize}

\item {\bf New assets}
    \item[] Question: Are new assets introduced in the paper well documented and is the documentation provided alongside the assets?
    \item[] Answer: \answerNA{} 
    \item[] Justification: The paper does not release new assets 
    \item[] Guidelines:
    \begin{itemize}
        \item The answer NA means that the paper does not release new assets.
        \item Researchers should communicate the details of the dataset/code/model as part of their submissions via structured templates. This includes details about training, license, limitations, etc. 
        \item The paper should discuss whether and how consent was obtained from people whose asset is used.
        \item At submission time, remember to anonymize your assets (if applicable). You can either create an anonymized URL or include an anonymized zip file.
    \end{itemize}

\item {\bf Crowdsourcing and research with human subjects}
    \item[] Question: For crowdsourcing experiments and research with human subjects, does the paper include the full text of instructions given to participants and screenshots, if applicable, as well as details about compensation (if any)? 
    \item[] Answer: \answerNA{} 
    \item[] Justification: The paper does not involve crowdsourcing nor research with human subjects.
    \item[] Guidelines:
    \begin{itemize}
        \item The answer NA means that the paper does not involve crowdsourcing nor research with human subjects.
        \item Including this information in the supplemental material is fine, but if the main contribution of the paper involves human subjects, then as much detail as possible should be included in the main paper. 
        \item According to the NeurIPS Code of Ethics, workers involved in data collection, curation, or other labor should be paid at least the minimum wage in the country of the data collector. 
    \end{itemize}

\item {\bf Institutional review board (IRB) approvals or equivalent for research with human subjects}
    \item[] Question: Does the paper describe potential risks incurred by study participants, whether such risks were disclosed to the subjects, and whether Institutional Review Board (IRB) approvals (or an equivalent approval/review based on the requirements of your country or institution) were obtained?
    \item[] Answer: \answerNA{} 
    \item[] Justification: The paper does not involve crowdsourcing nor research with human subjects
    \item[] Guidelines:
    \begin{itemize}
        \item The answer NA means that the paper does not involve crowdsourcing nor research with human subjects.
        \item Depending on the country in which research is conducted, IRB approval (or equivalent) may be required for any human subjects research. If you obtained IRB approval, you should clearly state this in the paper. 
        \item We recognize that the procedures for this may vary significantly between institutions and locations, and we expect authors to adhere to the NeurIPS Code of Ethics and the guidelines for their institution. 
        \item For initial submissions, do not include any information that would break anonymity (if applicable), such as the institution conducting the review.
    \end{itemize}

\item {\bf Declaration of LLM usage}
    \item[] Question: Does the paper describe the usage of LLMs if it is an important, original, or non-standard component of the core methods in this research? Note that if the LLM is used only for writing, editing, or formatting purposes and does not impact the core methodology, scientific rigorousness, or originality of the research, declaration is not required.
    \item[] Answer:  \answerNA{} 
    \item[] Justification: {The core method development in this research does not involve LLMs as any important, original, or non-standard components.}
    \item[] Guidelines:
    \begin{itemize}
        \item The answer NA means that the core method development in this research does not involve LLMs as any important, original, or non-standard components.
        \item Please refer to our LLM policy (\url{https://neurips.cc/Conferences/2025/LLM}) for what should or should not be described.
    \end{itemize}

\end{enumerate}

\newpage
\renewcommand{\cftsecindent}{0pt} 
\tableofcontents
\appendix
\newpage

\section{Technical Background and Related Works}
\label{sec:tech_background}

\subsection{Diffusion models}

\paragraph{Diffusion Model} This work mainly applies the diffusion model as the language-guided data generator.
As a brief review, diffusion models establish the theoretical and technical route of estimating the score function $\nabla_X \log p(X)$ of the data distribution $X$ by training UNet models on the dataset $\{x\}$, and sampling the data from the score function with numerical methods.
One of the key insights of diffusion models is alleviating the training difficulty by disturbing the data $x$ gradually with noise, and models it as a forward process from $x_0 \sim X$ to $x_T\sim\mathcal N (0, I)$ with the Markov process $X_{t} = \sqrt{1 - \beta_t} X_{t-1}+ \sqrt\beta_t \mathcal N (0, I)$ scheduled by $\boldsymbol{\beta}$. 
The training is performed by learning a denoising process, \textit{i.e.}, learn to predict the distribution $q_{\theta}(x_{t-1}|x_t)$ with the neural network $\epsilon_{\theta}$.
During inference, the data is sampled from $x_T\sim\mathcal{N}(0, I)$ to $x_0$ step by step with $q$.
DDIM model reinterprets the forward process as $p(x_t| x_{t-1}, x_0)$, which abandons the Markov property, and constructs the sampling method by finding $q_{\theta}(x_{t-1}| x_t, x_0)$. It also provides the theoretical grounding for the step jumping in the sampling process.
The DDIM~\cite{song2020denoising} sampling procedure could be formulated as (deterministic version):
\begin{equation}
    x_{t - \Delta T} = \sqrt{{\bar{\alpha}_{t - \Delta T}} / {\bar{\alpha}_t}} \left( x_t - \sqrt{1 - \bar{\alpha}_t} \epsilon_\theta(x_t, t) \right) + \sqrt{1 - \bar{\alpha}_{t - \Delta T} } \cdot \epsilon_\theta(x_t, t),
\end{equation}
where $\bar{\alpha} = \prod_{s=1}^t (1 - \beta_s)$, and $\Delta T$ is the step interval.
We adapted this notation in the main paper.
The language-guided generation problem is modeled by adding the conditional term $\mathrm{Context}$ that corresponds to the data $x$ and sampling the data by $\epsilon_\theta(x_t, t, \mathrm{Context})$, where $\mathrm{Context}$ is the encoding given by the language model. To balance the generation diversity and the instruction following, conditional and unconditional guidance are integrated, \ie, $\epsilon_\theta = - \omega \epsilon_\theta(x_t, t, \mathrm{Unconditional }) + (1 + \omega)\epsilon_\theta(x_t, t, \mathrm{Context})$, in the \textit{classifier-free guidance}~\cite{ho2022classifier}.

\paragraph{Discussions} In the application in content edit~\cite{couairondiffedit}, masking has been applied to constrain the editing area, \ie\, $x_{t-1} = \mathrm{Mask} \cdot x_\mathrm{t, edit} + (1 - \mathrm{Mask}) \cdot x_\mathrm{t, original}$.
In our 2D generation task, we take a further step to model the interaction between conditional and unconditional guidance, and achieve the new function of re-distributing the spatial strength of the semantic constraint.
In recent years, training techniques that enable single-step generation have been proposed~\cite{song2023consistency, geng2025mean}. Although our method is designed with multi-step diffusion models, our method still has practical value since (1) there is a performance gap between multi-step and single-step diffusion models, and (2) as the adversarial optimization is performed with local perturbations, sampling in small step sizes might provide a better regularization for transfer attacks.

\subsection{Generating Hard Samples from Language}
\label{app:language_guided_gen}

Language-guided adversarial example generation is still under development. 
We categorize two major paradigms.
\ding{182} Perturbing the input text without altering the language-guided data generation model. A line of study has focused on utilizing the language-guided image edition model to evaluate the robustness of visual models~\cite{prabhu2023lance,zhang2024imagenet,DBLP:conf/acl/GhoshEKTSCM24}. Compared to adversarial attacks, they focus more on enriching the dataset than finding hard examples for victim models, while the adversarial capability is relatively limited. Related to the discussion in Section 3.2 of the main paper, overly perturbing the text guidance may lead to unsatisfactory semantic alignment.
To solve this problem, \cite{zhu2024natural} integrates the additional \textit{CLIP} supervision on the generated image.
\ding{183} Altering the language-guided data generation model by introducing the adversary capability. Although it has difficulties in global optimization, this paradigm has an advantage in the fine-grained control of the generated samples.

We focus on the latter paradigm, since (1) Perturbing the text input only will limit the capability of the generation method within the semantic knowledge learned by both discriminative and generative models. \textit{e.g.}, a \textit{CLIP}-based semantic supervision may not attack the \textit{CLIP} model itself.
(2) Optimizing the input text for the entire data generation process with the adversarial feedback is time-consuming. 
These features are crucial from the perspective of \textbf{SuperAlignment}~\cite{yang2024super}, \textit{i.e}, creating efficient methodologies that utilize small models with limited capability to build the oversight framework for larger models.
Also, we believe that the two methods could be implemented into an integrated red-team system, \textit{i.e}, the language-space reject sampling as the high-level generation method and the latent-space optimization as the mid or low-level generation module.
Therefore, our proposed evaluation task focuses on the second paradigm, serving  as an evaluation of the key component of the red-team framework.



\subsection{Transfer Attack Methodology}
\label{sec:transfer_attack}
Transferable attack denotes finding adversarial examples that can attack other unknown or even stronger black-box models, which is practical for revealing real-world AI safety problems.
The general method is to find a surrogate model(s) in the related task and optimize the adversarial example for it with the regularization.
From the perspective of the optimization pipeline, regularization on three modules has been investigated to improve transferability: 
\ding{182} The gradient decent. Momentum is the general method to boost the transferability.
\ding{183} The surrogate model. Model ensemble can effectively enhance transferability by combining gradient features from multiple distinct models~\cite{liu2016delving, gubri2022lgv}.
Constructing a proper objective using the Sharpness-Aware Minimization (SAM)~\cite{foret2020sharpness} or the attention mechanism~\cite{wang2021dual}, to model the cross-model vulnerability, is also beneficial for transferability.
\ding{184} The input transformation. Transfer improvement could be achieved by inserting a random transformation between the current-step adversarial example and the surrogate model's input, the ~\cite{wang2024boosting, lin2024boosting}. This could be regarded as improving the diversity of the surrogate model. 
From the theoretical perspective, decreasing the cooperation within the adversarial pattern~\cite{wang2020unified} and in-manifold constraint~\cite{chen2023theory} is beneficial for transferability.
In the adversarial example generation pipeline, the generative-model-based method, which this work focuses on, could be regarded as a replacement of gradient descent with a better in-manifold constraint.
We construct baselines and evaluation tasks based on this background.

\subsection{3D Generation and Gaussian Splatting}
3D geometric data employs diverse representations that are crucial in 3D generation. These representations can be categorized into three types:  \ding{182} explicit representations, such as point clouds~\cite{yang2019pointflow} and meshes~\cite{wang2018pixel2mesh}, \ding{183}implicit representations, such as Neural Radiance Fields (NeRFs)~\cite{mildenhall2021nerf},
\ding{184}hybrid representations, such as 3D Gaussian~\cite{kerbl20233d}. Currently, 3D Gaussian splatting is widely used in 3D generation due to the geometric editability, high-frequency detail capture capability and real-time rendering efficiency.
Each 3D Gaussian point can be represented by the following parameters: position $x$, spherical harmonics coefficients $c$ , opacity $\alpha$, a rotation matrix $r$, and a scaling matrix $s$. Then, 3D Gaussian points can be projected onto the image plane via the viewing transformation $W$ and the Jacobi affine approximation matrix of the projection transformation matrix in geometry. In terms of appearance, we can calculate the color of every pixel in the 2D image by blending N ordered points overlapping the pixel using spherical harmonics coefficients $c$ and opacities $\alpha$.


With the development of 3D representations, 3D adversarial examples for different 3D structures have also been advanced~\cite{xiao2019meshadv,huang2024towards,li2025uvattack}. Unlike other adversarial methods, which are based on the geometric or texture data of existing objects, for the first time, we realize the reference-free generation of semantically constrained 3D adversarial examples by utilizing language-guided 3D generation models. We implement language-guided 3D adversarial example generation through the ResAdv-DDIM sampler referencing the optimization pipeline of Trellis in latent space and decode into 3D Gaussian formats. Then, we achieve 3D adversarial attacks on 2D target models using 3D Gaussian rendering.

\nopagebreak

\section{Detailed Implementation of InSUR Framework}
\label{sec:detailed_impl}
Adversarial attacks often require reengineering existing tools to achieve new functions, and their implementation is not simple, especially when achieving new characteristics.
Therefore, we provide the design principles and the key techniques in the main paper, and supplement the detailed implementation in this section as a complement.
In the following subsections, we describe the implementation of the \scae\ generator based on the different scenarios, and then provide more details about the \textit{Semantic-abstracted Attacking Evaluation Enhancement}.

\subsection{2D Image Generation}
\label{sec:detailed_impl_2d}
\paragraph{ImageNet Object Generation with Guidance Masking}
For the classical image generation problem, we start with the original DDIM sampling process:
\begin{equation}
    x_{t - \Delta T} = \sqrt{{\bar{\alpha}_{t - \Delta T}} / {\bar{\alpha}_t}} \left( x_t - \sqrt{1 - \bar{\alpha}_t} \epsilon_\theta(x_t, t) \right) + \sqrt{1 - \bar{\alpha}_{t - \Delta T} } \cdot \epsilon_\theta(x_t, t),
\label{eq:diffusion_define}
\end{equation}
where $\bar{\alpha} = \prod_{s=1}^t (1 - \beta_s)$ is the sampling parameter, and $\Delta T$ is the step interval.
As described in section 3.3, masked guidance is adapted as the conditional diffusion guidance, which formulates the denoise function $x_{t - \Delta T} :=f_{\theta, \Delta T}(x_t)$ as:
\begin{equation}
\begin{aligned}
    f_{\theta,  \Delta T}^{(t, \mathrm{Text})} (x_t) &= \sqrt{\frac{\bar{\alpha}_{t - \Delta T}}  {\bar{\alpha}_t}} \left( x_t - \sqrt{1 - \bar{\alpha}_t} \epsilon_\theta(x_t, t, \mathrm{Text}) \right) + \sqrt{1 - \bar{\alpha}_{t - \Delta T} } \cdot \epsilon_\theta(x_t, t, \mathrm{Text}),\\
    \epsilon_{\theta}(x_t, t,  \mathrm{Text}) :&= (1 - M)\cdot  \epsilon_{\theta,  \mathrm{Unconditional}} (x_t, t) + M  \cdot \epsilon_{\theta,  \mathrm{Conditional}} (x_t, t, \mathrm{Text}).\\
    \end{aligned}
\label{eq:guidancemaskapp}
\end{equation}
The guidance mask is defined as a matrix with different values in the border elements:
\begin{equation}
    M_{ij} := \begin{cases}
        M_\mathrm{mid} &\frac{h}{16} \le i < \frac{15h}{16}, \frac{w}{16} \le j < \frac{15w}{16},\\
        M_\mathrm{edge} & \text{otherwise.} \\
    \end{cases}
\end{equation}
In the main paper, we use the simplified notation of $f$ without parameters $\mathrm{Text}$ and $t$ explicitly written.
Here we use the detailed notations.
For the experiments, we adapted $M_\mathrm{mid} = 3.0$ and $M_\mathrm{edge} = 0.3$ in the main experiment and selected $M_\mathrm{edge} = \{0.0, 0.3, 3.0\}$ in the ablation study.

\paragraph{Residual Approximation} Recall that ResAdv-DDIM defines a residual estimation function $g$ for the adversarial feedback of the target model. We construct the step size of $g$ as evenly distributed: 
\begin{equation}
\begin{aligned}
  g_\theta^{(t, \mathrm{Text})}(x_t) &:=  \underbrace{f^{(\Delta T_1, \mathrm{Text})}_{\theta, \Delta T_1} \circ f^{(\Delta T_2 + \Delta T_1, \mathrm{Text})}_{\theta,  \Delta T_2}  \circ \cdots \circ f^{(t, \mathrm{Text})}_{\theta,  \Delta T_k}}_{k \text{ times}}(x_t), \text{ where } \sum_{i=1}^k \Delta T_i = t ,\\ 
  k &:= \lceil \frac{t}{ \lfloor T / K \rfloor }  \rceil ,   \ \ \ 
\Delta T_i(t) = \begin{cases}
\lfloor T/ K \rfloor ,  & 1 < i \le k 
\\
t  \mod  \lfloor T / K \rfloor ,  &  i = 1 \\
\end{cases},\\
\end{aligned}
\label{eq:adaptive_k}
\end{equation}
where $K$ is the maximal iteration number, corresponding to \textit{Iter}$_\mathrm{max}$ in the ablation study in the main paper.
For brevity, we define $T$ as the number of sampling steps of the original DDIM generator and use the sampling interval of the original DDIM generator as the unit of $\Delta T$. We let $t_s$ (default set as $0.75$) be the start step of the adversarial optimization, and the sampling process with adversarial optimization is formulated as:
\begin{equation}
     x_{t-\Delta T}  = \begin{cases}
              f_{\theta,\Delta T}^{(t, \mathrm{Text})} \left(\arg \max_{x_t} \mathcal L_\mathrm{ATK}(\mathcal M \circ g_\theta^{(t, \mathrm{Text})}(x_t))\right), & t \le t_s,\\
              f_{\theta , \Delta T}^{t, \mathrm{Text}}(x_t), & t > t_s.
     \end{cases}
     \label{eq:maximalproblem}
\end{equation}

\paragraph{Collaborating with Guidance Masking} 
To generate attack-related image backgrounds without disturbing the foreground semantics, as the goal of \textit{Context-encoded Attacking Scenario Constrain}, we let the exemplar generation $x_t' \to x_{t-\Delta T}'$ communicate with the adversarial generation $x_t \to x_{t-\Delta T}$. Specifically,  at the beginning of the adv. optimization in step $t_s$, we set the benign sample generated from $x_{t_s}' \leftarrow x_{t_s}$ as the initialization of the constraint anchor. At the end of the adversarial optimization, the optimized background is written back $x_0'\ [M=M_{edge}]_\mathrm{Select}\leftarrow  \frac12(x_0 + x_0') \ [M=M_{edge}]_\mathrm{Select}$ after the generation.

\paragraph{Adaptive Optimization Iteration} 
To improve the efficiency in multi-step diffusion-based adversarial optimization, we implement the adaptive iteration mechanism for ResAdv-DDIM by early-stopping the optimization problem in Eq~\ref{eq:maximalproblem}, which is formulated as:
\begin{equation}
\begin{aligned} 
n := & \begin{cases}
    M, &  t = t_s \lor t < 4,\\
    m,  &   \text{otherwise}.
\end{cases} \\
\mathrm{PerformOptimize} := & \underbrace{(i \le  n \ \land\  \arg\max_{l\not \in  A_\mathrm{Text}}\text{P}(\mathcal M(x_t) = l) >  \xi_1   )}_{\text{If confidence} \le  \xi_1\text{, early stop. Maximal optimize iterations is n.}}\ \lor\ \\ &  \underbrace{(i = 1 \ \land\  \arg\max_{l\not \in  A_\mathrm{Text}}\text{P}(\mathcal M(x_t) = l) >  \xi_2)}_{\text{If confidence}\le \xi_2\text{, do not optimize at the first step.}},
\end{aligned}
\label{eq:flagapp}
\end{equation}
where $i$ is the current iteration number, and $n$ stands for the maximal adversarial optimization iteration in the single diffusion step.
We set $\xi_1 = 0.1$, $\xi_2= 0.01$, and set $m=3$ and $M=10$ for diversified strategy in different sampling steps, \ie, the initial and final steps are set with a higher maximal iteration number $(n=M)$.


\begin{table}[h]
    \centering
    \caption{Analysis of the different setting of $\xi$}
    \label{tab:paramana_xi}
    \resizebox{\linewidth}{!}{
    \begin{tabular}{cccccccccc}
    \toprule
         $\xi_1$&  0.15&  0.15&  0.15&  0.1&  0.1&  0.1&  0.05&  0.05& 0.05
\\
         $\xi_2$&  0.02&  0.01&  0.005&  0.02&  0.01&  0.005&  0.02&  0.01& 0.005
\\
    \midrule
         ResNet50 ACC$\downarrow$&  0.0224&  0.0192&  0.0214&  0.0246&  0.0246&  0.0246&  0.0224&  0.0256& 0.0246
\\
         ViT-B/16 ACC$\downarrow$&  0.2949&  0.2885&  0.2885&  0.2853&  0.2724&  0.2714&  0.2404&  0.2382& 0.2511
\\
    \bottomrule
    \end{tabular}
}
\end{table}

In Table~\ref{tab:paramana_xi}, we present a parameter analysis of $\xi_1$ and $\xi_2$ using ResNet50 as the surrogate model on the Abstract Label Evasion Task. In white-box attacks, all tested threshold values achieve strong performance (accuracy after attack < 3\%). In black-box attacks, smaller $\xi_1$ values yield higher transferability, because lower  $\xi_1$  leads to more aggressive adversarial optimization in early denoising steps—optimization that tends to be more transferable (this is consistent with the design principle of ResAdv-DDIM, which aims to enhance early-step optimization). However, this also increases computational cost. A practical deployment could balance effectiveness and efficiency.

\paragraph{Attack Loss Construction}
For the classification task, given the  incorrect label set ${A}_\mathrm{Text}$, we implement the loss as:
\begin{equation}
    \mathcal{L}_\mathrm{ATK} := -\mathrm{Logits}[\hat L_\mathrm{Tar}].
\end{equation}
Where $\mathrm{Logits}$ is the model's prediction, and $\hat L_\mathrm{Tar}$ is the estimated highest confidence label in the incorrect label set $A_\mathrm{Text}$. We maximize $\mathcal{L}_\mathrm{ATK}$ to perform the attack. For the current denoising step $x_t$, the logits are estimated based on the residual approximation function and the surrogate model, \ie, $\mathrm{Logits}= \mathcal{M} (  g_\theta^{(t, \mathrm{Text})}(x_t))$.
To further eliminate guidance fluctuation, we adapt the fixed $\hat L_\text{tar}$ after its initialization in the initial steps of attack optimization (\textit{target label update delay}).
Note that our adversarial attack method is not designed for the classification task only, and the loss construction could be substituted regarding the specific adversarial example generation task.

The overall \scae\ generation algorithm is constructed by further integrating \textit{global perturbation constraints}, \textit{momentum gradient optimization}, \textit{target label update delay} (after $t < t_k = 0.4T$) through the bi-level optimization.
The pseudo-code is shown in Algorithm~\ref{alg:resadvddim}.
The default configuration for other parameters is coherent with related baselines, \ie,  $\beta=0.5$, $s = 0.7, T=100$.

\begin{center}
\begin{algorithm}[h]
\begin{algorithmic}
\caption{ResAdv-DDIM}
\label{alg:resadvddim}
\REQUIRE Input: $\mathrm{Text}, \mathcal{M}, A_\mathrm{Text}, f, \epsilon$, optimization parameters $T, K,  t_s, t_k, \beta, s$ \par
Init $x_T \sim \mathcal{N}(0, \boldsymbol{I}), v_x \leftarrow 0$, $x_T' \leftarrow x_T$.\par
\For{$t = T, ..., 1$}{
\If{ $t \le t_s$}{
\For{$i = 1,2, ..., n$}{
            \If{ $\lnot \mathrm{PerformOptimize}$ (Eq.~\ref{eq:flagapp})}{\textbf{break \tcc*[f]{Adaptive Iteration}}} 
            \If{ $t = t_s \lor t <t_k$}{
                      $\hat L_\mathrm{Tar}={\arg\max}_{L\in A_\mathrm{Text}}\mathcal M( g_\theta^{\mathrm{(t, Text)}}(x_t))_\mathrm{logits}[L]$ \tcc*[f]{Target Update}\par 
            }
            $v_x \leftarrow \beta v_x - (1 - \beta) \nabla_{x_t} \mathcal{L}_\mathrm{ATK}$. \tcc*[f]{Momentum Updates}\par
            $x_{t}  \leftarrow  x_{t} + s * v_x$. \tcc*[f]{Adversarial Optimization}\par
}
}
\If{ $t = t_s$}{
        $x_{t-1}' \leftarrow x_{t - 1}$   \tcc*[f]{Determine Exemplar on Step $t_s$}\par
}
$x_{t - 1} \leftarrow f_{\theta, 1}^{\mathrm{(t, Text)}}(x_t)))(x_{t})$ \tcc*[f]{DDIM Sampling Step for $x_\mathrm{adv}$}\par 
$x_{t-1}' \leftarrow  f_{\theta,1}^{\mathrm{(t, Text)}}(x_t)))(x_{t}')$.  \tcc*[f]{DDIM Sampling Step for $x_\mathrm{exemplar}$}\par
$x_{t - 1} \leftarrow  x_{t - 1} + \min\{\epsilon, ||x_{t-1}' - x_{t-1}||_2\} \cdot \frac{x_{t-1}' - x_{t-1}}{||x_{t-1}' - x_{t-1}||_2} $. \tcc*[f]{Semantic Constraint}
}
\RETURN $x_0$, $x_{0}'$.
\end{algorithmic}
\end{algorithm}
\end{center}

By counting, the step number of the forward or backward process of diffusion-UNet is less than $(2mt_s + 8M) K + t_s + T$.
The maximal memory cost of the backward process is $\lceil \frac{t_s}{ \lfloor T / K \rfloor } \rceil \times$ the parameters in the feature maps of the diffusion-UNet, combined with other modules, including the surrogate model, the input transformation, and the VAE-decoder in the optimization pipeline.
In practice, the time consumption is significantly lower than the upper bound due to the adaptive iteration mechanism, and the lower bound is characterized by:
\begin{proposition}[Lower Bound of the Diffusion Step]
For \textit{ResAdv-DDIM} with the total sampling step $T$, the parameter of approximate iterations in $g$ as $K$, the timestep of start adversarial optimization as $t_s$, the lower bound of the diffusion step is:
\begin{equation}
    (\frac{K \cdot t_s}{T} + 3) \cdot \frac{t_s}{2}  + T, 
\end{equation}
if $K$ and $t_s$ are set as $K|T$ and $\frac{T}{K} | t_s$.
\end{proposition}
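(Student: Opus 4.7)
The plan is to bound from below the number of UNet forward/backward evaluations (``diffusion steps'') invoked by Algorithm~\ref{alg:resadvddim} by decomposing the total into three disjoint sources: (i) the principal DDIM denoising chain that produces $x_\text{adv}$, (ii) the auxiliary chain that produces the exemplar $x'$, and (iii) the inner adversarial loop at each timestep $t \le t_s$ which evaluates the residual approximation $g_\theta$. Because these three sources are invoked at mutually exclusive points in the algorithm, a valid lower bound is obtained by summing any lower bound on each of them.

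First I would count the base sampling cost. $f_\theta$ is invoked once per DDIM step from $t=T$ down to $t=1$ for $x_\text{adv}$, contributing exactly $T$ UNet calls; once the exemplar is initialized at $t=t_s$ by a copy, the chain $x'_{t-1}\leftarrow f_\theta(x'_t)$ runs for $t_s$ (up to an off-by-one from the initial copy) further steps. Second, I would argue that at every $t\in\{1,\dots,t_s\}$ the algorithm must evaluate $g_\theta(x_t)$ at least once: the $\mathrm{PerformOptimize}$ predicate in Eq.~\ref{eq:flagapp} inspects $P(\mathcal M(x_t)=l)$ through $\mathcal M\circ g_\theta(x_t)$, so even the most aggressive first-iteration break cannot avoid one forward pass through $g_\theta$. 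By Eq.~\ref{eq:adaptive_k}, each such evaluation costs $k(t)=\lceil t/\lfloor T/K\rfloor\rceil$ UNet calls.

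Third, under the divisibility assumptions $K\mid T$ and $(T/K)\mid t_s$, I would set $u:=T/K$ and $m:=t_s/u=Kt_s/T$, both integers. Then $k(t)=j$ holds on every block $t\in\{(j-1)u+1,\dots,ju\}$ of size $u$, so
\begin{equation*}
\sum_{t=1}^{t_s} k(t) \;=\; u\sum_{j=1}^{m} j \;=\; \frac{u\,m(m+1)}{2} \;=\; \frac{K t_s^{2}}{2T}+\frac{t_s}{2}.
\end{equation*}
Adding the three contributions $T$, $t_s$, and $\tfrac{K t_s^2}{2T}+\tfrac{t_s}{2}$ and collecting linear terms gives $T+\tfrac{t_s}{2}\bigl(\tfrac{K t_s}{T}+3\bigr)$, the claimed lower bound.

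The main obstacle I expect is the second step: establishing that every $t\le t_s$ must cost at least one full forward pass through $g_\theta$, which requires inspecting $\mathrm{PerformOptimize}$ carefully enough to rule out trivial shortcuts (e.g.\ caching $\mathcal M\circ g_\theta(x_t)$ across iterations after $x_t$ has been updated by a DDIM step, or skipping the predicate entirely at some $t$). A secondary bookkeeping subtlety is the boundary step at $t=t_s$, where the exemplar is initialized by a copy rather than an $f_\theta$ call; under the divisibility conditions this contributes only a lower-order off-by-one that the stated bound comfortably absorbs in its additive $3t_s/2$.
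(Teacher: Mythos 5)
Your proposal is correct and follows essentially the same route as the paper's proof: the same three-way decomposition of UNet calls (the $T$-step DDIM chain for $x_\mathrm{adv}$, the $t_s$-step exemplar chain, and the residual-approximation sum $\sum_{t=1}^{t_s}\lceil t/(T/K)\rceil$), the same block-by-block evaluation of that sum to $\frac{(T/K)\,m(m+1)}{2}$ with $m=Kt_s/T$, and the same final arithmetic yielding $T+\frac{t_s}{2}\bigl(\frac{Kt_s}{T}+3\bigr)$. The paper phrases the bound by simply ``considering the case of always early-stopping,'' whereas you spend a sentence justifying why at least one $g_\theta$ evaluation per $t\le t_s$ is unavoidable even under caching; this is a slightly more careful justification of the same point, not a different argument.
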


\begin{proof}
From $K \mid T$ and $\frac{T}{K} | t_s$, let $T = K \cdot d$ where $d \in \mathbb{Z}^+$, $t_s = d \cdot m$ where $m \in \mathbb{Z}^+$.
Under the optimal implementation, the forward process in the early-stop judgment and the optimization step are reused.
Consider the case of always early-stopping, the total number of approximate iterations in $g$ is $\sum_{t=1}^{t_s} \lceil \frac{t}{ \lfloor T / K \rfloor }  \rceil $, we have:
\begin{equation}
\begin{aligned}
    \sum_{t=1}^{t_s} \lceil \frac{t}{ \lfloor T / K \rfloor }  \rceil =\sum_{t=1}^{d m} \left\lceil \frac{t}{d} \right\rceil =\sum_{k=1}^{m} \sum_{t=1}^{ d}\left\lceil \frac{kd+t-d}{d} \right\rceil= \sum_{k=1}^{m} \left( d \cdot k \right) 
        = \frac{d m (m + 1)}{2}.
\end{aligned}    
\end{equation}
By combining the total denoising step of $x_\mathrm{adv}$ and $x_\mathrm{exemplar}$ generation, the total step is:
\begin{equation}
    \frac{d m (m + 1)}{2} + t_s + T =  \frac{t_s}{2} \cdot (\frac{K \cdot t_s}{T} + 1)  + t_s + T = (\frac{K \cdot t_s}{T} + 3) \cdot \frac{t_s}{2}  + T.
\end{equation}
This completes the proof.
\end{proof}

\subsection{3D Object Generation}
\label{sec:detailed_impl_3d}

\paragraph{Base 3D Generation Method (Trellis)} : Representing 3D data as matrices is inefficient and impractical due to computational problems.
Our method is developed based on the \textit{Trellis} model, which bridges the gap between 3D structure and the diffusion process with the structured latent (SLAT). The overall generation process could be formulated as:
\begin{equation}
    \begin{aligned}
   \boldsymbol{z_0^{slat}} &= \textrm{Diffusion Sampling}(\epsilon_{slat}, \boldsymbol{z_t^{slat}}, \mathrm{Text}) , \boldsymbol{z_T^{slat}} \sim \mathcal{N}(0, I)\ \\
\mathbf{pos} &= \mathrm{Coords}(\mathcal{D}_{slat}(\boldsymbol{z_0^{slat}})),\   \mathrm{Coords}:\mathbb R^{b \times  h\times w \times d} \to \mathbb R^{b \times n\times 2}\\
\boldsymbol{z_0} &= \textrm{Diffusion Sampling}(\epsilon, \boldsymbol{\boldsymbol{z_T}, \mathbf{pos}},  \mathrm{Text}) , \boldsymbol{z_T} \sim \mathcal{N}(0, I)\  \\
\mathrm{Model}_\mathrm{GS} &= \mathcal{D}_{\mathrm{GS}}(\boldsymbol{z}_0, \mathbf{pos}),\  \mathcal{D}_{\mathrm{GS}}: \{({z}^i, {pos}^i)\}_{i = 1}^{L}  \to \{ \{ ({x}_i^k, {c}_i^k, {s}_i^k, \alpha_i^k, {r}_i^k) \}_{k = 1}^{K} \}_{i = 1}^{L} \\ 
x&= \mathrm{Renderer}_\mathrm{GS}({\mathrm{Model}_\mathrm{GS}},\mathrm{Camera}).
    \end{aligned}
\end{equation}
where $\boldsymbol{z}_0$ and $\boldsymbol{z_0^{slat}}$ are latents sampled by the diffusion model, and are represented by sparse and dense tensors, respectively.  $\mathcal{D}_{slat}$ is the coarse structure decoder, $\mathrm{Coords}$ transforms the voxel to point positions $\mathbf{pos}$, $\mathcal{D}_{\mathrm{GS}}$ is the refined structure decoder that decodes each vertex into multiple Gaussian points, and $\mathrm{Renderer}_\mathrm{GS}$ renders the Gaussian model to 2D images $x$ with the camera parameter.
Since the refined position is also encoded in $\boldsymbol{z_0}$, we implement the proposed ResAdv-DDIM with the noise estimation network $\epsilon$ that models the refined structure.

$ \mathrm{Renderer}_\mathrm{GS}$ is the Gaussian renderer proposed in Gaussian splatting. Due to its advantage in optimization, we select this representation as the intermediate 3D data structure for gradient estimation. Specifically, the 3D Gaussian with center point $p$ can be expressed as the Gaussian function:
\begin{equation}
        G(p) = e^{-\frac{1}{2} p^T \Sigma^{-1} p}
\end{equation}
To get a 2D projection image $x$ from a 3D Gaussian point in a world coordinate with a viewing transformation $W$, the 2D covariance matrix $\Sigma ^{'}$as:
\begin{equation}
    \Sigma^{'}=JW{\Sigma}W^{'}J^{'},
\end{equation}
where $J$ is the Jacobi affine approximation matrix of the transformation matrix.
To render the entire Gaussian model, the color of every pixel in the 2D image $x$ is computed by blending $N$ ordered points overlapping the pixel using the following equation:
\begin{equation}
    C = \sum_{i \in N} c_i \alpha_i \prod_{j = 1}^{i - 1} (1 - \alpha_j),
\end{equation}
where $c_i$ is the color of each point evaluated by spherical harmonics (SH) color coefficients and $(\alpha_i)$ is determined by a 2D Gaussian with covariance $\Sigma$ and optimizable per-point opacity.

\paragraph{Residual Approximation with EoT} We adapt the expectation-over-transformation to bridge the 2D and 3D adversarial generation.
By integrating the renderer, the residual approximation $g$ and the adversarial optimization are represented as:
\begin{equation}
    \begin{aligned}
    g_\theta^{(t, \mathrm{Text})}&(z_t,\mathbf{pos},  \mathrm{Camera}) :=  \\ &\mathrm{Renderer}_\mathrm{GS}\left( \mathcal{D}_{\mathrm{GS}}(f_{\theta, \Delta T_1}^{(\Delta T_1, \mathrm{Text})}  \circ \cdots \circ f_{\theta, \Delta T_k}^{(t, \mathrm{Text})} (z_t, \mathbf{pos}), \mathbf{pos}),\mathrm{Camera}\right) \\
   z_{t-\Delta T} :=&  f_{\theta,\Delta T}^{(t, \mathrm{Text})}\left(\arg \max_{z_t} \mathbb E_{\mathrm{Camera} \sim  P_{\mathrm{Cam}}}\left[ \mathcal L_\mathrm{ATK}(\mathcal M( g_\theta^{(t, \mathrm{Text})}(z_t, \mathrm{Camera}, \mathbf{pos})))\right]\right)
\end{aligned}
\end{equation}

The camera settings are sampled based on the original configuration in \textit{Trellis} framework that defines $P_\mathrm{Cam}$, \ie:
\begin{equation}
    \begin{aligned}
\Delta_{\text{yaw}} & \sim \mathcal{U}\left(-\frac{\pi}{4}, \frac{\pi}{4}\right), \\
\Delta_{\text{pitch}} & \sim \mathcal{U}\left(-\frac{\pi}{4}, \frac{\pi}{4}\right), \\
\mathbf{eye}&  = 2 \cdot \begin{bmatrix}
\sin(\theta_{\text{yaw}} + \Delta_{\text{yaw}}) \cos(\theta_{\text{pitch}} + \Delta_{\text{pitch}}) \\
\cos(\theta_{\text{yaw}} + \Delta_{\text{yaw}}) \cos(\theta_{\text{pitch}} + \Delta_{\text{pitch}}) \\
\sin(\theta_{\text{pitch}} + \Delta_{\text{pitch}})
\end{bmatrix},\\
\mathbf{R} & = \text{LookAt}(\text{From=}\mathbf{eye}, \text{To=}(0,0,0),\text{UpAxis=Z} ), \\
\mathrm{Extrinsics} &= \begin{bmatrix}
\mathbf{R} & \mathbf{t} \\
\mathbf{0} & 1
\end{bmatrix} , \\
\mathrm{Camera} &= \left [\mathrm{Extrinsics},\mathrm{Intrinsics}_{fov=40^\circ }\right],
\end{aligned}
\end{equation}
where the eye position $\mathbf{eye}$ is sampled on the sphere with $r=2$, and the camera is toward the coordinate origin.
Based on the practice of expectation-over-transformation(EoT)~\cite{athalye2018synthesizing}, the inner-loop optimization is performed by averaging the gradient from different sampled cameras:
\begin{equation}
\begin{aligned}
        \mathrm Cam[1, 2, ..., E] &\leftarrow \mathrm{Sample}(P_\mathrm{Cam}). \\
        grad& = \frac 1 E\sum_{i=1}^E \nabla_{z_t} \mathcal{L}_\mathrm{ATK} (\mathcal M \circ g_\theta^{(t, \mathrm{Text})}(z_t,\mathbf{pos},  \mathrm{Camera}) )\\
         v_z &\leftarrow \beta v_x + (1 - \beta)  \cdot grad\\
            \boldsymbol z_{t} & \leftarrow \boldsymbol  z_{t} + s * v_z, 
\end{aligned}     
\end{equation}
where $E$ is the step of EoT with the default value $1$.
For the optimization-related configurations, we maintained most of the parameters in 2D generation: $\epsilon=10$, $K=4$, $M = m = 30/E$, $\xi_1 = \xi_2 = 0.01$, $\beta=s=0.5$ and $t_s = 0.75T$. The target label delay update mechanism is not applied. Diffusion-related configurations are coherent with the original pipeline, \ie, $T=50$.

\begin{table}[h]
\centering
\caption{GPU Time Consumption for Adversarial Optimization}
\label{tab:gpu_time}
\begin{tabular}{lcc}
\toprule
\textbf{Process} & Single Iteration (\si{\milli\second})& Entire Optimization (\si{\milli\second}) \\
\midrule
Denoise Sampling           & 123.321 $\pm$ 36.878  & 2994.933 $\pm$ 269.841 \\
Residual Approximation     & 147.256 $\pm$ 76.243  & 7040.238 $\pm$ 8915.112 \\
Gaussian Decoding          & 21.547 $\pm$ 11.868   & 1029.394 $\pm$ 1185.636 \\
Gaussian Rendering         & 28.556 $\pm$ 7.706    & 1364.221 $\pm$ 1732.996 \\
Surrogate Model            & 6.777 $\pm$ 0.745     & 323.782 $\pm$ 423.503 \\
Backward Process           & 208.471 $\pm$ 78.039  & 6261.577 $\pm$ 12119.871 \\
\bottomrule
\end{tabular}
\end{table}

\paragraph{Overall time Consumption} We conducted a timing analysis on a single GPU using 100 labels from the ImageNet-Label dataset. The results are shown in Tabel~\ref{tab:gpu_time}. Gradient computation and residual approximation account for the majority of the computational cost. Meanwhile, \textit{EoT} sampling is implemented via multi-view rendering using the Gaussian splatting renderer, while the rendering parameter computation does not occupy CUDA execution time.
The variation in generation time arises from the adaptive number of optimization steps and differences in the size of sparse tensors in the Trellis framework. Specifically, in single iteration, fluctuation in computation speed is primarily caused by inconsistent data sizes in the Sparse Tensor. For the entire optimization, the fluctuation in computation speed is also attributed to the adaptive optimization steps. Specifically, the average count of optimization steps for the inner optimization is: 47.810 (std=63.566).
Attributed to the proposed early stopping mechanism, easier attack cases require fewer iterations and thus less time. The large variance in generation time reflects significant differences in attack difficulty across tasks, which highlights the effectiveness of our adaptive step design.

\subsection{Construction and Analysis of Semantic-Abstracted Evaluation Task}
\label{sec:app_task_construction}
\paragraph{Abstracted Label Set Construction}
As described in the main paper, we construct the coarse label set based on three steps: (1) hyponymic graph based on the hyponymic relation defined by \textit{WordNet}, (2) select the proper abstraction level, (3) define the attack goal as the evasion attack on the abstracted label.
Specifically, the hyponymic graph is defined as a directed graph $G = (V,E),\  E \subset V \times V$, where each vertex $V$ represents a word, each edge $e: v_1 \to v_2$ denotes that the word $v_2$ is a hypernym of $v_1$.
We denote the ImageNet label set as the vertex set $\mathbb L\subset V$, and construct the subgraph $G' = (V', E')$ as
\begin{equation}
\begin{aligned}
    V' = \mathrm{TC}_G(L) &:= \left\{ v \in V \mid \exists u \in L, \exists k \in \mathbb{N}, (u \xrightarrow{k} v) \in E^+ \right\}\\
    G'  &:= (V', E') \rangle \quad \text{where} \quad E' = \left\{ (u, v) \in E \mid u, v \in L' \right\}\\
\end{aligned}
\end{equation}
Where $\mathrm{TC}$ denotes the transitive closure and $G'$ is the induced subgraph.
Next, we select the abstraction level.
We first annotated and filtered out the following over-coarse labels from the vertex set $V'$, resulting in the label set $L'$. We also filter out polysemous tags listed in Figure~\ref{fig:polysemous tags}.

\begin{figure}
    \centering
    \begin{tcolorbox}
    entity, physical\_entity, object, ungulate, whole, animal, organism, vertebrate, vascular\_plant, instrumentality, mammal, placental, carnivore, vehicle, herb, self-propelled\_vehicle, amphibian, canine, domestic\_animal, electronic\_equipment, device, container, covering, conveyance, commodity, monkey, abstraction, consumer\_goods, structure, invertebrate, artifact, ruminant, invertebrate, matter, wheeled\_vehicle, arthropod, causal\_agent, reptile, equipment, implement, even-toed\_ungulate, garment, game, diapsid, primate, protective\_covering, relation, restraint, ape, natural\_object, psychological\_feature, geological\_formation, attribute, starches, obstruction, aquatic\_vertebrate, old\_world\_monkey, process, barrier, new\_world\_monkey, substance, communication, establishment, feline, tool, clothing, food, solid, piece\_of\_cloth, brass, screen, shelter, grouse, machine, vessel, craft, arachnid, fabric, durables, thing, place\_of\_business, reproductive\_structure, plant, event, material, fastener, woody\_plant, measure, home\_appliance, mechanism, seafood, cognition, part, organ, group, game\_equipment, shape, rodent, military\_vehicle, area, mechanical\_device, substance, nutriment, amphibian, salamander, support, produce, natural\_elevation, mollusk, crustacean, aquatic\_mammal, signal, indefinite\_quantity, act, public\_transport, hand\_tool, medium, box, state, kitchen\_appliance, edible\_fruit, toiletry, shellfish, ware, utensil, fur, foodstuff, cloak, big\_cat, footwear, ball, instrument, person, measuring\_instrument, sports\_equipment, stick, worker, insect, computer, lepidopterous\_insect, vine
\end{tcolorbox}
    \caption{Overly polysemous tags filtered out}
    \label{fig:polysemous tags}
\end{figure}

Then, for each linear path, we select the node with the lowest height (or has more than one direct hyponym) as a candidate label, constraining the upper bound of the abstracting level, and eliminate descendant labels of the candidate labels to constrain the lower bound of the abstracting level.
These labels are represented as the $\mathrm{AbstractedLabel}$.
We construct the \scae\ generation task by evading the $\mathrm{AbstractedLabel}$, which is formulated as:
\vspace{-0.1em}
\begin{equation}
    \begin{aligned}
    \mathit{find}&  \  x_\mathrm{adv} \in \mathcal{S}(\mathrm{Text})  \ \mathrm{s.t.} \ \mathcal{M}(x_\mathrm{adv})\in A_{\mathrm{Text}}, \\
        \mathrm{Text} &:= \text{"Realistic image of } \mathrm{[AbstractedLabel]} \text{, specifically,  } \mathrm{[ label]} \text{"},  \\
        A_\mathrm{Text} &:=  \{\mathrm{label}_\mathrm{Adv} \mid  \mathrm{AbstractedLabel} \not \in \mathbf{Ancestors} (\mathrm{label}_\mathrm{Adv})  \},\\
    \mathrm{AbstractedLabel}  &\in  \{c \in \mathbb L'\mid   \exists l  \in \mathbb L \text{ s.t. }  c \in \mathbf{Ancestors}(l) \land \mathbf{Count}_{\mathbf{Children}}(c) > 0 \}, 
    \end{aligned}
\end{equation}
For simplicity, we use the term $c \in \mathbf{Ancestors}(l)$ to denote there exists a path from $l$ to $c$, and the term $\mathbf{Count}_{\mathbf{Children}}(c) > 0$ to denote the in-degree of $c > 0$.
We select the abstracted label with more than $3$ hyponym Image labels as the label set in the experiment. The abstracted labels and the corresponding hyponym imagenet labels are shown in Table~\ref{tab:evasion_label}.

\paragraph{Discussions on the ASR$_\mathrm{relative}$ metric.}
We design $ASR_\mathrm{relative}$ to match the goal of facilitating the red-teaming framework, as described in Section 3.1 of the main paper
The following proposition describes the relation between the $ASR_\mathrm{relative}$ evaluation metrics of \scae\ generator and the application scenario of the multi-round reject-sampling-based data generation pipeline.

\begin{proposition}[$ASR_\mathrm{relative}$ characterize the upper-bound probability of the successful attack] For any adversarial sampling algorithm $K$ that generates the \scae\ with $ASR_\mathrm{relative}=p$ on the evaluated black-box model $\mathcal{M}_T$ and the surrogate model $\mathcal M_S$, there exists an attack algorithm that \textbf{achieves the successful attack with at least probability} $p \cdot (1 - \epsilon)$ on $\mathcal M_T$, the average generation times less than $1/p_{{s}}$, and the maximal generation times $\left\lceil \frac{\log(\epsilon)}{\log(1 - p_s)} \right\rceil$, for any $0<\epsilon<1$, if the following assumption holds true:
\begin{enumerate}
    \item \textbf{Non-Positive Attack}: For the sample generated from the instruction $\mathrm{Text}$ and the generation algorithm does not perform adversarial optimization towards misleading $\mathcal M$ towards the labels corresponding to $\mathrm{Text}$, the correct classification leads to semantic alignment, \ie, $\mathcal M(x) \in L_\mathrm{Text} \to (x+\delta) \in \mathcal S(\mathrm{Text})$, where $L_\mathrm{Text}$ is the correct label corresponding to $\mathrm{Text}$ and $L_\mathrm{Text} = \overline{A_\mathrm{Text}}$, and $\delta$ is a small perturbation with $\delta > \langle x_{\mathrm{exemplar}}, x_\mathrm{adv}\rangle$.
    \item     The sampling algorithm can generate a sample satisfying $\mathcal{M}(x) \in L_\mathrm{Text}$ at least a probability of $p_\mathrm{s}$ by only accessing the instruction $\mathrm{Text}$.
    \item  \textbf{Adjacency Assumption}: $P(\mathcal{M}_T(x_\mathrm{adv}) \in L_\mathrm{Text} \mid \mathcal{M}_T(x_\mathrm{exemplar}) \in L_\mathrm{Text} ) > P(\mathcal{M}_T(x_\mathrm{adv}) \in L_\mathrm{Text} \mid \mathcal{M}_S(x_\mathrm{exemplar}) \in L_\mathrm{Text} )$, where $\mathcal M_S$ is the surrogate model, $\mathcal M_T$ is the target model.
    \item \textbf{Consistency Assumption}: $ASR_\mathrm{relative} = P(\mathcal{M}_T(x) \in A_\mathrm{Text} \mid \mathcal{M}_T(x) \in L_\mathrm{Text} )$ for the instruction $\mathrm{Text}$ given in the attack scenario.
\end{enumerate}
\end{proposition}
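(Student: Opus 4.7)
The plan is to construct a reject-sampling attack algorithm from $K$ together with its non-adversarial counterpart guaranteed by Assumption~2, and to analyze it by a standard geometric-distribution argument. Concretely, I would specify the procedure as follows: in each round, (i) invoke the non-adversarial sampler to draw an exemplar $x_\mathrm{exemplar}$, (ii) test whether $\mathcal{M}_T(x_\mathrm{exemplar}) \in L_\mathrm{Text}$, and (iii) if the test succeeds, invoke $K$ to draw $x_\mathrm{adv}$ coupled to $x_\mathrm{exemplar}$ via the semantic constraint of Eq.~\ref{eq:sc} and output it; otherwise proceed to the next round, capping the loop at $N = \lceil \log(\epsilon)/\log(1-p_s)\rceil$ rounds.

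The first block of the argument will bound the number of rounds. Since each round independently produces a correctly-classified exemplar with probability at least $p_s$, the first-success time is geometric with expectation at most $1/p_s$, matching the claimed average bound. The probability that no round succeeds within $N$ trials is at most $(1-p_s)^N \le \epsilon$ by the choice of $N$, so the event ``some round finds a correctly classified exemplar'' occurs with probability at least $1-\epsilon$, matching the maximum-rounds bound.

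The second block identifies the conditional success probability. By Assumption~1 (non-positive attack) together with the $\epsilon$-proximity in Eq.~\ref{eq:sc}, the event $\mathcal{M}_T(x_\mathrm{exemplar}) \in L_\mathrm{Text}$ already implies $x_\mathrm{adv} \in \mathcal{S}(\mathrm{Text})$, discharging the semantic-constraint half of the attack definition. The remaining model-evasion condition $\mathcal{M}_T(x_\mathrm{adv}) \in A_\mathrm{Text}$ is exactly the posterior probability identified with $ASR_\mathrm{relative}=p$ by the consistency Assumption~4. Multiplying the two independent lower bounds would yield an overall success probability of at least $p(1-\epsilon)$. If one instead filters at step~(ii) by $\mathcal{M}_S$, which is the more realistic black-box setting, the adjacency Assumption~3 shows that surrogate-conditioned adversarial success is at least as large as target-conditioned success, so the same $p(1-\epsilon)$ lower bound still goes through.

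The hard part will be the rigorous coupling between $x_\mathrm{exemplar}$ and $x_\mathrm{adv}$ produced by $K$: the consistency assumption is stated in shorthand with the same symbol $x$ on both sides of the conditional, and turning this into a clean joint-distribution statement under the generator's law requires pinning down the probability space of a single round before any geometric bookkeeping can be applied. I would therefore open the write-up by fixing that joint law, verifying that Assumptions~1 and~4 together yield the stated one-round conditional success probability, and only then stacking the geometric and independence arguments for the round count to obtain the multiplicative $p(1-\epsilon)$ bound.
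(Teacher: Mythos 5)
Your proposal matches the paper's proof in all essentials: you construct the same reject-sampling (``Las Vegas'') loop capped at $N=\lceil\log\epsilon/\log(1-p_s)\rceil$ rounds, use the same geometric argument to bound the expected round count by $1/p_s$ and the failure probability by $\epsilon$, discharge the semantic constraint with Assumption~1, and chain Assumptions~3 and~4 to convert surrogate-conditioned into target-conditioned success probability, yielding the $p(1-\epsilon)$ bound. The only presentational deviation is that you first frame the filter test against $\mathcal{M}_T$ and then pivot to $\mathcal{M}_S$; the paper filters on $\mathcal{M}_S$ from the outset (as it must, since $\mathcal{M}_T$ is black-box), but this does not change the substance, and your closing remark about the loosely specified joint law of $(x_\mathrm{exemplar},x_\mathrm{adv})$ correctly identifies the point the paper also glosses over.
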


\begin{proof}  
We construct the \textit{Las Vegas}-style sampling algorithm with the surrogate model $\mathcal M_S$. The sampling algorithm is re-run if $\mathcal{M}_S(x_\mathrm{exemplar}) \not\in L_{\text{Text}}$, until it reaches the upper-bound iteration $\left\lceil \frac{\log(\epsilon)}{\log(1 - p_s)} \right\rceil$.

For the case of $\mathcal M_S(x_\mathrm{exemplar}) \in L_{\text{Text}}$, based on assumption (1), $x_ \mathrm{adv}\in \mathcal S(\mathrm{Text})$. Based on assumptions (3) and (4), we have
\begin{equation}
\begin{aligned}
        P(\mathcal M_T(x_\mathrm{adv}) \in A_\mathrm{Text}) &=  P(\mathcal M_T(x_\mathrm{adv}) \in A_\mathrm{Text} \mid \mathcal M_S(x_\mathrm{exemplar})  \in L_{\text{Text}}) \\
        &= 1 -   P(\mathcal M_T(x_\mathrm{adv}) \in L_\mathrm{Text} \mid \mathcal M_S(x_\mathrm{exemplar})  \in L_{\text{Text}}) \\
        & > 1 - P(\mathcal M_T(x_\mathrm{adv}) \in L_\mathrm{Text} \mid \mathcal M_T(x_\mathrm{exemplar})  \in T_{\text{Text}})  \ \  \ \ \text{(by Assumption (3))}  \\
        & = P(\mathcal M_T(x_\mathrm{adv}) \in A_\mathrm{Text} \mid \mathcal M_T(x_\mathrm{exemplar})  \in T_{\text{Text}}) \\ 
        & =   ASR_\mathrm{relative} = p \ \  \ \ \text{(by Assumption (4))}  \\
\end{aligned}
\end{equation}

Therefore, with probability $p$, $x_\mathrm{adv}$ is a successful attack. Since $P(\mathcal M_S(x_\mathrm{exemplar}) \in L_{\text{Text}}) > p_s$, the final success attack rate is :
\begin{equation}
    P_{\text{final}} = p \cdot (1 - \left(1 - p_s\right)^{\left\lceil \frac{\log(\epsilon)}{\log(1 - p_s)} \right\rceil}) >  p \cdot (1 - \left(1 - p_s\right)^{log_{1 - p_s}\epsilon}) = p \cdot (1 - \epsilon)
\end{equation}
The expected execution time of the sampling is:
\begin{align*}
\mathbb{E}[T] =\sum_{k=1}^{{\left\lceil \frac{\log(\epsilon)}{\log(1 - p_s)} \right\rceil}} (1-p_s)^{k-1}p_s< \sum_{k=1}^{\infty} (1-p_s)^{k-1}p_s
= p_s \cdot \frac{1}{(1-(1-p_s))^2} 
= \frac{1}{p_s}
\end{align*}
This completes the proof.
\end{proof}

We acknowledge that the evaluation might still not be adequate as it requires the assumption of \textit{Non-Positive Attack}. However, the defect of the original non-reference evaluation is already shown in our experiments (detailed in Appendix~\ref{sec:inadequte}).

\nopagebreak
\section{Detailed Experiment Settings}
\label{app:exp_settings}
\subsection{2D Experiment Settings}

\paragraph{Baseline Descriptions}
Our baseline method is constructed based on the categorization of the transfer attack method in Appendix~\ref{sec:transfer_attack} and based on the discussions in Appendix~\ref{app:language_guided_gen}.
Since the proposed module belongs to the intersection of the diffusion-based adversarial attack and the optimization methodology, we select the classical MI-FGSM~\cite{dong2018boosting}, which is the base method of recent transfer attacks, and three diffusion-based adversarial optimization methods that are recently proposed and are suitable for \scae\ , including AdvDiff~\cite{dai2024advdiff}, VENOM~\cite{kuurila2025venom} and SD-NAE~\cite{lin2023sd}.
SD-NAE applies the gradient back-propagation optimization over the full diffusion steps, with the optimization formula as follows. It belongs to the first optimization paradigm discussed in Section~\ref{sec:resadvddim}

\begin{equation}
    \max_\mathrm{Text Embedding} \mathcal L_\mathrm{ATK}( \mathcal M(\underbrace{f^\mathrm{Text Embedding}_{\theta, \Delta T} \circ f^\mathrm{Text Embedding}_{\theta, \Delta t}  \circ \cdots \circ f^\mathrm{Text Embedding}_{\theta, \Delta t} }_{T / \Delta T  \text{ times}} (x_T) )),
    \label{eq:sdnae}
\end{equation}
AdvDiff and VENOM alter the latent embedding $x_t$ during the diffusion denoising process, and belong to the second optimization paradigm discussed in Section~\ref{sec:resadvddim}. Compared to AdvDiff, VENOM applied the additional momentum mechanism and the early-stopping mechanism to the optimization process. Also, it tries to resample $x_T$ based on the adversarial direction of $x_0$ when the optimization fails on the surrogate model.
Our method, besides the technical improvement discussed in the main paper, adds a more fine-grained scheduling of optimization steps based on the new optimization paradigm.
And we do not add the resample mechanism since it is not redundant if our method is used as a sampling module in an attack/data generation system that could perform reject sampling based on the application scenarios.

We integrate an input-transformation-based method (DeCoWA~\cite{lin2024boosting}) as the surrogate model to evaluate the collaboration capability of our method and other methods.
We set the \textit{num\_warping=2} for diffusion-based and our attacks, and \textit{num\_warping=10} for MI-FGSM. The latter setting is consistent with the overall attack pipeline evaluated in the \textit{DeCoWA} paper.

For the diffusion baselines, AdvDiff adapts the classifier guidance and takes the image class as the input of the diffusion model, while VENOM and SD-NAE adapt the classifier-free guidance.
SD-NAE alters the selected text embedding by solving the maximization problem described in the main paper, and therefore $\max \mathcal L_\mathrm{ATK}( \mathcal M({f_{\theta, \Delta T} \circ f_{\theta, \Delta t}  \circ \cdots \circ f_{\theta, \Delta t} } (x_T) )),$
Advdiff adapts the approximated optimization $ '_{t-\Delta T} =  f_{\theta,\Delta T} (\arg\max_{x'_t} \mathcal L_\mathrm{ATK}(\mathcal M( x'_t)))$, and VENOM introduces conditional optimization and momentum mechanisms on it to stabilize the optimization.
The diffusion model applied in VENOM and SD-NAE is \textit{bguisard/stable-diffusion-nano-2-1}, and \textit{latent-diffusion/cin256-v2} is applied for AdvDiff.
We implemented our code based on the baselines VENOM and SD-NAE, and adapted the same diffusion model for consistent evaluation.
In principle, our method can scale to larger models and is evaluated on the \textit{Trellis} 3D generation model.

\paragraph{Surrogate and Target Models} As described in the main paper, we adapt the target model set as $\mathcal{T}=\{$\textit{ResNet50}~\cite{he2016deep}, \textit{ViT-B/16}~\cite{dosovitskiy2020image}, \textit{ConvNeXt-T}~\cite{liu2022convnet}, \textit{ResNet152}, \textit{InceptionV3}~\cite{szegedy2016rethinking}, \textit{Swin-Transformer-B}~\cite{liu2021swin}$\}$, and the surrogate model set as \{ResNet50, ViT-B/16, ConvNext-T, ResNet50+DeCoWA\}.


\paragraph{Loss Function} We employ the same loss function as the original implementation for the baseline methods.
For our model, we set the loss function in both 2D and 3D \scae\ generation task as:
\begin{equation}
    \mathcal{L}_\mathrm{Atk} (\mathrm{logits}, A_\mathrm{Text}, L_{Tar})= 
 \mathrm{LogSoftMax(logits)}[{L_\mathrm{Tar}}] - \frac{1}{|A_{\mathrm{Text}}|}\sum_{i \in A_\mathrm{Text}}\mathrm{LogSoftMax(logits)}[i],
\end{equation}
Where $L_\mathrm{Tar}$ is the currently selected label (the label with the highest confidence in the set $A_{Text}$), and log softmax denotes $\log\frac{e^{x}}{\sum e^{x}}$.

\paragraph{Image Quality Assessment Metrics}
To evaluate semantic constraints, the pairwise semantic metric is proposed to measure the similarity between $x_{\mathrm{exemplar}}$ and $x_{\mathrm{adv}}$, which defines as follows in the main body of our work:
\begin{equation}
\begin{aligned}
\text{SemanticDiff} _\mathcal{S} &= \langle x_{\mathrm{exemplar}},  x_{\mathrm{adv}} \rangle_\mathcal{S} , 
\end{aligned}
\end{equation}
$\mathcal{S}$ is a visual similarity metric; we employed LPIPS and MS-SSIM for evaluation.
Parameters of the evaluation metrics are adapted as common practice.
For MS-SSIM, we adapt $kernelsize=11$ and $\sigma=1.5$. For LPIPS, we adapt \textit{AlexNet} as the local feature extractor. For Clip$_Q$, we use the implementation of \textit{piq}~\cite{kastryulin2022pytorch} and adapt \textit{openai/clip-vit-base-patch16} as the image embedding extractor.

 \subsection{3D Experiment Settings} 

 This section details the comprehensive framework established for the evaluation of 3D video generation models, encompassing dataset preparation, video synthesis methodologies for both benign and adversarial examples, and the metrics employed for performance assessment. 

 \paragraph{Dataset Preparation} 
 The ImageNet dataset, while extensive, contains labels with fine-grained semantic distinctions that can be challenging for text-to-3D video generation models to differentiate effectively. A coarse-graining procedure was applied to the Original Imagenet labels to address this. 

 Specifically, a predefined mapping, detailed in Table \ref{tab:evasion_label}, was utilized to merge semantically similar labels. This process involved replacing the Original ImageNet labels with their corresponding Abstracted Labels. The resultant collection of these processed Abstracted Labels served as the prompt dataset for the subsequent video generation tasks. Let $\mathbb L$ be the set of Original ImageNet labels and $\mathcal{T}_{coarse}$ be the set of Abstracted Labels. The mapping function $M: \mathbb L \rightarrow \mathcal{T}_{coarse}$ transforms each Original ImageNet label to its Abstracted Label. The set of prompts used for generation is $\mathcal{P} = \{ t | t \in \mathcal{T}_{coarse} \}$. 

 \paragraph{3D Video Generation} 
Two categories of video samples were generated: clean samples and adversarial examples. 
Clean video samples were synthesized using the TRELLIS\cite{xiang2024structured} model. For each prompt $p \in \mathcal{P}$, a corresponding clean video $V_{clean}$ was generated.
Adversarial examples were generated based on the TRELLIS\cite{xiang2024structured} model (version \textit{TRELLIS-text-base}), employing a ResNet-50 model, pre-trained on ImageNet, as the surrogate model for guiding the adversarial attack. The generation of these adversarial examples was performed using our proposed methodology. During each generation instance, an abstracted text label $p \in \mathcal{P}$ was used as the input prompt. The target label for the attack was set to any Original ImageNet label $l_{target} \in \mathbb L$ such that $M(l_{target}) = p$. A constant perturbation strength, denoted as $\epsilon$, was maintained at $10.0$ across all adversarial generation processes.
All videos are captured by the original rendering pipeline, \ie, a camera surrounding the object.


 \paragraph{Evaluation Metrics} 
 The evaluation of the generated videos involved a frame-by-frame analysis using a pre-trained ResNet50 classifier. 

 For a given video $V$, consisting of $N$ frames $\{f_1, f_2, \dots, f_N\}$, each frame $f_i$ was individually classified by the ResNet-50 model. This yields a sequence of Original Imagenet labels, $c_i = \text{ResNet50}(f_i)$. Each $c_i$ was then mapped to its Abstracted Label $t_i = M(c_i)$. The overall model prediction for the video $V$, denoted as $P_V$, was determined by the mode of these frame-level Abstracted Label predictions: 
 $$P_V = \text{mode} (\{t_1, t_2, \dots, t_N\})$$ 
 where $\text{mode}(\cdot)$ returns the most frequently occurring element in the set. 

 A video $V$ was deemed correctly classified if its model prediction $P_V$ matched its ground truth Abstracted Label $G_V$. It was observed that, under certain parameter configurations (particularly for varying $K$), a minority of video generation attempts might fail. To ensure a rigorous and controlled comparison, a data curation step was implemented. The intersection of successfully generated videos across all conditions – clean samples ($V_{clean}$) and all sets of adversarial examples ($V_{adv}^{(0)}, V_{adv}^{(1)}, V_{adv}^{(2)}, V_{adv}^{(3)}, V_{adv}^{(4)}$) – was taken. Only videos present in this intersection were considered for the final evaluation. This ensures that performance metrics are calculated over an identical set of video instances, thus isolating the impact of the varied adversarial generation parameters. 

 Following the procedures outlined above, the Accuracy (ACC) and Attack Success Rate (ASR) were calculated. The specific mathematical formulations ASR are provided in the main body of this work.

\nopagebreak

\section{Detailed Results and Discussions}

\subsection{Transfer Attack Analysis}
\label{sec:trans_detail}

\paragraph{Experimental settings} We select VENOM, MI-FGSM, SD-NAE, and AdvDiff as baseline methods, employing four surrogate models: ResNet50, DeCoWa, ConvNext-T, and ViT-B/16. Six target models are evaluated: ResNet50, ResNet152, ConvNext-T, ViT-B/16, Swin-B, and InceptionV3. For Abstracted Label tasks, our method adopts four different perturbations \(\epsilon = \{2, 2.5, 3, 4\}\), while  Original Imagenet label tasks use \(\epsilon = \{1.5, 2, 2.5, 3\}\). For each surrogate-target model pair, adversarial examples are crafted using both our method and baselines. Evaluation metrics include Attack Success Rate (ASR) , Accuracy (ACC) and LPIPS (lower values indicate better perceptual quality).

\paragraph{Data presentation} Due to the inconsistency between different surrogate-target model pairs, we chose to present the experimental results using subplots. The x-axis represents the selected surrogate models, and the y-axis represents the attacked target models, with a total of 24 subplots. The experimental results are shown in Figure~\ref{fig:ASR-multi},~\ref{fig:ASR-single},~\ref{fig:ACC-multi},~\ref{fig:ACC-single}.Figure~\ref{fig:ASR-multi},~\ref{fig:ACC-multi} display the relationship between LPIPS and ASR/ACC for examples generated by different methods in the abstracted label task. Figure~\ref{fig:ASR-single},~\ref{fig:ACC-single} show the relationship between LPIPS and ASR/ACC for examples generated by different methods in the original Imagenet label task.
In the figures, data points of different colors represent different methods. The data points of our method under varying perturbation strengths are connected by lines, and its trend is fitted with a black dashed line.
As a supplement, the numerical results with the standard deviation of the final metric over $6$ random seeds are shown in Table~\ref{tbl:appendix_original} and Table~\ref{tbl:appendix_original}, demonstrating the stability of the results.

\paragraph{Analysis}Based on the observations from Figure~\ref{fig:ASR-multi}, ~\ref{fig:ASR-single}, the following conclusions can be derived:
\begin{itemize}
    \item In our method, as the perturbation parameter $\epsilon$ gradually increases, ASR also rises, but the LPIPS increases as well. The relationship between ASR and the Natural logarithm of LPIPS essentially forms a straight line with a positive slope.
    
    \item The adversarial examples produced by SD-NAE (yellow markers) exhibit higher LPIPS in almost all cases, indicating that this method introduces more noticeable perturbations. However, compared to other baseline methods, SD-NAE does not always achieve a higher attack success rate.
    
    \item The adversarial examples produced by MI-FGSM (purple markers)have lower LPIPS than SD-NAE, but still significantly higher than our proposed attack method.
    
    \item The LPIPS of adversarial examples generated by VENOM (green markers) is comparable to our method. However, at similar LPIPS levels, our method achieves a higher ASR in most cases.
    
    \item AdvDiff (blue markers) produces adversarial examples with the lowest LPIPS, indicating better stealthiness. However, its attack success rate is significantly lower than all other methods.
\end{itemize}
\paragraph{Conclusion}
To summarize, \textbf{in all 48 settings} of this study, except for two cases (the first row, first column and first row, fourth column in Figure~\ref{fig:ASR-multi}) where our method was slightly worse than the VENOM method, the remaining ASR-LPIPS curves of our method were all located above and to the left of the comparison methods. 
In Figure~\ref{fig:ACC-multi}, ~\ref{fig:ACC-single}, all the methods, except for the VENOM method in the two subfigures of Figure~\ref{fig:ACC-multi}(the first row and first column, and the first row and fourth column), are on the ACC-LPIPS curves of our method.
This means our method achieved higher attack success rates at the same LPIPS levels. These results clearly demonstrate that our method outperforms the comparison methods in most cases \textbf{(46/48)}.
In addition, these exceptions both occurred in white-box attack scenarios, which were not our primary focus. VENOM achieved higher white-box accuracy by repeatedly resampling through rejection, while we treated this as a module separate from the Adversarial Sampling algorithm. 

Therefore, our \textbf{InSUR} framework shows significant superiority in adversarial transferability.

\subsection{Attack Robustness Analysis}
\paragraph{Experimental settings}: This experiment discusses the performance of our method compared to baseline methods when facing adversarial defenses. The defense methods used are JPEG and DiffPure. JPEG applies lossy compression to images (with a quality factor of 75 in this experiment), while DiffPure removes adversarial perturbations by feeding samples into a diffusion model for regeneration.

\paragraph{Data presentation}:
Figure~\ref{fig:JPEG_ASR.pdf},~\ref{fig:DiffPure_ASR.pdf} show the results of our method and comparison methods on both defended and undefended models. Solid dots represent undefended results, while hollow dots represent defended results, with arrows indicating the impact of applying defenses.

\paragraph{Analysis}:
\begin{itemize}
  \item \textbf{MI-FGSM}: JPEG is a rule-based defense method, whereas DiffPure is a defense based on the in-manifold assumption. Due to the lack of in-manifold regularization, MI-FGSM-generated adversarial examples are less robust against DiffPure, showing a noticeable performance drop. This can be seen in Figure~\ref{fig:JPEG_ASR.pdf} vs. Figure~\ref{fig:DiffPure_ASR.pdf}.
  \item \textbf{AdvDiff}: As diffusion-based adversarial example generation method,it exhibit strong robustness against DiffPure (also diffusion-based).Figure~\ref{fig:DiffPure_ASR.pdf} shows that their attack success rates (ASR) even increase after DiffPure defense, though they still underperform our method.
  \item \textbf{SD-NAE}: SD-NAE is also a diffusion-based adversarial example generation method, so it exhibits strong robustness against DiffPure. Besides, SD-NAE applies perturbations through text embeddings, which results in better performance in transfer attacks. However, this does not necessarily indicate an advantage, as the visualization results suggest that it may lead to uncontrollable semantic deviations. The high transfer attack success rate could be attributed to inherent changes in global semantics. Additionally, its optimization for white-box attacks is inadequate. For undefended white-box models, the success rate is lower than that of our method and VENOM.
\end{itemize}

\paragraph{Conclusion}:
Our method shows a decrease in ASR in most cases when facing JPEG and DiffPure defenses, but this effect is limited, and in the face of JPEG defense, our method is the only one able to keep the attack success rate ASR consistently above 80\% when the target model is the same as the surrogate model (see Figure~\ref{fig:JPEG_ASR.pdf}). When the DiffPure defense leads to a decrease in the attack success rate of our method and an increase in the attack success rate of the SD-NAE method, we are still able to ensure that at least one data point of our method outperforms the SD-NAE (e.g., the subplot in the first column of the fifth row of Figure~\ref{fig:DiffPure_ASR.pdf}).

In summary, our method enhances the optimization exploration capability for adversarial examples while maintaining In-Manifold Regularization(outperforms the comparison methods all cases \textbf{(48/48)}). This ensures that our approach remains effectively aggressive against defense methods.
 
\subsection{On the Role of Residual-driven Attacking Direction Stabilization in 2D and 3D \scae\ Applications}
\label{app:abl_res}
\paragraph{Boosting Multi-diffusion-step Regularized Adversarial Optimization} Recall that, to solve the problem of \textbf{collaborating the adversarial optimization with the diffusion model for better transfer attacks and robust attacks}, we introduce the residual approximation in \textit{ResAdv-DDIM}. We further analyze it in the more refined evaluation tasks.
The experiment is conducted with the surrogate model of \textbf{ViT-B/16} with $6$ different target models on the abstracted label evasion task.
The parameters of the maximal approximate iteration $K$ and the $t_s / T={0.25, 0.5, 0.75}$ have been evaluated.
The results are shown in Figure~\ref{fig:abl_diff_ts}.
Since altering the parameter changes the behavior of the semantic alignment, the semantic difference measurement is included, and the results are plotted in a figure. $K=0$ indicates the setting without the residual approximation.

The figure shows that:
\begin{itemize}
    \item When $t_s / T = 0.25$, both LPIPS and ASR are higher, and the introduction of the residual approximation lets the balance point between LPIPS and ASR offset, or more biased towards LPIPS optimization. This is due to (1) since the white box results, shown in the upper left figure, indicate that the successful ASR has already been achieved, and there is no need to improve the performance of adversarial optimization. Therefore, improvement is on LPIPS. (2) Since the residual approximation is not designed for regularizing transfer attacks, the transfer attack performance declines in ResNet and Inception models.
    \item When $t_s / T \in \{0.5, 0.75\}$, the result is different.  Specifically, (1) the white-box results indicate that the adversarial optimization is non-optimal. (2) The residual approximation improves the performance of the white-box attacks and significantly improves the performance of transfer attacks. Improvements exist in both LPIPS and ASR.
    \item Simply increasing the step (the blue arrows) of undergoing diffusion steps of adversarial optimization may not improve adversarial transferability. However, it may decrease adversarial optimization performance under the same adaptive optimization mechanism, as the diffusion process may purify the adversarial optimization.
    \item Increasing $t_s$ together with the residual approximation solves the collaboration problem between adversarial optimization and diffusion purification, leading to highly transferable \scae .
\end{itemize}


\begin{figure}[h!]
    \centering
    \includegraphics[width=0.8\linewidth]{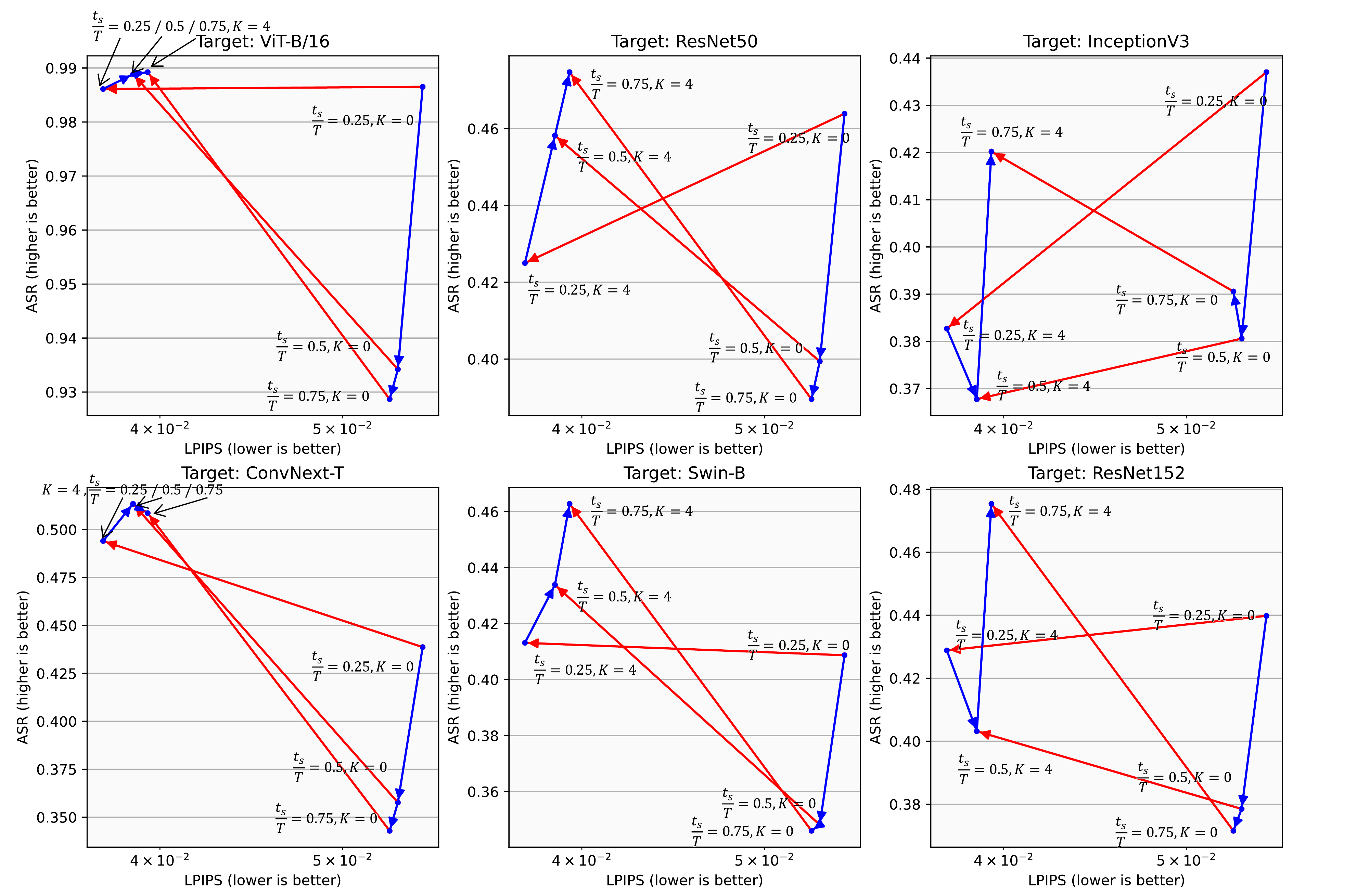}
    \caption{Parameter analysis with surrogate model ViT-B/16. Red arrows denote the performance before and after adding the residual approximation for $g(x)$. In each setting of $\frac{t_s}{T} \ge 0.5$, applying residual approximation achieves significant improvements (The upper left corner indicates a strictly superior direction). \textbf{This result is coherent with the design goal of ResAdv-DDIM.}}
    \label{fig:abl_diff_ts}
\end{figure}


\paragraph{Collaboration with EoT} Different from 2D optimization, the global gradient of 3D models cannot be obtained within a single iteration.
Therefore, the EoT optimization is applied, accumulating the gradients across different perspectives.
This further challenges the adversarial optimization capability.
However, as shown in Table~\ref{tab:3d_compare_eot}, with the residual approximation ($K>0$), our method can collaborate well with $EoT$ with different numbers of EoT steps, resulting in different gradient optimization step sizes.
The visualized comparison is shown in Figure~\ref{fig:video_cmp} and the supplementary videos in \href{https://semanticAE.github.io}{https://semanticAE.github.io}.
The attack performance on different views is significantly higher than without residual approximation ($K = 0$), showing that the necessity of the residual approximation under diffusion+EoT generation pipeline.

\begin{table}[h]
    \centering
    \caption{Comparison between residual approximation and expectation over transformation (EoT). The total iteration (EoT step * gradient descent step) is consistent. The gradient optimization stepsize is larger if the EoT step is higher.}
    \begin{tabular}{cccccccc}
    \toprule
         EoT step& 5 & 3 & \textbf{1}& 1 & 1 &  1&  1\\
         Residual approximation step (K)& 4&  4&  \textbf{4}&  3&  2&  1&  0\\
    \midrule
         ASR&  \textbf{0.922}&  0.913&  \textbf{0.922}&  0.912&  0.912&  0.902&  0.451\\
    \bottomrule
    \end{tabular}
    \label{tab:3d_compare_eot}
\end{table}

\begin{figure}[h] 
    \centering
    \includegraphics[width=0.8\textwidth]{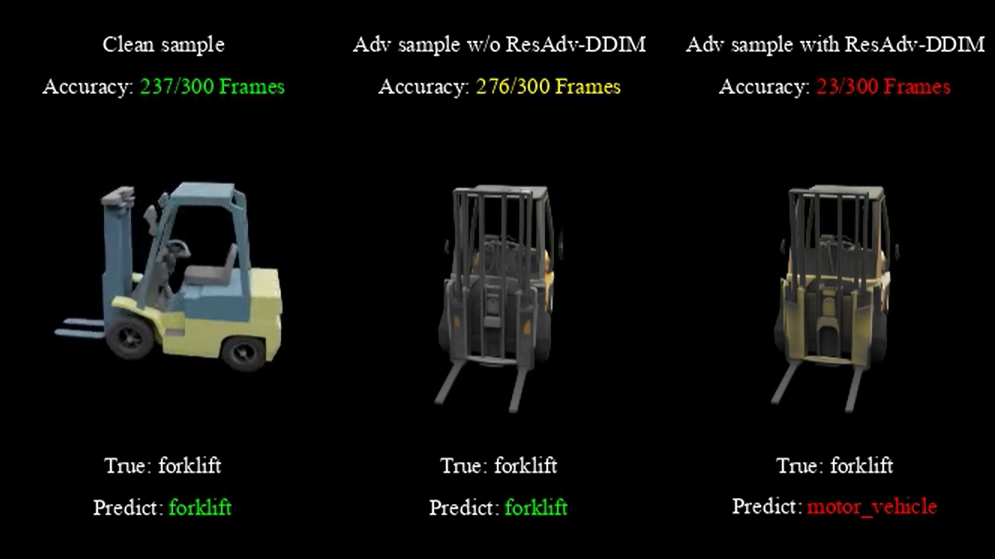} 
    \caption{3D Visual Results Ablation.}
    \label{fig:video_cmp}
\end{figure}

\subsection{On the Potential Adversarial Transferability to Semantic Evaluator}
\label{sec:inadequte}

We analyze the character between and find the clue of the reference-free semantic evaluation metric being transferred in \scae\ evaluation. Our experiment is based on hypothesis testing.
Specifically, we evaluate the results from our method under the setting of Appendix~\ref{sec:trans_detail}, which is $16$ rounds of generation for each of the original ImageNet label evasion task and the abstracted label evasion task.
Additionally, we evaluate the clip-score~\cite{hessel2021clipscore} metric with the settings:
\begin{equation}
    Prompt = \begin{cases}
        \mathrm{ImageNet Label} &   Task = ImageNet\\
        \mathrm{Abstracted Label, ImageNet Label}& Task=Abstracted \\
    \end{cases}
\end{equation}
The backbone of the clip-score is \textit{ViT-B/32}.
Then we apply the linear regression on the clip-score, as the clip Semantic metric, and the LPIPS score on the factor of \textbf{whether the surrogate selects ViT-B/16}.
The results is shown in Figure~\ref{fig:null_text1}, Figure~\ref{fig:null_text2}, and Figure~\ref{fig:combined_null_test}.
The following hypothesis is rejected with high confidence ($p < 0.02$):
\begin{tcolorbox}
Under the same generation settings, the CLIP semantic evaluation results are independent of the surrogate model selection.
\end{tcolorbox}
Due to the high p-value associated with LPIPS, its correlation with surrogate model selection is relatively low. Although there are differences in model configurations and training tasks, both clip-score and the surrogate model adopt the \textbf{ViT} architecture.
Therefore, we have reason to believe that the transfer attack has affected the evaluation of semantic similarity metrics.

We believe that the potential adversarial transferability to the deep-learning-based semantic evaluation model is difficult to tackle within the models, especially for the potential adversarial example that could perform weak-to-strong attacks.
As discussed in Appendix~\ref{sec:app_task_construction}, our exemplar-based evaluation task provides an alternative way to show the semantic alignment, avoids the requirement of non-referencing semantic evaluation, and directly shows the adversarial capability of the generated data.

\begin{figure}[h]
    \centering
    \begin{minipage}{0.48\textwidth}
        \centering
        \includegraphics[width=\textwidth]{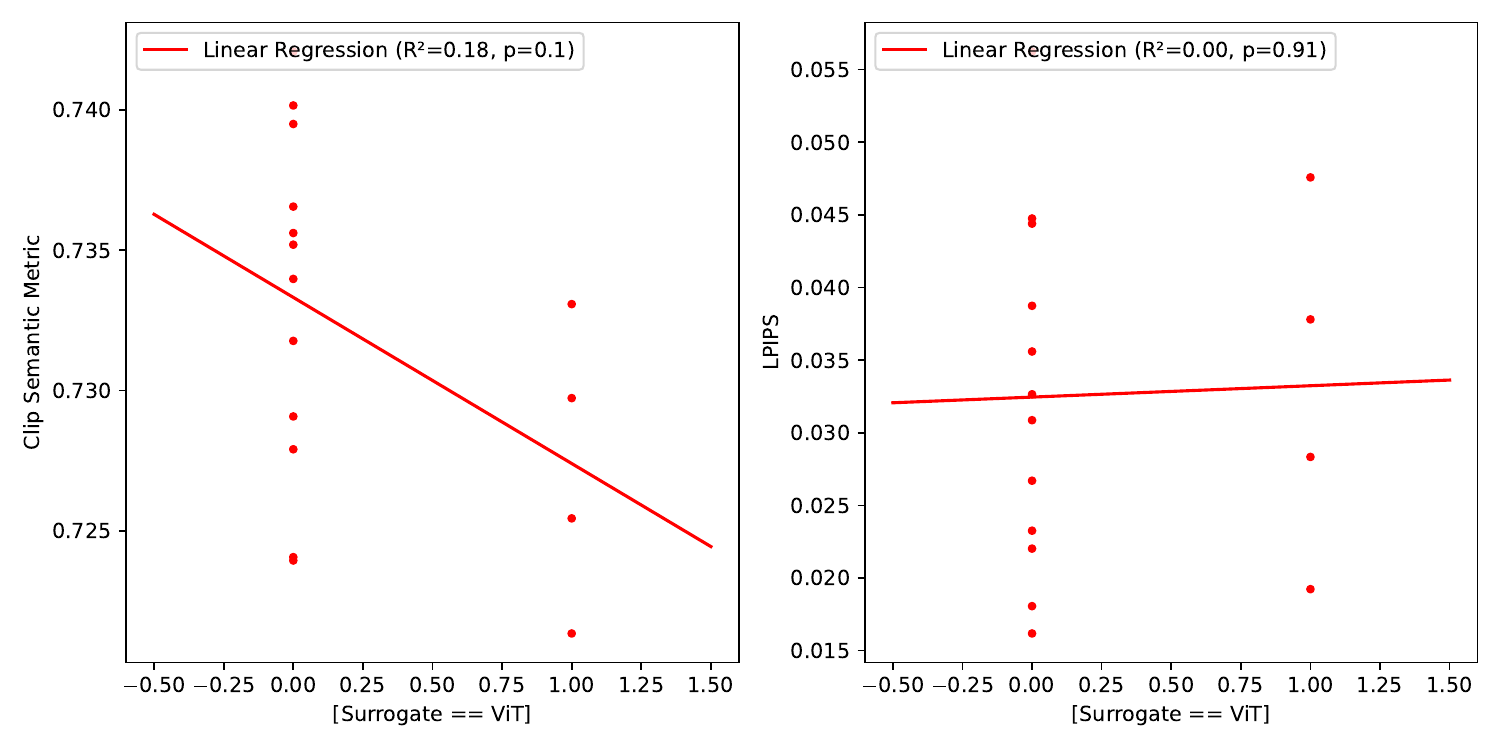}
        \caption{Results on the Original ImageNet Label Evasion \scae s}
        \label{fig:null_text1}
    \end{minipage}%
    \hfill
    \begin{minipage}{0.48\textwidth}
        \centering
        \includegraphics[width=\textwidth]{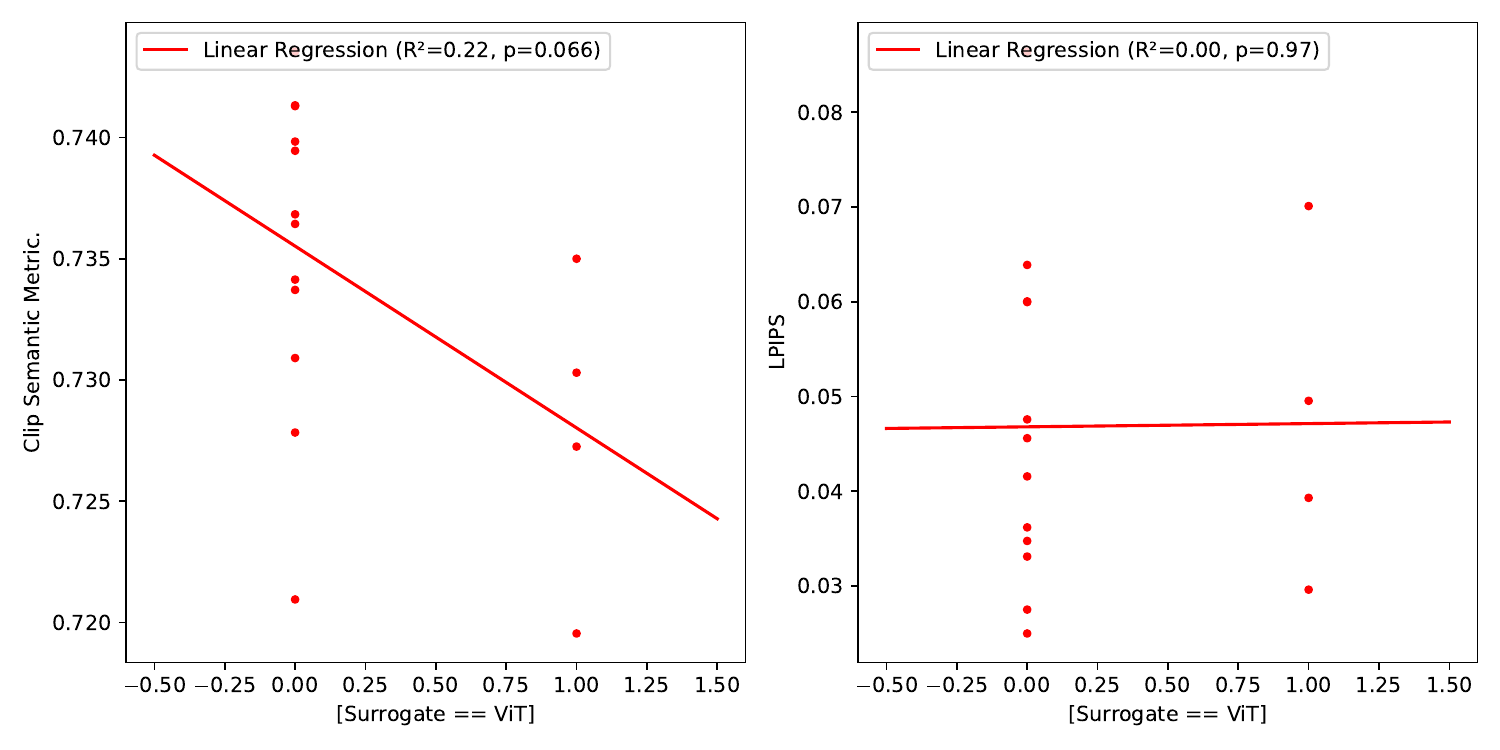}
        \caption{Results on the Abstracted Label Evasion \scae s}
        \label{fig:null_text2}
    \end{minipage}
        
\end{figure}
\begin{figure}[h]
    \centering
    \includegraphics[width=0.8\textwidth]{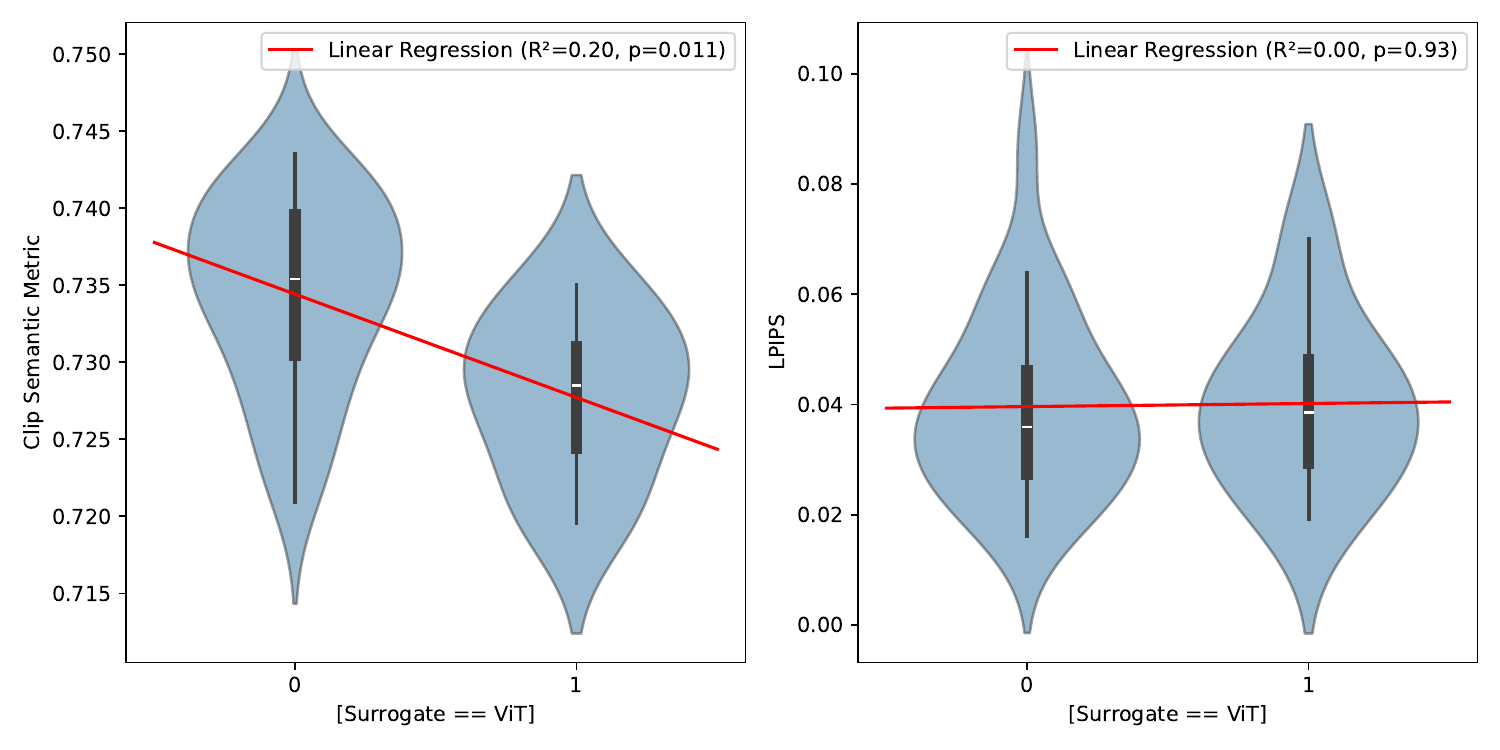}
    \caption{Combined distribution of the semantic metric on the ImageNet and abstracted label evasion \scae s. With high confidence ($p < 0.02$), the \textit{CLIP} semantic metric is affected by the attack transferability.}
    \label{fig:combined_null_test}
\end{figure}

\nopagebreak
\section{Results Visualization}
\label{app:visualization}

\subsection{2D Visualized Results}
To intuitively visualize the samples generated under the semantic constraints of original ImageNet label and abstracted label, we select SemanticAEs $x_{\mathrm{adv}}$ and corresponding nearby samples $x_{\mathrm{exemplar}}$ of seven original ImageNet labels (monarch, castle, goldfinch, aircraft carrier, vase, tiger, jellyfish) and six abstracted labels (aircraft, bag, beetle, bird, boat, fish) for visualization, as shown in Figure~\ref{fig:2dvisual} and ~\ref{fig:2dcoarse}.
The surrogate model is DeCoWa + ResNet.
For our attack method. We set different parameters $\epsilon=\{1.5, 2, 2.5, 3\}$ and $\epsilon=\{2, 2.5, 3, 4\}$, respectively, for 2D ImageNet-label evasion attacks and 2D abstracted-label evasion attacks, which determines the strength of the semantic constraint. We also present the generation results of the baselines.
For a single image $I$, we report confidence and mark the classification label $y$ with different colors employing ResNet50 and ViT-B/16 as target models. In 2D ImageNet-label evasion attacks, Green is for the same classification label $y$ and ImageNet label corresponding to the semantic constraint; otherwise, red. In 2D Abstracted-label evasion attacks, Green indicates that the classification label $y$ belongs to the abstracted label corresponding to the semantic constraint. Therefore, a successful attack is defined as follows: for a pair of $x_{\mathrm{adv}}$ and $x_{\mathrm{exemplar}}$, $x_{\mathrm{exemplar}}$ is classified correctly (green), while $x_{\mathrm{adv}}$ is classified incorrectly (red).
\paragraph{Analyzing 2D Visualized Results }

Based on the observations of Figure~\ref{fig:2dvisual} and ~\ref{fig:2dcoarse}, the following conclusions can be derived:
\begin{itemize}
[labelindent=0pt,labelsep*=0.5em,leftmargin=1.5em,noitemsep, topsep=0pt, partopsep=0pt]
\item For our attack method, as the parameter $\epsilon$ gradually increases, signifying a progressive weakening of the semantic constraint strength, the number of successful attacks correspondingly increases. However, this also results in unnaturalness of SemanticAEs. For instance, the $x_{\mathrm{adv}}$ with the original ImageNet label "castle" has a castle with a blurred top when $\epsilon=3$, and the $x_{\mathrm{adv}}$ with the abstracted label "ship" exhibits a blurry mast when $\epsilon = 4$.

\item SemanticAEs generated by SD-NAE disturb more
global semantics, which leads to semantic drift and dissimilarity between the $x_{\mathrm{exemplar}}$ and $x_{\mathrm{adv}}$. It is consistent with the results shown in Appendix~\ref{sec:trans_detail}: SD-NAE has a lower MS-SSIM score and a higher LPIPS score, with the same surrogate model, compared with other attack methods.

\item As for MI-FGSM, noticeable noise can be observed in the background area of the SemanticAEs, which are not natural. In the main paper, MI-FGSM corresponds to a lower CLIP-QAI score related to noisiness.

\item AdvDiff generates adversarial examples with artifacts at the edges in ImageNet-label evasion attacks, such as the $x_{\mathrm{adv}}$ of label "castle" and "aircraft carrier", which is a manifestation of low image quality. In abstracted-label evasion attacks, SemanticAEs hardly attack target models successfully.

\item The adversarial examples of VENOM are natural. However, they struggle to effectively attack the ViT-B/16 model.
\end{itemize}
In conclusion, our method generates local in-manifold patterns to achieve strong attacks.

\subsection{3D Visualized Results} 
\paragraph{Image Visualization}
To qualitatively substantiate the efficacy of our proposed methodology, we conducted a visual comparison of the generated video samples. Initially, ten unique labels were randomly selected from the complete set of Abstracted Labels. Subsequently, for each of these selected labels, the corresponding clean video sample and the adversarial video sample generated with $K=4$ were chosen. From each of these videos, five frames were extracted at equidistant intervals. The visual results of these extracted frames are presented in Figure \ref{fig:scenery}. A comparative analysis of each pair of clean and adversarial examples reveals that the adversarial counterparts maintain a high degree of visual similarity to the benign videos. Despite this perceptual resemblance, the adversarial examples demonstrate a high probability of inducing misclassification by the ResNet-50 model, thereby underscoring the effectiveness of our approach in generating robust yet inconspicuous adversarial attacks. 

Furthermore, we performed a comparative analysis of adversarial examples generated under varying $K$ parameter settings to demonstrate the rationale behind our parameter selection. Specifically, we selected the same video instance from the clean samples, the adversarial examples generated with $K=0$, and those generated with $K=4$. A single frame was extracted from each of these three video versions for visualization, as depicted in Figure \ref{fig:video_cmp}. It is observable from the figure that when $K=0$, the classification outcome for the adversarial example paradoxically improves compared to the clean sample, signifying a failure of the adversarial attack. Conversely, for $K=4$, the adversarial example exhibits visual characteristics closely resembling those of the $K=0$ sample. However, its efficacy in deceiving the classifier is significantly enhanced. This comparative visualization corroborates the superiority of our chosen parameter configuration ($K=4$) in achieving a strong attack effect while preserving visual quality.

\paragraph{Video Visualization}
We performed a comparative analysis of adversarial examples generated under varying $K$ parameter settings to demonstrate the rationale behind our parameter selection. Specifically, we selected three video instances, identified by their primary content as the \textit{forklift} video, the \textit{llama} video, and the \textit{volcano} video. For each of these videos, we considered the clean sample, the adversarial example generated with $K=0$ (without residual approximation), and that generated with $K=4$. The videos are shown in \href{https://semanticAE.github.io}{https://semanticAE.github.io}.

It is observable from our quantitative analysis that the outcomes for $K=0$ varied across the different videos. For the \textit{forklift} video, the classification accuracy of the adversarial example generated with $K=0$ paradoxically improved compared to its clean sample, signifying a failure of the adversarial attack under this specific setting. In contrast, for both the \textit{llama} and \textit{volcano} videos, the adversarial examples generated with $K=0$ achieved lower classification accuracies than their respective clean samples, indicating some attack effect. However, their accuracies were still notably higher than those of the adversarial examples generated with $K=4$. This demonstrates that for the \textit{llama} and \textit{volcano} videos, while $K=0$ initiated an attack, its deceiving capability was considerably weaker than that of $K=4$.

Conversely, for $K=4$, the adversarial examples for all three videos (\textit{forklift}, \textit{llama}, and \textit{volcano}) exhibit visual characteristics closely resembling those of the $K=0$ samples. However, their efficacy in deceiving the classifier is significantly enhanced across all instances. This comparative visualization and analysis corroborate the superiority of our chosen parameter configuration ($K=4$) in achieving a strong attack effect while preserving visual quality.

\nopagebreak

\section{Boarder Impacts}

While our goal is to catalyze the development of the red-teaming framework and trustworthy AI, we acknowledge that the proposed technology might be misused, including: (1) extending the proposed transfer attack improvement methods to jailbreak multi-modal LLMs.
(2) extending the proposed 3D attack methods to generate physical adversarial examples to attack biometric authentication systems.
However, our framework is not directly designed for these scenarios and requires further integration.

To protect from potential attacks in applications, we suggest developing the following closed-loop framework as a complement to traditional defense methods tested in Appendix D:
\begin{itemize}
    \item Collect data generated by the proposed \textbf{InSUR} framework.
    \item Annotate the data with human feedback or rule-based models.
    \item Improve the alignment of the multi-modal models through fine-tuning on the dataset.
\end{itemize}
As a tool for data generation, we believe our framework is more beneficial for the model holder.
For responsibility, we will release the code of \textbf{InSUR} framework \textit{after} the paper is published for reference.


\begin{table}[htbp]
\centering
\caption{Abstracted Label Mapping from ImageNet Numerical IDs}
\label{tab:evasion_label}
\begin{tabular}{l p{10cm}} 
\toprule
Abstracted Label & Original ImageNet Label IDs \\
\midrule
dog & 151, 152, 153, 154, 155, 156, 157, 158, 159, 160, 161, 162, 163, 164, 165, 166, 167, 168, 169, 170, 171, 172, 173, 174, 175, 176, 177, 178, 179, 180, 181, 182, 183, 184, 185, 186, 187, 188, 189, 190, 191, 192, 193, 194, 195, 196, 197, 198, 199, 200, 201, 202, 203, 204, 205, 206, 207, 208, 209, 210, 211, 212, 213, 214, 215, 216, 217, 218, 219, 220, 221, 222, 223, 224, 225, 226, 227, 228, 229, 230, 231, 232, 233, 234, 235, 236, 237, 238, 239, 240, 241, 242, 243, 244, 245, 246, 247, 248, 249, 250, 251, 252, 253, 254, 255, 256, 257, 258, 259, 260, 261, 262, 263, 264, 265, 266, 267, 268 \\
bird & 7, 8, 9, 10, 11, 12, 13, 14, 15, 16, 17, 18, 19, 20, 21, 22, 23, 24, 80, 81, 82, 83, 84, 85, 86, 87, 88, 89, 90, 91, 92, 93, 94, 95, 96, 97, 98, 99, 100, 127, 128, 129, 130, 131, 132, 133, 134, 135, 136, 137, 138, 139, 140, 141, 142, 143, 144, 145, 146 \\
musical\_instrument & 401, 402, 420, 432, 486, 494, 513, 541, 546, 558, 566, 577, 579, 593, 594, 641, 642, 683, 684, 687, 699, 776, 822, 875, 881, 889 \\
furnishing & 423, 431, 453, 493, 495, 516, 520, 526, 532, 548, 553, 559, 564, 648, 703, 736, 741, 765, 794, 831, 846, 854, 857, 861, 894 \\
motor\_vehicle & 407, 408, 436, 468, 511, 555, 569, 573, 575, 609, 627, 656, 661, 665, 675, 717, 734, 751, 803, 817, 864, 867 \\
snake & 52, 53, 54, 55, 56, 57, 58, 59, 60, 61, 62, 63, 64, 65, 66, 67, 68 \\
fish & 0, 1, 2, 3, 4, 5, 6, 389, 390, 391, 392, 393, 394, 395, 396, 397 \\
building & 410, 425, 449, 497, 498, 580, 598, 624, 663, 668, 698, 727, 762, 832 \\
saurian & 38, 39, 40, 41, 42, 43, 44, 45, 46, 47, 48 \\
ball & 429, 430, 522, 574, 722, 747, 768, 805, 852, 890 \\
headdress & 433, 439, 452, 515, 518, 560, 667, 793, 808 \\
beetle & 300, 301, 302, 303, 304, 305, 306, 307 \\
timepiece & 409, 530, 531, 604, 704, 826, 835, 892 \\
shop & 415, 424, 454, 467, 509, 788, 860, 865 \\
dish & 925, 926, 933, 934, 962, 963, 964, 965 \\
cat & 281, 282, 283, 284, 285, 286, 287 \\
weapon & 413, 456, 471, 657, 744, 763, 764 \\
bottle & 440, 720, 737, 898, 899, 901, 907 \\
fungus & 991, 992, 993, 994, 995, 996, 997 \\
spider & 72, 73, 74, 75, 76, 77 \\
ship & 403, 510, 628, 724, 833, 913 \\
boat & 472, 554, 576, 625, 814, 914 \\
turtle & 33, 34, 35, 36, 37 \\
bag & 414, 636, 728, 748, 797 \\
housing & 500, 660, 663, 698, 915 \\
crab & 118, 119, 120, 121 \\
wolf & 269, 270, 271, 272 \\
fox & 277, 278, 279, 280 \\
bear & 294, 295, 296, 297 \\
aircraft & 404, 405, 417, 895 \\
armor & 465, 490, 524, 787 \\
wheel & 479, 694, 723, 739 \\
fence & 489, 716, 825, 912 \\
personal\_computer & 527, 590, 620, 681 \\
roof & 538, 853, 858, 884 \\
overgarment & 568, 617, 735, 869 \\
skirt & 601, 655, 689, 775 \\
bread & 930, 931, 932, 962 \\
squash & 939, 940, 941, 942 \\
\bottomrule
\end{tabular}
\end{table}

\begin{figure}[p]
    \centering
    \includegraphics[width=\linewidth]{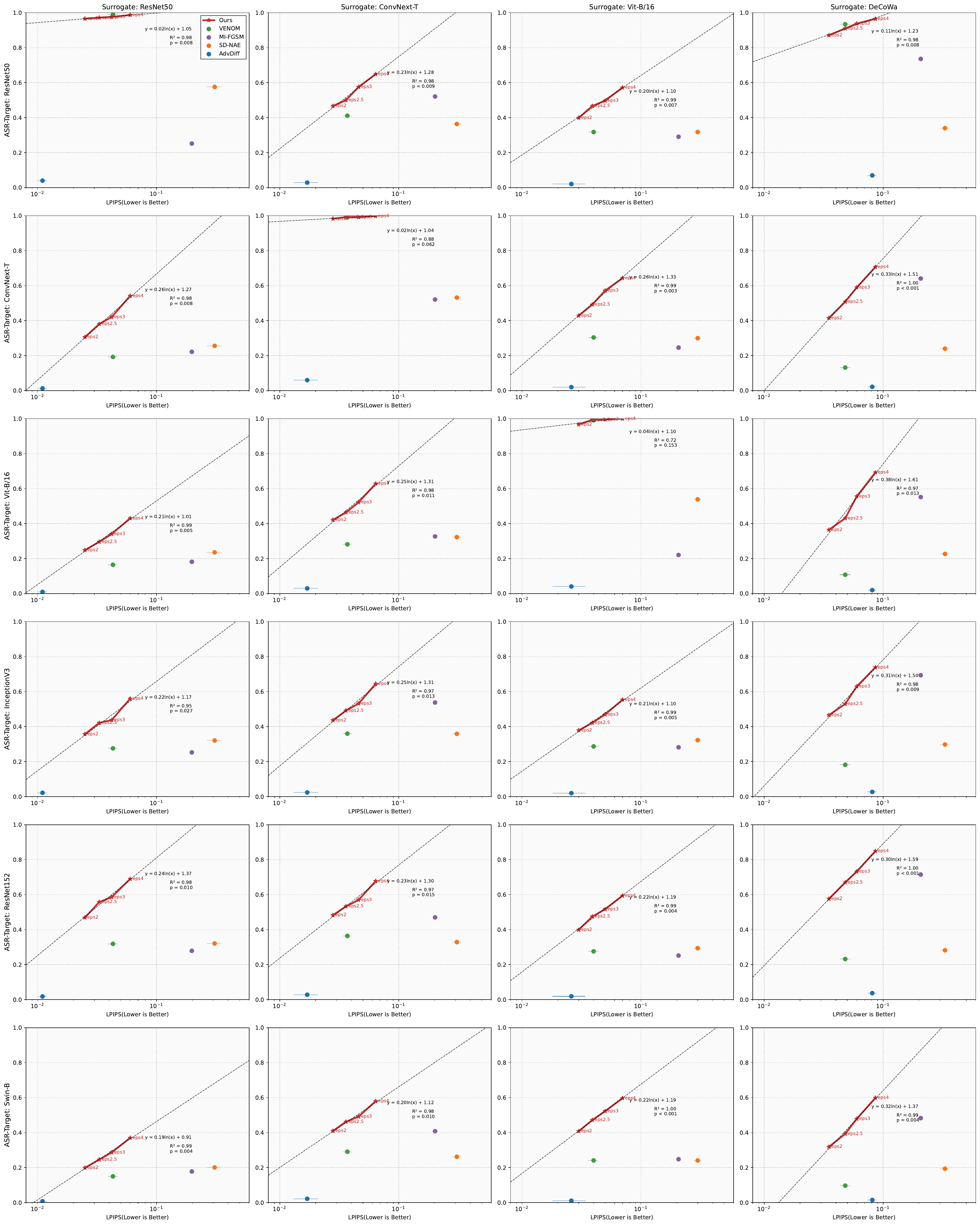}
    \caption{ASR of Abstracted Label}
    \label{fig:ASR-multi}
\end{figure}

\begin{figure}[p]
    \centering
    \includegraphics[width=\linewidth]{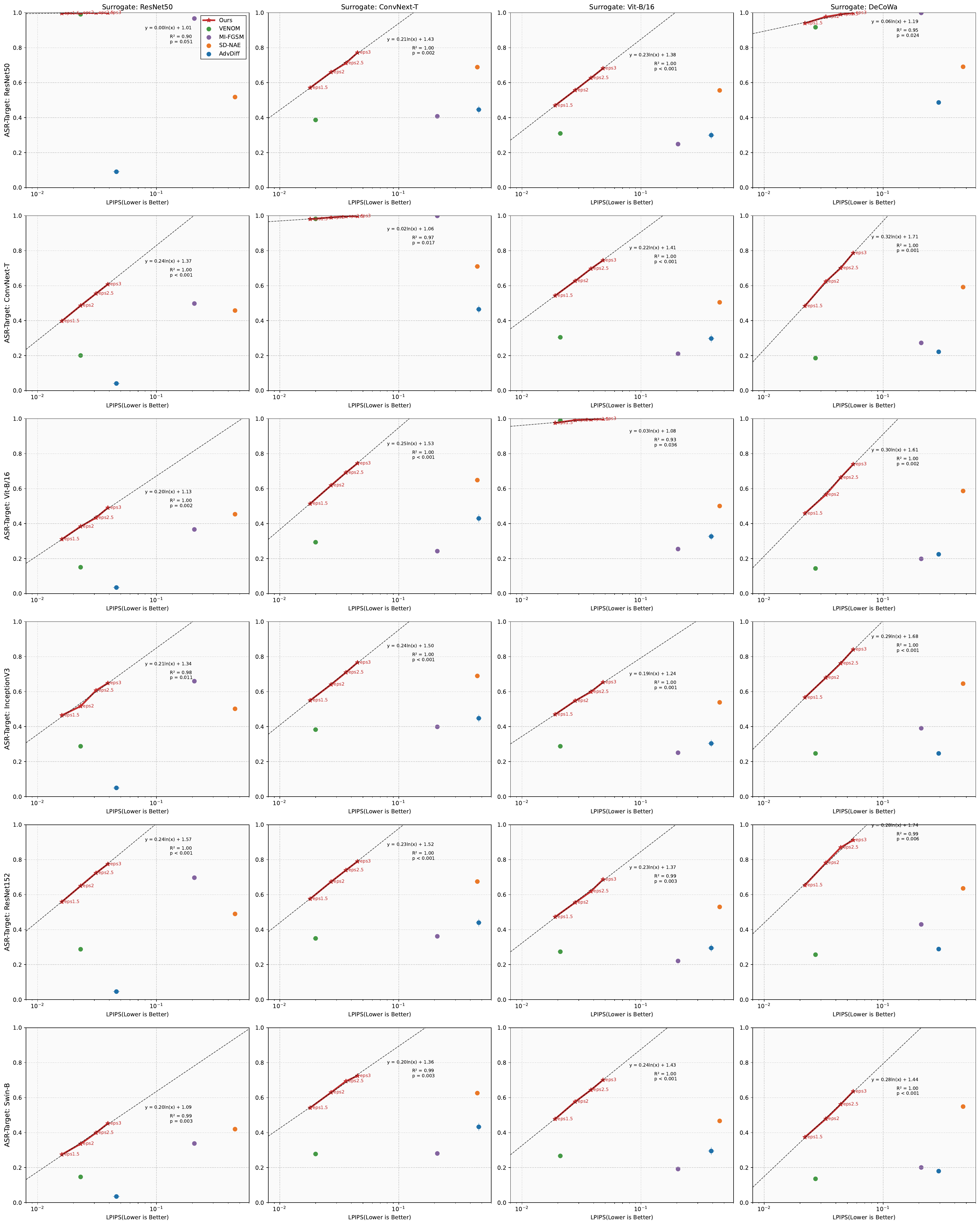}
    \caption{ASR of Original Imagenet label}
    \label{fig:ASR-single}
\end{figure}

\begin{figure}[p]
    \centering
    \includegraphics[width=\linewidth]{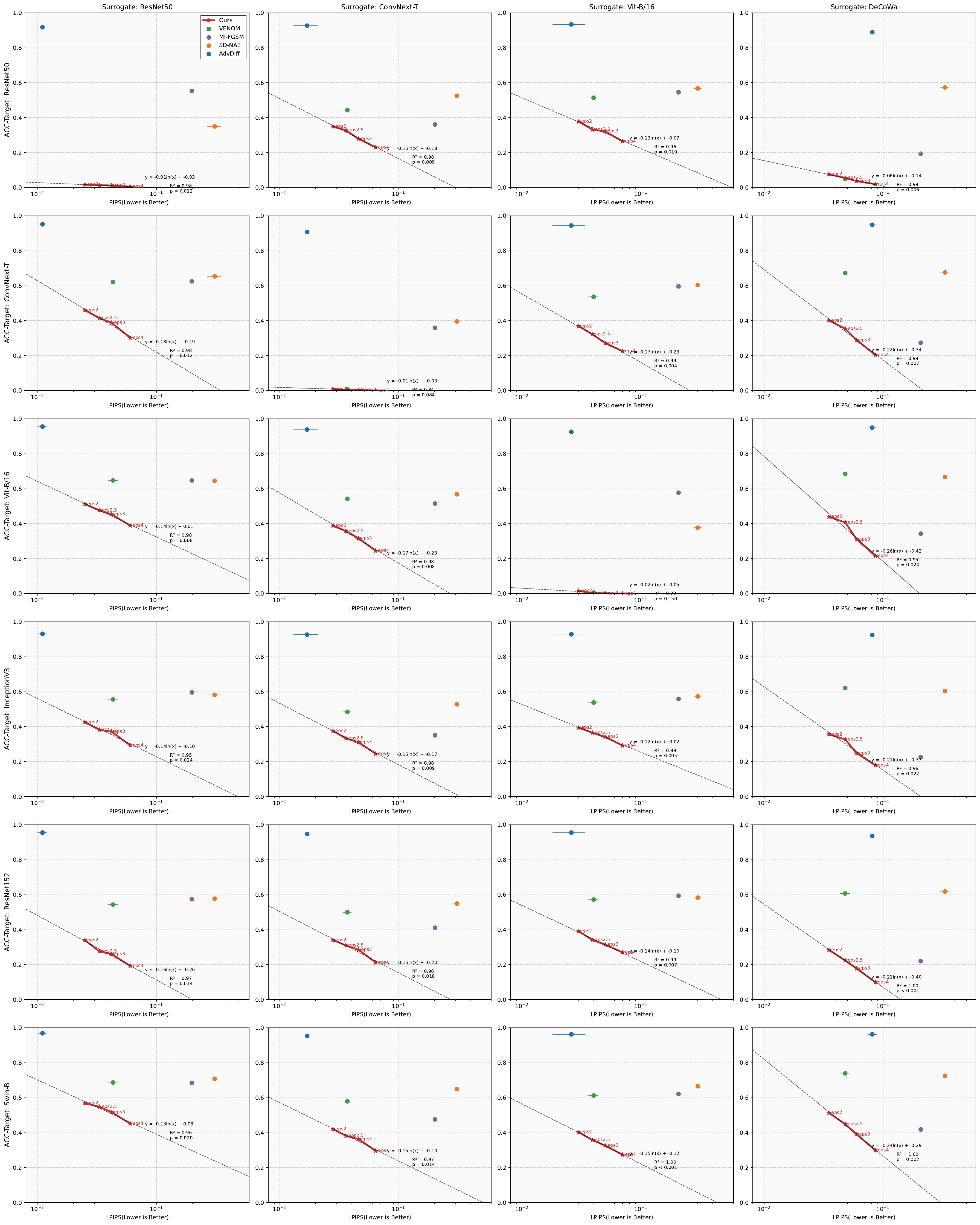}
    \caption{ACC of Abstracted Label}
    \label{fig:ACC-multi}
\end{figure}

\begin{figure}[p]
    \centering
    \includegraphics[width=\linewidth]{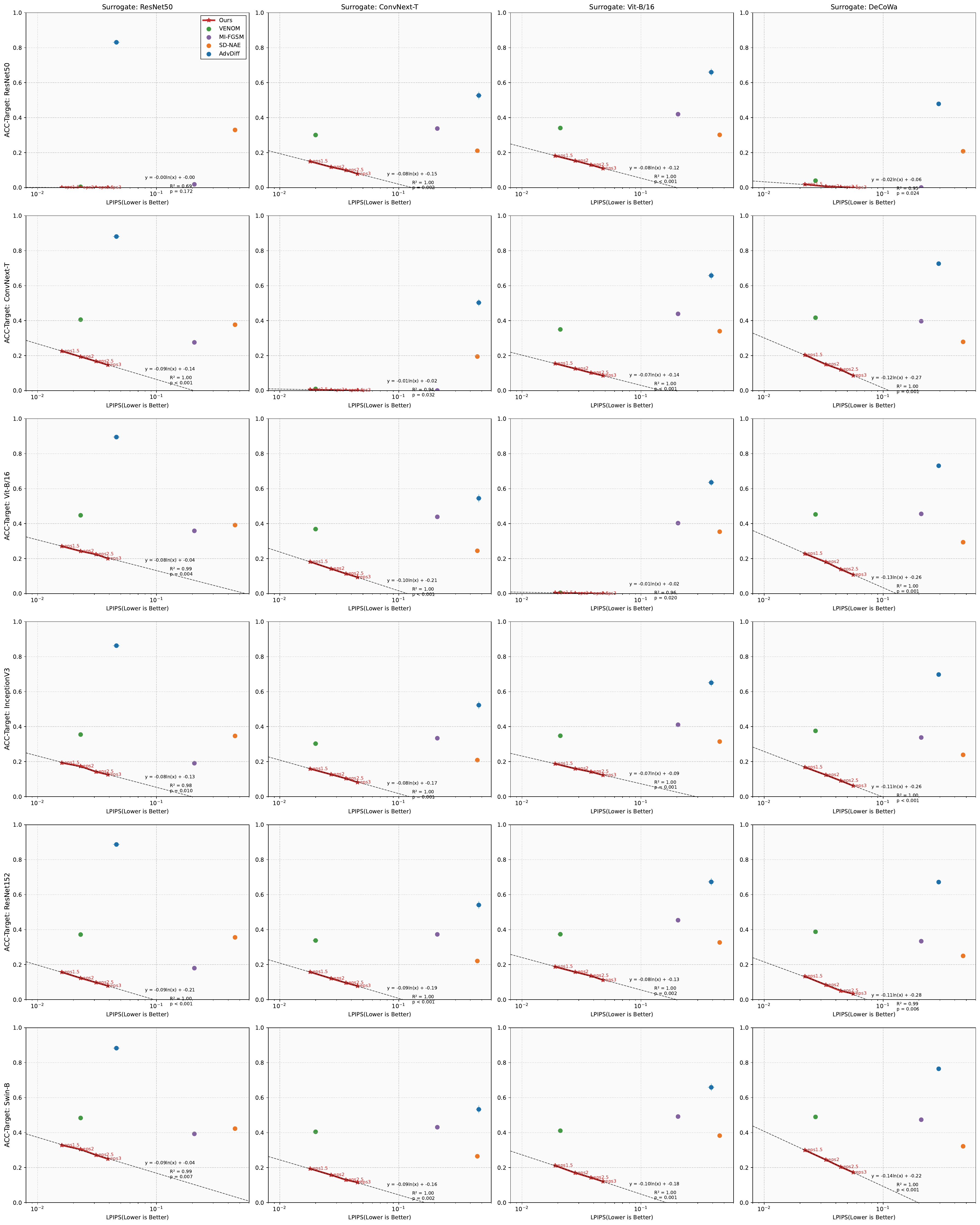}
    \caption{ACC of Original Imagenet label}
    \label{fig:ACC-single}
\end{figure}

\begin{figure}[p]
    \centering
    \includegraphics[width=\linewidth]{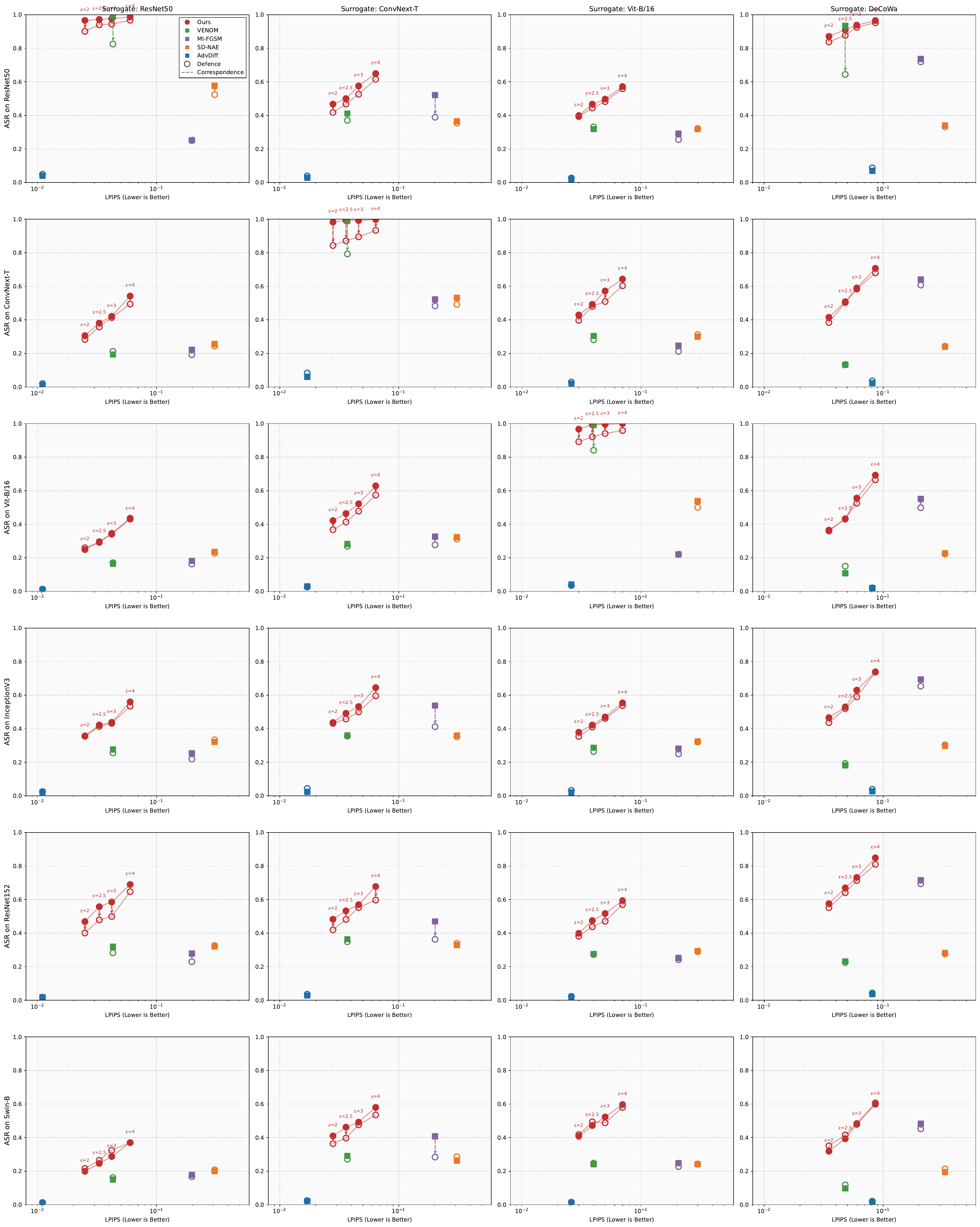}
    \caption{ASR on JPEG Defense}
    \label{fig:JPEG_ASR.pdf}
\end{figure}

\begin{figure}[p]
    \centering
    \includegraphics[width=\linewidth]{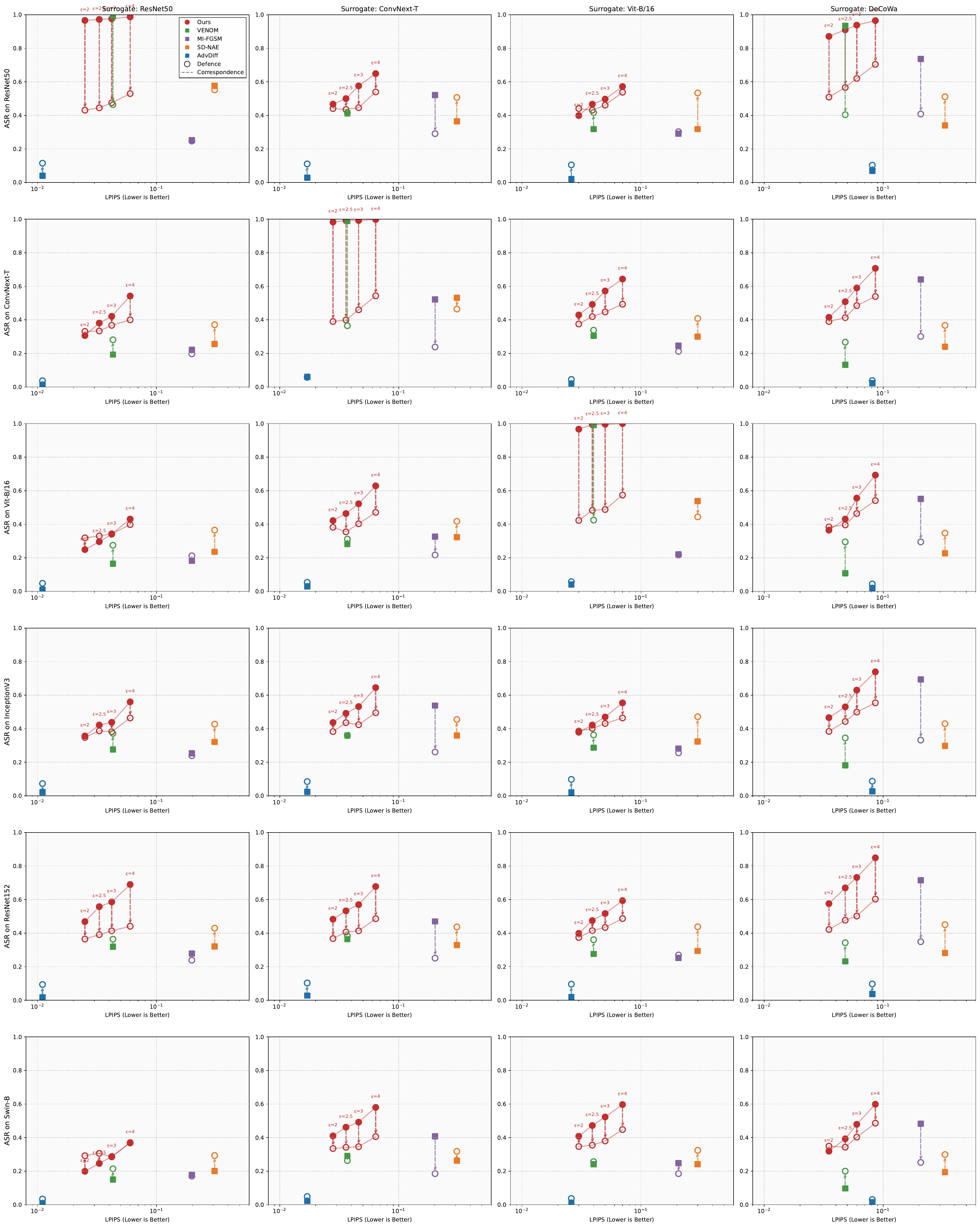}
    \caption{ASR on DiffPure Defense}
    \label{fig:DiffPure_ASR.pdf}
\end{figure}


\begin{figure}[p]
    \centering
    \includegraphics[height=0.95\textheight]{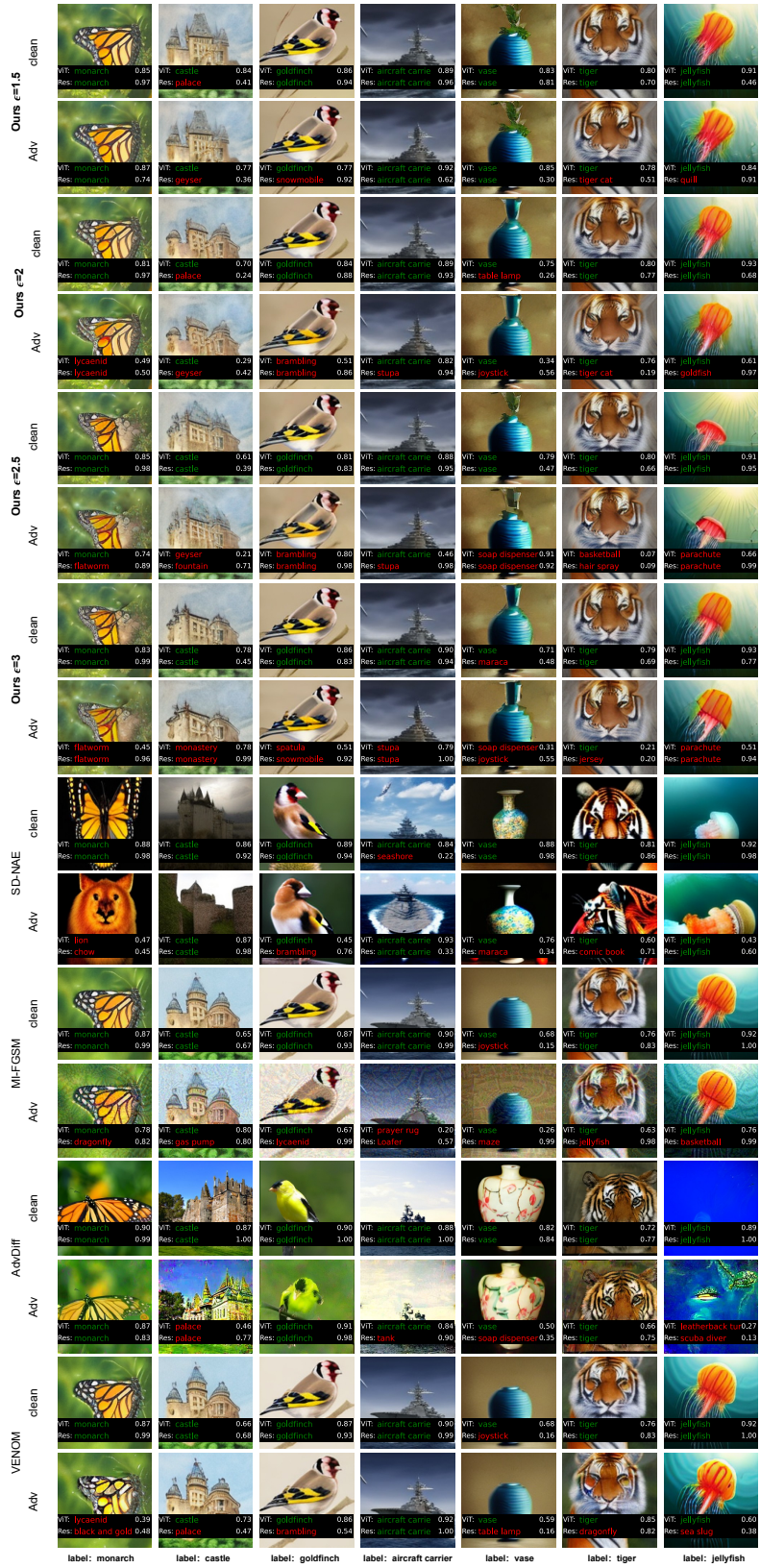}
    \caption{Visualization of 2D ImageNet-label Evasion Attacks. Surrogate model is ResNet50+DeCoWA.}
    \label{fig:2dvisual}
\end{figure}
\begin{figure}[p]
    \centering
     \includegraphics[height=0.95\textheight]{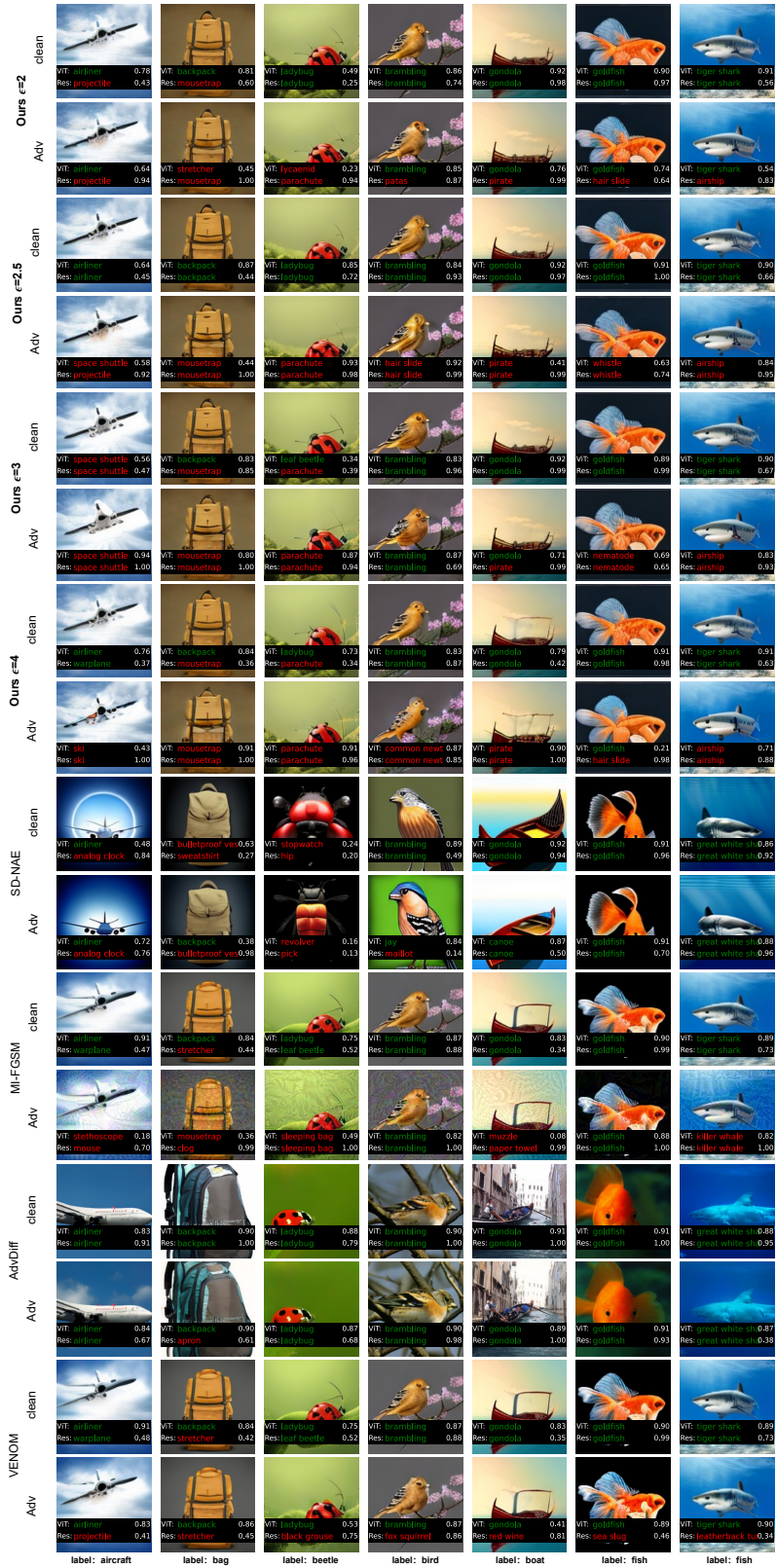}
    \caption{Visualization of 2D Abstracted-label Evasion Attacks. Surrogate model is ResNet50+DeCoWA.}
    \label{fig:2dcoarse}
\end{figure}


%
\begin{figure}[p] 
    \centering
    \includegraphics[height=0.95\textheight]{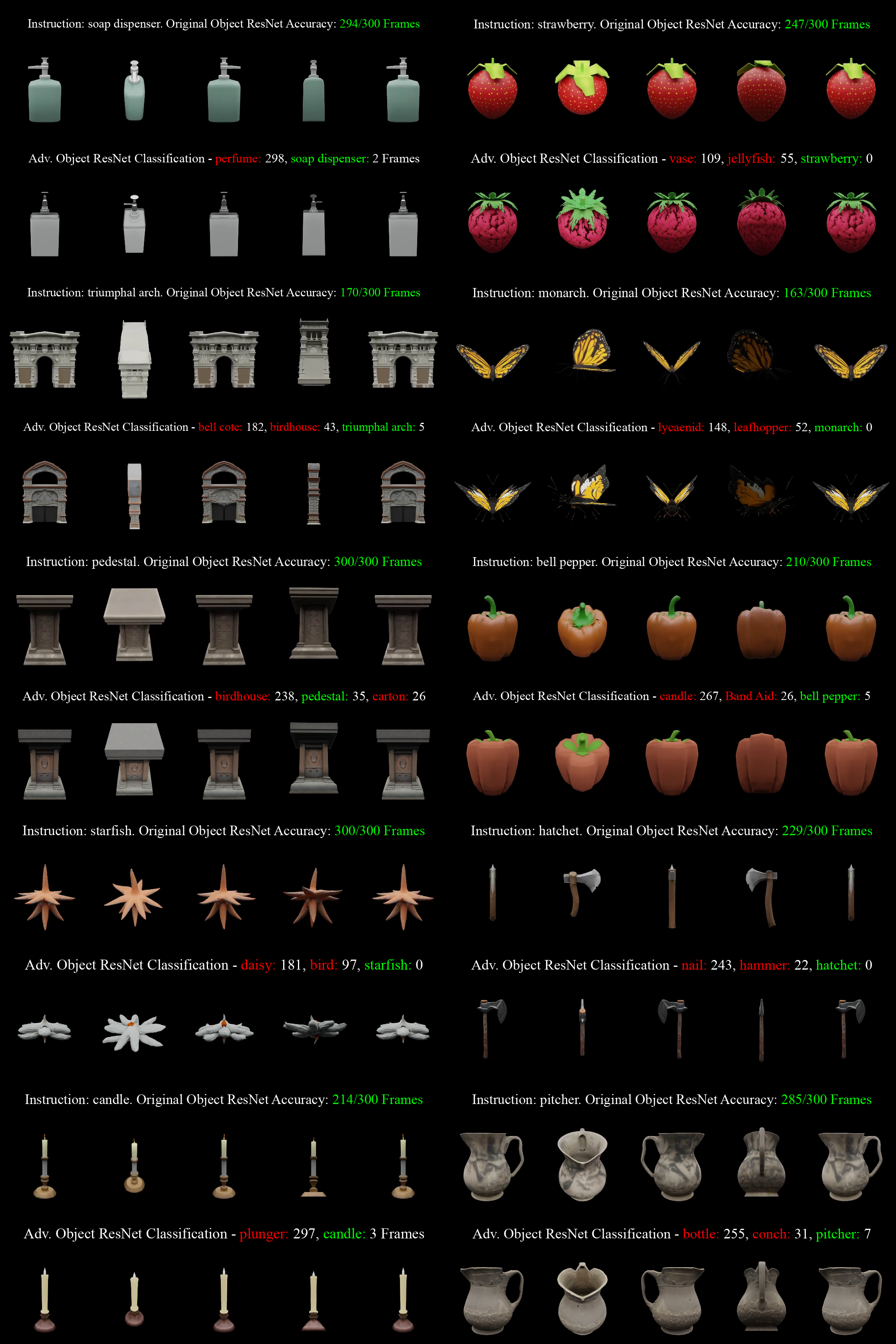} 
    \caption{3D Visual Results. Surrogate Model is ResNet50.}
    \label{fig:scenery}
\end{figure}

\begin{sidewaystable}[p]
    \centering
    \caption{Abstracted label transfer attack results.}
    \label{tbl:appendix_abstracted}
    \resizebox{\linewidth}{!}{
    \begin{tabular}{*{17}{c}}
    \toprule
        ~ & ~ & \multicolumn{3}{c}{Metrics}& \multicolumn{2}{c}{ResNet152} & \multicolumn{2}{c}{InceptionV3}  & \multicolumn{2}{c}{ResNet50}  & \multicolumn{2}{c}{ViT-B/16} & \multicolumn{2}{c}{ConvNext-T}  & \multicolumn{2}{c}{Swin-B} \\ \hline
        method & Surrogate & Clip$_{IQA}$ & LPIPS & MSSSIM & ASR & ACC & ASR & ACC & ASR & ACC & ASR & ACC & ASR & ACC & ASR & ACC  \\ 
        \midrule
       \multirow{4}{*}{ours$_{\epsilon=2}$} & ResNet+DeCoWA & 0.809$_{\pm 0.009}$ & 0.035$_{\pm 0.001}$ & 0.957$_{\pm 0.001}$ & 0.576$_{\pm 0.004}$ & 0.284$_{\pm 0.003}$ & 0.466$_{\pm 0.002}$ & 0.356$_{\pm 0.002}$ & 0.871$_{\pm 0.003}$ & 0.076$_{\pm 0.002}$ & 0.365$_{\pm 0.006}$ & 0.439$_{\pm 0.005}$ & 0.415$_{\pm 0.003}$ & 0.401$_{\pm 0.002}$ & 0.319$_{\pm 0.005}$ & 0.513$_{\pm 0.004}$ \\ 
        ~ & ResNet50 & 0.813$_{\pm 0.009}$ & 0.025$_{\pm 0.001}$ & 0.968$_{\pm 0.002}$ & 0.469$_{\pm 0.003}$ & 0.340$_{\pm 0.002}$ & 0.357$_{\pm 0.006}$ & 0.425$_{\pm 0.006}$ & 0.966$_{\pm 0.001}$ & 0.017$_{\pm 0.001}$ & 0.249$_{\pm 0.005}$ & 0.512$_{\pm 0.003}$ & 0.306$_{\pm 0.004}$ & 0.460$_{\pm 0.003}$ & 0.199$_{\pm 0.002}$ & 0.568$_{\pm 0.004}$ \\ 
        ~ & ViT-B/16 & 0.815$_{\pm 0.012}$ & 0.030$_{\pm 0.001}$ & 0.965$_{\pm 0.001}$ & 0.399$_{\pm 0.002}$ & 0.391$_{\pm 0.003}$ & 0.379$_{\pm 0.003}$ & 0.394$_{\pm 0.003}$ & 0.399$_{\pm 0.004}$ & 0.378$_{\pm 0.004}$ & 0.967$_{\pm 0.001}$ & 0.016$_{\pm 0.001}$ & 0.429$_{\pm 0.004}$ & 0.368$_{\pm 0.003}$ & 0.408$_{\pm 0.005}$ & 0.402$_{\pm 0.004}$ \\ 
        ~ & ConvNext-T & 0.814$_{\pm 0.009}$ & 0.028$_{\pm 0.001}$ & 0.967$_{\pm 0.001}$ & 0.483$_{\pm 0.006}$ & 0.340$_{\pm 0.004}$ & 0.437$_{\pm 0.005}$ & 0.375$_{\pm 0.004}$ & 0.467$_{\pm 0.005}$ & 0.349$_{\pm 0.003}$ & 0.422$_{\pm 0.004}$ & 0.388$_{\pm 0.003}$ & 0.983$_{\pm 0.002}$ & 0.010$_{\pm 0.001}$ & 0.410$_{\pm 0.002}$ & 0.420$_{\pm 0.002}$ \\ \hline
        \multirow{4}{*}{ours$_{\epsilon=2.5}$} & ResNet+DeCoWA & 0.808$_{\pm 0.009}$ & 0.048$_{\pm 0.001}$ & 0.943$_{\pm 0.002}$ & 0.670$_{\pm 0.003}$ & 0.224$_{\pm 0.002}$ & 0.530$_{\pm 0.005}$ & 0.327$_{\pm 0.003}$ & 0.910$_{\pm 0.002}$ & 0.056$_{\pm 0.001}$ & 0.431$_{\pm 0.002}$ & 0.407$_{\pm 0.001}$ & 0.508$_{\pm 0.004}$ & 0.354$_{\pm 0.002}$ & 0.393$_{\pm 0.006}$ & 0.449$_{\pm 0.004}$ \\ 
        ~ & ResNet50 & 0.808$_{\pm 0.013}$ & 0.033$_{\pm 0.001}$ & 0.958$_{\pm 0.002}$ & 0.558$_{\pm 0.002}$ & 0.276$_{\pm 0.001}$ & 0.422$_{\pm 0.007}$ & 0.382$_{\pm 0.004}$ & 0.972$_{\pm 0.001}$ & 0.014$_{\pm 0.001}$ & 0.296$_{\pm 0.005}$ & 0.476$_{\pm 0.004}$ & 0.381$_{\pm 0.004}$ & 0.415$_{\pm 0.003}$ & 0.246$_{\pm 0.002}$ & 0.548$_{\pm 0.003}$ \\ 
        ~ & ViT-B/16 & 0.814$_{\pm 0.012}$ & 0.039$_{\pm 0.001}$ & 0.955$_{\pm 0.001}$ & 0.475$_{\pm 0.008}$ & 0.341$_{\pm 0.004}$ & 0.422$_{\pm 0.002}$ & 0.365$_{\pm 0.002}$ & 0.467$_{\pm 0.006}$ & 0.332$_{\pm 0.004}$ & 0.994$_{\pm 0.001}$ & 0.003$_{\pm 0.000}$ & 0.492$_{\pm 0.005}$ & 0.323$_{\pm 0.003}$ & 0.472$_{\pm 0.002}$ & 0.358$_{\pm 0.003}$ \\ 
        ~ & ConvNext-T & 0.812$_{\pm 0.008}$ & 0.036$_{\pm 0.001}$ & 0.957$_{\pm 0.001}$ & 0.533$_{\pm 0.005}$ & 0.310$_{\pm 0.002}$ & 0.492$_{\pm 0.008}$ & 0.333$_{\pm 0.006}$ & 0.500$_{\pm 0.005}$ & 0.328$_{\pm 0.003}$ & 0.464$_{\pm 0.004}$ & 0.358$_{\pm 0.003}$ & 0.992$_{\pm 0.001}$ & 0.004$_{\pm 0.000}$ & 0.462$_{\pm 0.004}$ & 0.382$_{\pm 0.002}$ \\ \hline
        \multirow{4}{*}{ours$_{\epsilon=3}$} & ResNet+DeCoWA & 0.808$_{\pm 0.009}$ & 0.060$_{\pm 0.002}$ & 0.929$_{\pm 0.002}$ & 0.732$_{\pm 0.002}$ & 0.178$_{\pm 0.001}$ & 0.630$_{\pm 0.003}$ & 0.248$_{\pm 0.002}$ & 0.938$_{\pm 0.001}$ & 0.037$_{\pm 0.001}$ & 0.556$_{\pm 0.002}$ & 0.310$_{\pm 0.002}$ & 0.590$_{\pm 0.005}$ & 0.288$_{\pm 0.003}$ & 0.479$_{\pm 0.003}$ & 0.390$_{\pm 0.003}$ \\ 
        ~ & ResNet50 & 0.806$_{\pm 0.011}$ & 0.042$_{\pm 0.002}$ & 0.949$_{\pm 0.003}$ & 0.586$_{\pm 0.002}$ & 0.260$_{\pm 0.002}$ & 0.438$_{\pm 0.007}$ & 0.371$_{\pm 0.003}$ & 0.976$_{\pm 0.001}$ & 0.012$_{\pm 0.001}$ & 0.341$_{\pm 0.005}$ & 0.453$_{\pm 0.002}$ & 0.421$_{\pm 0.002}$ & 0.388$_{\pm 0.002}$ & 0.287$_{\pm 0.003}$ & 0.517$_{\pm 0.002}$ \\ 
        ~ & ViT-B/16 & 0.811$_{\pm 0.011}$ & 0.050$_{\pm 0.002}$ & 0.944$_{\pm 0.001}$ & 0.517$_{\pm 0.006}$ & 0.314$_{\pm 0.004}$ & 0.470$_{\pm 0.004}$ & 0.342$_{\pm 0.004}$ & 0.497$_{\pm 0.009}$ & 0.322$_{\pm 0.005}$ & 0.996$_{\pm 0.001}$ & 0.002$_{\pm 0.001}$ & 0.572$_{\pm 0.004}$ & 0.271$_{\pm 0.002}$ & 0.523$_{\pm 0.004}$ & 0.326$_{\pm 0.003}$ \\ 
        ~ & ConvNext-T & 0.808$_{\pm 0.011}$ & 0.046$_{\pm 0.002}$ & 0.946$_{\pm 0.002}$ & 0.570$_{\pm 0.007}$ & 0.285$_{\pm 0.005}$ & 0.532$_{\pm 0.006}$ & 0.311$_{\pm 0.003}$ & 0.576$_{\pm 0.008}$ & 0.279$_{\pm 0.005}$ & 0.522$_{\pm 0.004}$ & 0.316$_{\pm 0.003}$ & 0.992$_{\pm 0.001}$ & 0.004$_{\pm 0.001}$ & 0.492$_{\pm 0.006}$ & 0.363$_{\pm 0.005}$ \\ \hline
        \multirow{4}{*}{ours$_{\epsilon=4}$} & ResNet+DeCoWA & 0.807$_{\pm 0.012}$ & 0.086$_{\pm 0.002}$ & 0.901$_{\pm 0.002}$ & 0.849$_{\pm 0.002}$ & 0.099$_{\pm 0.001}$ & 0.739$_{\pm 0.003}$ & 0.179$_{\pm 0.002}$ & 0.965$_{\pm 0.001}$ & 0.020$_{\pm 0.001}$ & 0.693$_{\pm 0.002}$ & 0.216$_{\pm 0.002}$ & 0.707$_{\pm 0.003}$ & 0.204$_{\pm 0.002}$ & 0.599$_{\pm 0.002}$ & 0.299$_{\pm 0.002}$ \\ 
        ~ & ResNet50 & 0.799$_{\pm 0.010}$ & 0.060$_{\pm 0.002}$ & 0.929$_{\pm 0.003}$ & 0.690$_{\pm 0.001}$ & 0.193$_{\pm 0.001}$ & 0.560$_{\pm 0.010}$ & 0.293$_{\pm 0.005}$ & 0.987$_{\pm 0.001}$ & 0.006$_{\pm 0.000}$ & 0.431$_{\pm 0.006}$ & 0.389$_{\pm 0.004}$ & 0.542$_{\pm 0.002}$ & 0.302$_{\pm 0.003}$ & 0.370$_{\pm 0.005}$ & 0.451$_{\pm 0.005}$ \\ 
        ~ & ViT-B/16 & 0.806$_{\pm 0.012}$ & 0.070$_{\pm 0.001}$ & 0.922$_{\pm 0.002}$ & 0.594$_{\pm 0.004}$ & 0.270$_{\pm 0.003}$ & 0.554$_{\pm 0.005}$ & 0.292$_{\pm 0.003}$ & 0.572$_{\pm 0.003}$ & 0.265$_{\pm 0.001}$ & 1.000$_{\pm 0.000}$ & 0.000$_{\pm 0.000}$ & 0.643$_{\pm 0.004}$ & 0.226$_{\pm 0.003}$ & 0.597$_{\pm 0.005}$ & 0.274$_{\pm 0.004}$ \\ 
        ~ & ConvNext-T & 0.808$_{\pm 0.008}$ & 0.064$_{\pm 0.002}$ & 0.926$_{\pm 0.002}$ & 0.678$_{\pm 0.003}$ & 0.212$_{\pm 0.002}$ & 0.645$_{\pm 0.005}$ & 0.245$_{\pm 0.003}$ & 0.649$_{\pm 0.004}$ & 0.230$_{\pm 0.003}$ & 0.629$_{\pm 0.003}$ & 0.245$_{\pm 0.003}$ & 0.998$_{\pm 0.001}$ & 0.001$_{\pm 0.000}$ & 0.580$_{\pm 0.006}$ & 0.296$_{\pm 0.005}$ \\ \hline
        \multirow{4}{*}{mifgsm} & ResNet+DeCoWA & 0.474$_{\pm 0.013}$ & 0.207$_{\pm 0.008}$ & 0.870$_{\pm 0.006}$ & 0.715$_{\pm 0.002}$ & 0.220$_{\pm 0.002}$ & 0.694$_{\pm 0.002}$ & 0.226$_{\pm 0.002}$ & 0.736$_{\pm 0.002}$ & 0.194$_{\pm 0.002}$ & 0.552$_{\pm 0.003}$ & 0.343$_{\pm 0.003}$ & 0.641$_{\pm 0.003}$ & 0.274$_{\pm 0.003}$ & 0.483$_{\pm 0.004}$ & 0.418$_{\pm 0.003}$ \\ 
        ~ & ResNet50 & 0.551$_{\pm 0.014}$ & 0.198$_{\pm 0.009}$ & 0.885$_{\pm 0.005}$ & 0.279$_{\pm 0.003}$ & 0.574$_{\pm 0.003}$ & 0.253$_{\pm 0.002}$ & 0.596$_{\pm 0.002}$ & 0.252$_{\pm 0.001}$ & 0.553$_{\pm 0.003}$ & 0.182$_{\pm 0.001}$ & 0.647$_{\pm 0.002}$ & 0.222$_{\pm 0.003}$ & 0.625$_{\pm 0.004}$ & 0.178$_{\pm 0.002}$ & 0.684$_{\pm 0.002}$ \\ 
        ~ & ViT-B/16 & 0.521$_{\pm 0.013}$ & 0.207$_{\pm 0.008}$ & 0.860$_{\pm 0.006}$ & 0.252$_{\pm 0.004}$ & 0.594$_{\pm 0.003}$ & 0.282$_{\pm 0.004}$ & 0.559$_{\pm 0.004}$ & 0.291$_{\pm 0.002}$ & 0.545$_{\pm 0.001}$ & 0.221$_{\pm 0.002}$ & 0.577$_{\pm 0.003}$ & 0.246$_{\pm 0.002}$ & 0.596$_{\pm 0.002}$ & 0.248$_{\pm 0.002}$ & 0.621$_{\pm 0.002}$ \\ 
        ~ & ConvNext-T & 0.532$_{\pm 0.009}$ & 0.202$_{\pm 0.009}$ & 0.880$_{\pm 0.005}$ & 0.470$_{\pm 0.003}$ & 0.411$_{\pm 0.003}$ & 0.538$_{\pm 0.003}$ & 0.351$_{\pm 0.002}$ & 0.521$_{\pm 0.003}$ & 0.361$_{\pm 0.002}$ & 0.327$_{\pm 0.002}$ & 0.515$_{\pm 0.002}$ & 0.521$_{\pm 0.003}$ & 0.359$_{\pm 0.002}$ & 0.408$_{\pm 0.002}$ & 0.476$_{\pm 0.001}$ \\ \hline
        \multirow{4}{*}{venom} & ResNet+DeCoWA & 0.796$_{\pm 0.016}$ & 0.048$_{\pm 0.005}$ & 0.944$_{\pm 0.005}$ & 0.232$_{\pm 0.004}$ & 0.607$_{\pm 0.003}$ & 0.182$_{\pm 0.002}$ & 0.621$_{\pm 0.001}$ & 0.934$_{\pm 0.002}$ & 0.049$_{\pm 0.002}$ & 0.108$_{\pm 0.002}$ & 0.685$_{\pm 0.004}$ & 0.132$_{\pm 0.002}$ & 0.672$_{\pm 0.002}$ & 0.097$_{\pm 0.001}$ & 0.739$_{\pm 0.002}$ \\ 
        ~ & ResNet50 & 0.779$_{\pm 0.012}$ & 0.043$_{\pm 0.004}$ & 0.951$_{\pm 0.004}$ & 0.319$_{\pm 0.004}$ & 0.543$_{\pm 0.002}$ & 0.276$_{\pm 0.004}$ & 0.556$_{\pm 0.004}$ & 0.991$_{\pm 0.001}$ & 0.006$_{\pm 0.001}$ & 0.165$_{\pm 0.003}$ & 0.647$_{\pm 0.004}$ & 0.193$_{\pm 0.003}$ & 0.621$_{\pm 0.002}$ & 0.150$_{\pm 0.001}$ & 0.687$_{\pm 0.003}$ \\ 
        ~ & ViT-B/16 & 0.780$_{\pm 0.016}$ & 0.040$_{\pm 0.004}$ & 0.958$_{\pm 0.004}$ & 0.276$_{\pm 0.003}$ & 0.572$_{\pm 0.002}$ & 0.287$_{\pm 0.004}$ & 0.538$_{\pm 0.003}$ & 0.318$_{\pm 0.004}$ & 0.514$_{\pm 0.003}$ & 0.991$_{\pm 0.001}$ & 0.006$_{\pm 0.001}$ & 0.304$_{\pm 0.002}$ & 0.537$_{\pm 0.001}$ & 0.241$_{\pm 0.003}$ & 0.612$_{\pm 0.002}$ \\ 
        ~ & ConvNext-T & 0.785$_{\pm 0.010}$ & 0.037$_{\pm 0.003}$ & 0.961$_{\pm 0.002}$ & 0.364$_{\pm 0.004}$ & 0.499$_{\pm 0.004}$ & 0.360$_{\pm 0.003}$ & 0.485$_{\pm 0.002}$ & 0.411$_{\pm 0.003}$ & 0.443$_{\pm 0.002}$ & 0.282$_{\pm 0.003}$ & 0.542$_{\pm 0.002}$ & 0.990$_{\pm 0.001}$ & 0.009$_{\pm 0.001}$ & 0.291$_{\pm 0.003}$ & 0.579$_{\pm 0.003}$ \\ \hline
        \multirow{4}{*}{SD-NAE} & ResNet+DeCoWA & 0.782$_{\pm 0.009}$ & 0.331$_{\pm 0.030}$ & 0.568$_{\pm 0.034}$ & 0.282$_{\pm 0.002}$ & 0.618$_{\pm 0.001}$ & 0.298$_{\pm 0.002}$ & 0.603$_{\pm 0.003}$ & 0.340$_{\pm 0.003}$ & 0.573$_{\pm 0.003}$ & 0.227$_{\pm 0.003}$ & 0.667$_{\pm 0.002}$ & 0.240$_{\pm 0.002}$ & 0.676$_{\pm 0.002}$ & 0.194$_{\pm 0.003}$ & 0.725$_{\pm 0.003}$ \\ 
        ~ & ResNet50 & 0.771$_{\pm 0.014}$ & 0.308$_{\pm 0.043}$ & 0.599$_{\pm 0.050}$ & 0.321$_{\pm 0.002}$ & 0.577$_{\pm 0.002}$ & 0.321$_{\pm 0.003}$ & 0.582$_{\pm 0.003}$ & 0.576$_{\pm 0.003}$ & 0.350$_{\pm 0.001}$ & 0.236$_{\pm 0.003}$ & 0.645$_{\pm 0.001}$ & 0.256$_{\pm 0.004}$ & 0.653$_{\pm 0.003}$ & 0.201$_{\pm 0.002}$ & 0.708$_{\pm 0.002}$ \\ 
        ~ & ViT-B/16 & 0.787$_{\pm 0.010}$ & 0.300$_{\pm 0.023}$ & 0.609$_{\pm 0.030}$ & 0.294$_{\pm 0.002}$ & 0.583$_{\pm 0.004}$ & 0.323$_{\pm 0.004}$ & 0.573$_{\pm 0.003}$ & 0.318$_{\pm 0.004}$ & 0.567$_{\pm 0.003}$ & 0.539$_{\pm 0.003}$ & 0.377$_{\pm 0.003}$ & 0.300$_{\pm 0.003}$ & 0.604$_{\pm 0.002}$ & 0.241$_{\pm 0.002}$ & 0.665$_{\pm 0.003}$ \\ 
        ~ & ConvNext-T & 0.782$_{\pm 0.012}$ & 0.308$_{\pm 0.026}$ & 0.603$_{\pm 0.033}$ & 0.329$_{\pm 0.002}$ & 0.549$_{\pm 0.002}$ & 0.359$_{\pm 0.003}$ & 0.528$_{\pm 0.001}$ & 0.364$_{\pm 0.002}$ & 0.525$_{\pm 0.001}$ & 0.323$_{\pm 0.003}$ & 0.568$_{\pm 0.002}$ & 0.532$_{\pm 0.004}$ & 0.396$_{\pm 0.002}$ & 0.262$_{\pm 0.001}$ & 0.649$_{\pm 0.001}$ \\ \hline
        \multirow{4}{*}{advdiff} & ResNet+DeCoWA & 0.629$_{\pm 0.007}$ & 0.081$_{\pm 0.007}$ & 0.934$_{\pm 0.007}$ & 0.037$_{\pm 0.001}$ & 0.936$_{\pm 0.002}$ & 0.027$_{\pm 0.001}$ & 0.924$_{\pm 0.001}$ & 0.070$_{\pm 0.002}$ & 0.889$_{\pm 0.003}$ & 0.020$_{\pm 0.001}$ & 0.949$_{\pm 0.002}$ & 0.022$_{\pm 0.001}$ & 0.948$_{\pm 0.002}$ & 0.015$_{\pm 0.000}$ & 0.962$_{\pm 0.001}$ \\ 
        ~ & ResNet50 & 0.621$_{\pm 0.007}$ & 0.011$_{\pm 0.001}$ & 0.992$_{\pm 0.001}$ & 0.018$_{\pm 0.001}$ & 0.955$_{\pm 0.003}$ & 0.022$_{\pm 0.001}$ & 0.931$_{\pm 0.001}$ & 0.040$_{\pm 0.002}$ & 0.917$_{\pm 0.003}$ & 0.010$_{\pm 0.001}$ & 0.955$_{\pm 0.002}$ & 0.013$_{\pm 0.001}$ & 0.951$_{\pm 0.001}$ & 0.008$_{\pm 0.001}$ & 0.968$_{\pm 0.001}$ \\ 
        ~ & ViT-B/16 & 0.628$_{\pm 0.005}$ & 0.026$_{\pm 0.008}$ & 0.972$_{\pm 0.012}$ & 0.019$_{\pm 0.001}$ & 0.955$_{\pm 0.002}$ & 0.020$_{\pm 0.001}$ & 0.928$_{\pm 0.002}$ & 0.021$_{\pm 0.001}$ & 0.933$_{\pm 0.003}$ & 0.041$_{\pm 0.002}$ & 0.925$_{\pm 0.004}$ & 0.020$_{\pm 0.001}$ & 0.944$_{\pm 0.002}$ & 0.011$_{\pm 0.001}$ & 0.962$_{\pm 0.001}$ \\ 
        ~ & ConvNext-T & 0.627$_{\pm 0.006}$ & 0.017$_{\pm 0.004}$ & 0.985$_{\pm 0.007}$ & 0.028$_{\pm 0.002}$ & 0.947$_{\pm 0.002}$ & 0.024$_{\pm 0.001}$ & 0.926$_{\pm 0.002}$ & 0.029$_{\pm 0.002}$ & 0.926$_{\pm 0.002}$ & 0.030$_{\pm 0.001}$ & 0.938$_{\pm 0.002}$ & 0.060$_{\pm 0.001}$ & 0.907$_{\pm 0.001}$ & 0.022$_{\pm 0.001}$ & 0.953$_{\pm 0.001}$ \\ 
    \bottomrule
    \end{tabular}
    }
\end{sidewaystable}

\begin{sidewaystable}[thp]
    \centering
    \caption{Original ImageNet label transfer attack results.}
    \label{tbl:appendix_original}
    \resizebox{\linewidth}{!}{
    \begin{tabular}{*{17}{c}}
    \toprule
        ~ & ~ & \multicolumn{3}{c}{Metrics} & \multicolumn{2}{c}{ResNet152} & \multicolumn{2}{c}{InceptionV3} & \multicolumn{2}{c}{Swin-B} &  \multicolumn{2}{c}{ResNet50 } &  \multicolumn{2}{c}{ConvNext-T} &  \multicolumn{2}{c}{Vit-B/16} \\ \midrule
        method & Surrogate & Clip$_{IQA}$& LPIPS & MSSSIM & ASR & ACC & ASR & ACC & ASR & ACC & ASR & ACC & ASR & ACC & ASR & ACC  \\ \hline
        \multirow{4}{*}{ours$_{\epsilon=1.5}$} & ConvNext-T & 0.822$_{\pm 0.003}$ & 0.018$_{\pm 0.000}$ & 0.978$_{\pm 0.000}$ & 0.576$_{\pm 0.001}$ & 0.158$_{\pm 0.001}$ & 0.550$_{\pm 0.003}$ & 0.159$_{\pm 0.002}$ & 0.542$_{\pm 0.003}$ & 0.194$_{\pm 0.001}$ & 0.571$_{\pm 0.002}$ & 0.150$_{\pm 0.001}$ & 0.981$_{\pm 0.001}$ & 0.006$_{\pm 0.001}$ & 0.514$_{\pm 0.002}$ & 0.182$_{\pm 0.001}$ \\ 
        ~ & ResNet+DeCoWa & 0.821$_{\pm 0.003}$ & 0.022$_{\pm 0.000}$ & 0.973$_{\pm 0.000}$ & 0.654$_{\pm 0.001}$ & 0.133$_{\pm 0.001}$ & 0.567$_{\pm 0.002}$ & 0.168$_{\pm 0.001}$ & 0.374$_{\pm 0.002}$ & 0.300$_{\pm 0.001}$ & 0.940$_{\pm 0.002}$ & 0.019$_{\pm 0.001}$ & 0.484$_{\pm 0.001}$ & 0.204$_{\pm 0.001}$ & 0.459$_{\pm 0.003}$ & 0.228$_{\pm 0.002}$ \\ 
        ~ & ResNet50 & 0.821$_{\pm 0.004}$ & 0.016$_{\pm 0.000}$ & 0.979$_{\pm 0.000}$ & 0.559$_{\pm 0.002}$ & 0.157$_{\pm 0.001}$ & 0.465$_{\pm 0.002}$ & 0.193$_{\pm 0.001}$ & 0.275$_{\pm 0.000}$ & 0.328$_{\pm 0.001}$ & 0.997$_{\pm 0.000}$ & 0.001$_{\pm 0.000}$ & 0.398$_{\pm 0.002}$ & 0.226$_{\pm 0.001}$ & 0.311$_{\pm 0.002}$ & 0.271$_{\pm 0.001}$ \\ 
        ~ & Vit-B/16 & 0.820$_{\pm 0.005}$ & 0.019$_{\pm 0.000}$ & 0.977$_{\pm 0.001}$ & 0.473$_{\pm 0.004}$ & 0.188$_{\pm 0.002}$ & 0.470$_{\pm 0.004}$ & 0.188$_{\pm 0.002}$ & 0.478$_{\pm 0.001}$ & 0.212$_{\pm 0.001}$ & 0.470$_{\pm 0.005}$ & 0.182$_{\pm 0.002}$ & 0.543$_{\pm 0.004}$ & 0.155$_{\pm 0.002}$ & 0.976$_{\pm 0.001}$ & 0.005$_{\pm 0.000}$ \\ \hline
        \multirow{4}{*}{ours$_{\epsilon=2}$} & ConvNext-T & 0.819$_{\pm 0.002}$ & 0.027$_{\pm 0.000}$ & 0.968$_{\pm 0.000}$ & 0.674$_{\pm 0.001}$ & 0.121$_{\pm 0.001}$ & 0.641$_{\pm 0.003}$ & 0.127$_{\pm 0.002}$ & 0.630$_{\pm 0.002}$ & 0.158$_{\pm 0.001}$ & 0.660$_{\pm 0.001}$ & 0.118$_{\pm 0.001}$ & 0.990$_{\pm 0.000}$ & 0.003$_{\pm 0.000}$ & 0.619$_{\pm 0.002}$ & 0.142$_{\pm 0.001}$ \\ 
        ~ & ResNet+DeCoWa & 0.815$_{\pm 0.004}$ & 0.033$_{\pm 0.001}$ & 0.960$_{\pm 0.001}$ & 0.782$_{\pm 0.001}$ & 0.084$_{\pm 0.000}$ & 0.680$_{\pm 0.003}$ & 0.124$_{\pm 0.001}$ & 0.479$_{\pm 0.002}$ & 0.245$_{\pm 0.001}$ & 0.977$_{\pm 0.002}$ & 0.007$_{\pm 0.001}$ & 0.624$_{\pm 0.002}$ & 0.150$_{\pm 0.001}$ & 0.566$_{\pm 0.002}$ & 0.181$_{\pm 0.001}$ \\ 
        ~ & ResNet50 & 0.819$_{\pm 0.004}$ & 0.023$_{\pm 0.000}$ & 0.970$_{\pm 0.000}$ & 0.650$_{\pm 0.002}$ & 0.123$_{\pm 0.001}$ & 0.517$_{\pm 0.001}$ & 0.174$_{\pm 0.001}$ & 0.336$_{\pm 0.002}$ & 0.305$_{\pm 0.000}$ & 0.999$_{\pm 0.000}$ & 0.000$_{\pm 0.000}$ & 0.486$_{\pm 0.002}$ & 0.194$_{\pm 0.001}$ & 0.384$_{\pm 0.002}$ & 0.243$_{\pm 0.001}$ \\ 
        ~ & Vit-B/16 & 0.817$_{\pm 0.004}$ & 0.028$_{\pm 0.001}$ & 0.967$_{\pm 0.000}$ & 0.555$_{\pm 0.004}$ & 0.158$_{\pm 0.001}$ & 0.548$_{\pm 0.003}$ & 0.160$_{\pm 0.001}$ & 0.577$_{\pm 0.001}$ & 0.170$_{\pm 0.001}$ & 0.557$_{\pm 0.004}$ & 0.154$_{\pm 0.002}$ & 0.627$_{\pm 0.003}$ & 0.126$_{\pm 0.001}$ & 0.992$_{\pm 0.000}$ & 0.002$_{\pm 0.000}$ \\ \hline
        \multirow{4}{*}{ours$_{\epsilon=2.5}$} & ConvNext-T & 0.817$_{\pm 0.003}$ & 0.036$_{\pm 0.000}$ & 0.958$_{\pm 0.001}$ & 0.740$_{\pm 0.002}$ & 0.096$_{\pm 0.001}$ & 0.710$_{\pm 0.003}$ & 0.104$_{\pm 0.001}$ & 0.694$_{\pm 0.002}$ & 0.130$_{\pm 0.001}$ & 0.712$_{\pm 0.002}$ & 0.100$_{\pm 0.001}$ & 0.997$_{\pm 0.000}$ & 0.001$_{\pm 0.000}$ & 0.692$_{\pm 0.002}$ & 0.114$_{\pm 0.001}$ \\ 
        ~ & ResNet+DeCoWa & 0.810$_{\pm 0.004}$ & 0.044$_{\pm 0.001}$ & 0.947$_{\pm 0.001}$ & 0.870$_{\pm 0.001}$ & 0.050$_{\pm 0.001}$ & 0.762$_{\pm 0.001}$ & 0.091$_{\pm 0.001}$ & 0.562$_{\pm 0.002}$ & 0.204$_{\pm 0.001}$ & 0.991$_{\pm 0.001}$ & 0.003$_{\pm 0.000}$ & 0.701$_{\pm 0.003}$ & 0.120$_{\pm 0.001}$ & 0.664$_{\pm 0.002}$ & 0.139$_{\pm 0.001}$ \\ 
        ~ & ResNet50 & 0.815$_{\pm 0.006}$ & 0.031$_{\pm 0.000}$ & 0.961$_{\pm 0.000}$ & 0.724$_{\pm 0.002}$ & 0.098$_{\pm 0.001}$ & 0.607$_{\pm 0.003}$ & 0.142$_{\pm 0.001}$ & 0.399$_{\pm 0.002}$ & 0.272$_{\pm 0.001}$ & 1.000$_{\pm 0.000}$ & 0.000$_{\pm 0.000}$ & 0.555$_{\pm 0.001}$ & 0.168$_{\pm 0.001}$ & 0.434$_{\pm 0.001}$ & 0.225$_{\pm 0.001}$ \\ 
        ~ & Vit-B/16 & 0.815$_{\pm 0.004}$ & 0.038$_{\pm 0.000}$ & 0.956$_{\pm 0.000}$ & 0.619$_{\pm 0.004}$ & 0.136$_{\pm 0.002}$ & 0.599$_{\pm 0.003}$ & 0.142$_{\pm 0.001}$ & 0.644$_{\pm 0.001}$ & 0.143$_{\pm 0.001}$ & 0.627$_{\pm 0.004}$ & 0.130$_{\pm 0.002}$ & 0.697$_{\pm 0.001}$ & 0.102$_{\pm 0.001}$ & 0.996$_{\pm 0.000}$ & 0.001$_{\pm 0.000}$ \\ \hline
        \multirow{4}{*}{ours$_{\epsilon=3}$} & ConvNext-T & 0.814$_{\pm 0.003}$ & 0.045$_{\pm 0.000}$ & 0.947$_{\pm 0.001}$ & 0.790$_{\pm 0.001}$ & 0.078$_{\pm 0.001}$ & 0.767$_{\pm 0.002}$ & 0.082$_{\pm 0.001}$ & 0.725$_{\pm 0.001}$ & 0.116$_{\pm 0.001}$ & 0.772$_{\pm 0.001}$ & 0.079$_{\pm 0.001}$ & 0.998$_{\pm 0.000}$ & 0.001$_{\pm 0.000}$ & 0.745$_{\pm 0.002}$ & 0.094$_{\pm 0.001}$ \\ 
        ~ & ResNet+DeCoWa & 0.807$_{\pm 0.004}$ & 0.056$_{\pm 0.000}$ & 0.934$_{\pm 0.001}$ & 0.912$_{\pm 0.001}$ & 0.034$_{\pm 0.001}$ & 0.841$_{\pm 0.001}$ & 0.062$_{\pm 0.001}$ & 0.636$_{\pm 0.002}$ & 0.173$_{\pm 0.001}$ & 0.999$_{\pm 0.000}$ & 0.000$_{\pm 0.000}$ & 0.787$_{\pm 0.001}$ & 0.086$_{\pm 0.001}$ & 0.740$_{\pm 0.003}$ & 0.108$_{\pm 0.001}$ \\ 
        ~ & ResNet50 & 0.812$_{\pm 0.005}$ & 0.039$_{\pm 0.001}$ & 0.952$_{\pm 0.001}$ & 0.775$_{\pm 0.003}$ & 0.079$_{\pm 0.001}$ & 0.649$_{\pm 0.002}$ & 0.126$_{\pm 0.001}$ & 0.453$_{\pm 0.002}$ & 0.250$_{\pm 0.001}$ & 1.000$_{\pm 0.000}$ & 0.000$_{\pm 0.000}$ & 0.608$_{\pm 0.001}$ & 0.147$_{\pm 0.001}$ & 0.491$_{\pm 0.002}$ & 0.201$_{\pm 0.001}$ \\ 
        ~ & Vit-B/16 & 0.812$_{\pm 0.003}$ & 0.048$_{\pm 0.001}$ & 0.945$_{\pm 0.001}$ & 0.687$_{\pm 0.003}$ & 0.112$_{\pm 0.001}$ & 0.654$_{\pm 0.002}$ & 0.123$_{\pm 0.001}$ & 0.701$_{\pm 0.002}$ & 0.121$_{\pm 0.001}$ & 0.682$_{\pm 0.001}$ & 0.110$_{\pm 0.001}$ & 0.745$_{\pm 0.002}$ & 0.086$_{\pm 0.001}$ & 1.000$_{\pm 0.000}$ & 0.000$_{\pm 0.000}$ \\ \hline
        \multirow{4}{*}{MI-FGSM} & ConvNext-T & 0.543$_{\pm 0.004}$ & 0.211$_{\pm 0.001}$ & 0.877$_{\pm 0.000}$ & 0.362$_{\pm 0.002}$ & 0.373$_{\pm 0.001}$ & 0.399$_{\pm 0.002}$ & 0.334$_{\pm 0.001}$ & 0.281$_{\pm 0.002}$ & 0.431$_{\pm 0.001}$ & 0.408$_{\pm 0.001}$ & 0.338$_{\pm 0.001}$ & 0.999$_{\pm 0.000}$ & 0.001$_{\pm 0.000}$ & 0.243$_{\pm 0.001}$ & 0.439$_{\pm 0.001}$ \\ 
        ~ & ResNet+DeCoWa & 0.548$_{\pm 0.003}$ & 0.209$_{\pm 0.002}$ & 0.880$_{\pm 0.001}$ & 0.430$_{\pm 0.003}$ & 0.334$_{\pm 0.001}$ & 0.391$_{\pm 0.002}$ & 0.338$_{\pm 0.002}$ & 0.201$_{\pm 0.002}$ & 0.474$_{\pm 0.001}$ & 0.999$_{\pm 0.000}$ & 0.001$_{\pm 0.000}$ & 0.273$_{\pm 0.002}$ & 0.397$_{\pm 0.001}$ & 0.199$_{\pm 0.001}$ & 0.456$_{\pm 0.001}$ \\ 
        ~ & ResNet50 & 0.535$_{\pm 0.003}$ & 0.208$_{\pm 0.002}$ & 0.869$_{\pm 0.001}$ & 0.697$_{\pm 0.001}$ & 0.180$_{\pm 0.000}$ & 0.660$_{\pm 0.001}$ & 0.191$_{\pm 0.001}$ & 0.338$_{\pm 0.001}$ & 0.393$_{\pm 0.001}$ & 0.967$_{\pm 0.001}$ & 0.019$_{\pm 0.000}$ & 0.498$_{\pm 0.001}$ & 0.276$_{\pm 0.001}$ & 0.367$_{\pm 0.002}$ & 0.359$_{\pm 0.001}$ \\ 
        ~ & Vit-B/16 & 0.539$_{\pm 0.003}$ & 0.205$_{\pm 0.001}$ & 0.856$_{\pm 0.001}$ & 0.221$_{\pm 0.002}$ & 0.454$_{\pm 0.001}$ & 0.251$_{\pm 0.002}$ & 0.411$_{\pm 0.001}$ & 0.192$_{\pm 0.002}$ & 0.492$_{\pm 0.001}$ & 0.249$_{\pm 0.001}$ & 0.420$_{\pm 0.001}$ & 0.211$_{\pm 0.002}$ & 0.439$_{\pm 0.002}$ & 0.255$_{\pm 0.001}$ & 0.403$_{\pm 0.001}$ \\ \hline
        \multirow{4}{*}{VENOM} & ConvNext-T & 0.796$_{\pm 0.002}$ & 0.020$_{\pm 0.001}$ & 0.978$_{\pm 0.001}$ & 0.350$_{\pm 0.002}$ & 0.338$_{\pm 0.002}$ & 0.383$_{\pm 0.001}$ & 0.303$_{\pm 0.001}$ & 0.278$_{\pm 0.001}$ & 0.405$_{\pm 0.001}$ & 0.387$_{\pm 0.002}$ & 0.301$_{\pm 0.001}$ & 0.982$_{\pm 0.000}$ & 0.010$_{\pm 0.001}$ & 0.294$_{\pm 0.002}$ & 0.369$_{\pm 0.001}$ \\ 
        ~ & ResNet+DeCoWa & 0.805$_{\pm 0.002}$ & 0.027$_{\pm 0.001}$ & 0.968$_{\pm 0.002}$ & 0.257$_{\pm 0.001}$ & 0.388$_{\pm 0.002}$ & 0.247$_{\pm 0.002}$ & 0.376$_{\pm 0.001}$ & 0.136$_{\pm 0.001}$ & 0.490$_{\pm 0.002}$ & 0.917$_{\pm 0.001}$ & 0.040$_{\pm 0.001}$ & 0.186$_{\pm 0.001}$ & 0.417$_{\pm 0.001}$ & 0.144$_{\pm 0.001}$ & 0.453$_{\pm 0.001}$ \\ 
        ~ & ResNet50 & 0.795$_{\pm 0.003}$ & 0.023$_{\pm 0.000}$ & 0.972$_{\pm 0.001}$ & 0.288$_{\pm 0.002}$ & 0.372$_{\pm 0.002}$ & 0.288$_{\pm 0.001}$ & 0.355$_{\pm 0.001}$ & 0.147$_{\pm 0.001}$ & 0.484$_{\pm 0.001}$ & 0.991$_{\pm 0.001}$ & 0.005$_{\pm 0.000}$ & 0.201$_{\pm 0.001}$ & 0.406$_{\pm 0.001}$ & 0.151$_{\pm 0.001}$ & 0.448$_{\pm 0.001}$ \\ 
        ~ & Vit-B/16 & 0.796$_{\pm 0.002}$ & 0.021$_{\pm 0.001}$ & 0.977$_{\pm 0.001}$ & 0.274$_{\pm 0.001}$ & 0.374$_{\pm 0.001}$ & 0.288$_{\pm 0.002}$ & 0.348$_{\pm 0.001}$ & 0.267$_{\pm 0.001}$ & 0.411$_{\pm 0.002}$ & 0.310$_{\pm 0.001}$ & 0.341$_{\pm 0.000}$ & 0.305$_{\pm 0.002}$ & 0.350$_{\pm 0.001}$ & 0.990$_{\pm 0.001}$ & 0.005$_{\pm 0.000}$ \\ \hline
        \multirow{4}{*}{SD-NAE} & ConvNext-T & 0.848$_{\pm 0.009}$ & 0.458$_{\pm 0.022}$ & 0.432$_{\pm 0.018}$ & 0.675$_{\pm 0.001}$ & 0.221$_{\pm 0.001}$ & 0.690$_{\pm 0.002}$ & 0.209$_{\pm 0.001}$ & 0.626$_{\pm 0.002}$ & 0.265$_{\pm 0.001}$ & 0.689$_{\pm 0.002}$ & 0.211$_{\pm 0.001}$ & 0.710$_{\pm 0.002}$ & 0.195$_{\pm 0.001}$ & 0.649$_{\pm 0.002}$ & 0.245$_{\pm 0.001}$ \\ 
        ~ & ResNet+DeCoWa & 0.845$_{\pm 0.013}$ & 0.470$_{\pm 0.018}$ & 0.421$_{\pm 0.016}$ & 0.636$_{\pm 0.001}$ & 0.250$_{\pm 0.001}$ & 0.646$_{\pm 0.001}$ & 0.239$_{\pm 0.001}$ & 0.549$_{\pm 0.001}$ & 0.322$_{\pm 0.001}$ & 0.691$_{\pm 0.002}$ & 0.208$_{\pm 0.002}$ & 0.592$_{\pm 0.001}$ & 0.279$_{\pm 0.001}$ & 0.587$_{\pm 0.001}$ & 0.294$_{\pm 0.001}$ \\ 
        ~ & ResNet50 & 0.841$_{\pm 0.011}$ & 0.457$_{\pm 0.020}$ & 0.433$_{\pm 0.017}$ & 0.490$_{\pm 0.002}$ & 0.356$_{\pm 0.001}$ & 0.502$_{\pm 0.001}$ & 0.347$_{\pm 0.001}$ & 0.420$_{\pm 0.001}$ & 0.423$_{\pm 0.001}$ & 0.518$_{\pm 0.001}$ & 0.330$_{\pm 0.001}$ & 0.458$_{\pm 0.002}$ & 0.377$_{\pm 0.001}$ & 0.454$_{\pm 0.001}$ & 0.392$_{\pm 0.001}$ \\ 
        ~ & Vit-B/16 & 0.844$_{\pm 0.009}$ & 0.459$_{\pm 0.016}$ & 0.441$_{\pm 0.014}$ & 0.530$_{\pm 0.002}$ & 0.327$_{\pm 0.001}$ & 0.539$_{\pm 0.001}$ & 0.315$_{\pm 0.001}$ & 0.467$_{\pm 0.001}$ & 0.383$_{\pm 0.001}$ & 0.556$_{\pm 0.002}$ & 0.302$_{\pm 0.001}$ & 0.505$_{\pm 0.001}$ & 0.340$_{\pm 0.001}$ & 0.501$_{\pm 0.001}$ & 0.354$_{\pm 0.001}$ \\ \hline
        \multirow{4}{*}{AdvDiff} & ConvNext-T & 0.636$_{\pm 0.006}$ & 0.471$_{\pm 0.025}$ & 0.312$_{\pm 0.011}$ & 0.440$_{\pm 0.022}$ & 0.541$_{\pm 0.022}$ & 0.448$_{\pm 0.022}$ & 0.523$_{\pm 0.022}$ & 0.433$_{\pm 0.023}$ & 0.533$_{\pm 0.022}$ & 0.446$_{\pm 0.022}$ & 0.527$_{\pm 0.022}$ & 0.465$_{\pm 0.022}$ & 0.503$_{\pm 0.020}$ & 0.430$_{\pm 0.023}$ & 0.545$_{\pm 0.022}$ \\ 
        ~ & ResNet+DeCoWa & 0.597$_{\pm 0.004}$ & 0.293$_{\pm 0.003}$ & 0.761$_{\pm 0.003}$ & 0.289$_{\pm 0.001}$ & 0.672$_{\pm 0.001}$ & 0.247$_{\pm 0.001}$ & 0.698$_{\pm 0.001}$ & 0.180$_{\pm 0.001}$ & 0.765$_{\pm 0.001}$ & 0.487$_{\pm 0.001}$ & 0.479$_{\pm 0.001}$ & 0.222$_{\pm 0.001}$ & 0.726$_{\pm 0.001}$ & 0.225$_{\pm 0.001}$ & 0.731$_{\pm 0.001}$ \\ 
        ~ & ResNet50 & 0.634$_{\pm 0.004}$ & 0.046$_{\pm 0.003}$ & 0.939$_{\pm 0.001}$ & 0.046$_{\pm 0.002}$ & 0.887$_{\pm 0.001}$ & 0.050$_{\pm 0.001}$ & 0.863$_{\pm 0.001}$ & 0.035$_{\pm 0.001}$ & 0.883$_{\pm 0.002}$ & 0.091$_{\pm 0.001}$ & 0.831$_{\pm 0.001}$ & 0.041$_{\pm 0.002}$ & 0.881$_{\pm 0.001}$ & 0.035$_{\pm 0.001}$ & 0.895$_{\pm 0.001}$ \\ 
        ~ & Vit-B/16 & 0.638$_{\pm 0.004}$ & 0.390$_{\pm 0.025}$ & 0.430$_{\pm 0.010}$ & 0.295$_{\pm 0.021}$ & 0.673$_{\pm 0.021}$ & 0.304$_{\pm 0.021}$ & 0.651$_{\pm 0.020}$ & 0.295$_{\pm 0.022}$ & 0.659$_{\pm 0.021}$ & 0.300$_{\pm 0.021}$ & 0.660$_{\pm 0.021}$ & 0.298$_{\pm 0.022}$ & 0.658$_{\pm 0.021}$ & 0.327$_{\pm 0.021}$ & 0.636$_{\pm 0.020}$ \\ 
    \bottomrule
    \end{tabular}
    }
\end{sidewaystable}




\end{document}